\PassOptionsToPackage{amsthm}{newtxmath}
\documentclass[11pt]{alta}

\usepackage{mathtools,amsthm}
\usepackage{needspace,enumitem}
\usepackage{thmtools,thm-restate}
\usepackage{natbib}
\usepackage{wrapfig}
\usepackage{subcaption}
\usepackage{float,afterpage}
\newcommand{\ArxivFloatFigures}{}
\usepackage{array,longtable}
\usepackage{titletoc}
\usepackage{xurl}
\hypersetup{hypertexnames=false}
\titlecontents{appendixsection}
  [1.5em]
  {%
   \if D\thecontentslabel\clearpage\fi
   \addvspace{1em}\normalsize\fontsize{10}{15}\selectfont\bfseries\contentsmargin{2.5em}}
  {\contentslabel{1.5em}}
  {}
  {\hfill\contentspage}
\titlecontents{appendixsubsection}
  [4em]
  {\addvspace{0.3em}\normalsize\fontsize{10}{15}\selectfont\contentsmargin{2.5em}}
  {\contentslabel{2.5em}}
  {}
  {\titlerule*[0.6em]{.}\contentspage}

\newcommand{\tool}{RoPE Profiler}

\newtheorem{lemma}{Lemma}[section]
\newtheorem{proposition}{Proposition}
\newtheorem{corollary}{Corollary}[section]

\newcommand{\Var}{\operatorname{Var}}
\newcommand{\Cov}{\operatorname{Cov}}
\newcommand{\Rea}{\operatorname{Re}}
\newcommand{\dist}{\operatorname{dist}}
\newcommand{\op}{\mathrm{op}}
\newcommand{\cH}{\mathcal{H}}
\newcommand{\cN}{\mathcal{N}}
\newcommand{\cF}{\mathcal{F}}
\newcommand{\norm}[1]{\left\lVert #1 \right\rVert}
\newcommand{\abs}[1]{\left\lvert #1 \right\rvert}
\newcommand{\avg}[1]{\left\langle #1\right\rangle_M}
\newcommand{\eps}{\varepsilon}
\newcommand{\Eqref}[1]{\textup{Eq.}~\eqref{#1}}

\definecolor{takeawaybackground}{RGB}{239,245,249}
\definecolor{failureaccent}{RGB}{204,122,59}

\newif\ifshowcomments
\showcommentstrue

\definecolor{yuyangcolor}{RGB}{46,88,180}
\definecolor{yufengcolor}{RGB}{181,68,106}
\definecolor{haocolor}{RGB}{111,61,121}
\definecolor{gptcolor}{RGB}{13,153,13}

\showcommentsfalse

\title{RoPE at the End of Its Rope? Theory, Diagnosis, and Mitigation of Long-Context Failures}
\author{Yuyang Wu\textsuperscript{1}, Yufeng Du\textsuperscript{2}, Hao Peng\textsuperscript{2}}
\affiliation{\textsuperscript{1}Independent Researcher\\
\textsuperscript{2}University of Illinois at Urbana-Champaign, USA}
\runningtitle{RoPE at the End of Its Rope?}
\newcommand{\ArxivCode}{%
  \par\noindent\textbf{Code:}\enspace
  \url{https://github.com/acetocarmine11/rope-profiler}\par
}
\newcommand{\ArxivCorrespondence}{%
  \par\vspace{1em}\noindent
  {\normalfont\fontsize{8}{10}\selectfont
   \textbf{Correspondence:}\enspace
   \href{mailto:wuyuyang@alumni.pku.edu.cn}{\texttt{wuyuyang@alumni.pku.edu.cn}},
   \href{mailto:haopeng@illinois.edu}{\texttt{haopeng@illinois.edu}}\par}%
}
\makeatletter
\AddToHook{cmd/maketitle/before}{\g@addto@macro\alta@abstract{\ArxivCode\ArxivCorrespondence}}
\makeatother
\hypersetup{
  pdftitle={RoPE at the End of Its Rope? Theory, Diagnosis, and Mitigation of Long-Context Failures},
  pdfauthor={Yuyang Wu, Yufeng Du, Hao Peng}
}

\begin{document}
\begin{abstract}

Long-context failures of RoPE-based language models can arise from RoPE's intrinsic tradeoff between maintaining stable token preferences and distinguishing nearby positions. Determining which weakness to address, and how, requires a more precise characterization of RoPE's behavior in trained models across context lengths. We address a key limitation of prior theory by allowing unequal query–key scales across RoPE frequencies, which aligns well with practical empirical observations. Our theory makes both vulnerabilities measurable for individual heads and inputs, and quantifies how high-frequency components support positional sensitivity while potentially disrupting semantic stability. We also derive \textbf{a theoretical context-length bound} beyond which, under specified conditions, a fixed attention-score comparison cannot jointly avoid \emph{semantic reversal} and \emph{positional insensitivity}.
Guided by our fresh theoretical insights, we introduce \textbf{\tool{}}, a \emph{lightweight, plug-and-play diagnostic toolkit} that augments existing evaluations with zero additional forward passes by reusing cached query and key activations. Reusing activations collected during evaluation, the toolkit incurs little overhead.
It supplements standard benchmark scores with two diagnostic scores that reveal semantic and positional weaknesses and help users prioritize which aspect to address. Crucially, our evaluations across 49 long-context task settings reveal a distinct pattern where reasoning tasks predominantly suffer from semantic reversal, whereas retrieval tasks are primarily vulnerable to positional insensitivity. Guided by our theory and diagnostic profiles, targeted high-frequency rescaling achieves immediate gains without additional training, improving task accuracy by up to 20 percentage points on Qwen3-8B and 25 percentage points on Llama-3.1-8B-Instruct.
\end{abstract}

\maketitle

\afterpage{\clearpage%
\begin{figure}[H]
\centering
\captionsetup[subfigure]{font=normalsize,labelfont=bf,textfont=bf,skip=\abovecaptionskip}
\begin{subfigure}[t]{0.38636187\linewidth}
\centering
\includegraphics[width=\linewidth,trim=578bp 19bp 7.2bp 21.5bp,clip]{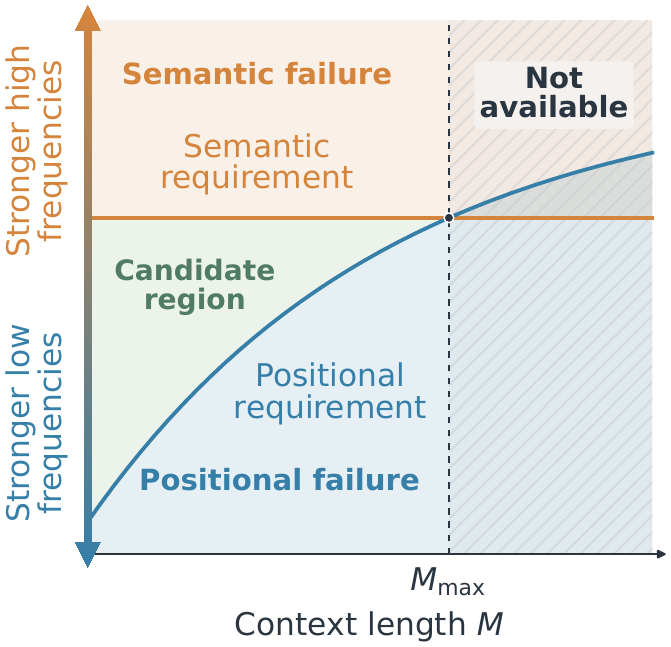}
\caption{Semantic-positional tradeoff}
\label{fig:rope-tradeoff-panel}
\end{subfigure}\hfill
\begin{subfigure}[t]{0.59816194\linewidth}
\centering
\includegraphics[width=\linewidth,trim=4.25bp 4.25bp 351.45bp 3.25bp,clip]{figures/rope_frequency_overview.pdf}
\caption{Rotary frequency dynamics}
\label{fig:rope-frequency-dynamics-panel}
\end{subfigure}
\caption{\textbf{The fundamental semantic-positional tradeoff and rotary frequency dynamics.}
\subref*{fig:rope-tradeoff-panel} The candidate region for semantic stability and adjacent positional sensitivity shrinks with context length. The orange and blue curves represent the semantic upper limit and positional lower limit on the high-frequency norm share. The green region contains candidate values satisfying both requirements; beyond $M_{\max}$, no such value remains within the calibrated family.
\subref*{fig:rope-frequency-dynamics-panel} RoPE rotates coordinate pairs of query and key vectors by frequency-dependent angles across relative token distances.
Rapidly rotating components, when aggregated across channels, form unpredictable Gaussian noise that destabilizes token preferences.
Slowly rotating components remain predictable but produce marginal score changes that fail to resolve adjacent positions.
Consequently, the dominance of different frequency bands triggers distinct failure modes.
Precise formulations are given in \S\ref{subsec:reversal-cutoff} and Appendix~\ref{app:reversal-cutoff-setup}.}
\label{fig:rope-frequency-overview}
\end{figure}
}

\section{Introduction}

Long-context large language models (LLMs) open new possibilities for tackling complex,
long-horizon problems, such as agentic software development and scientific
discovery~\citep{anthropic2025harnesses,zhao2026scienceflow}.
Rotary position embedding (RoPE; \citealp{Su_2024}) and its variants are widely used in these
models~\citep{llama3,qwen3,gemma_2025,qwen2.5,liu2026ministral3} and have consequently become a prominent target
for improving long-context
performance~\citep{chen2023extendingcontextwindowlarge,ICLR2024_874a4d89,ding2024longrope,ICLR2025_e6d58fc6,chen-etal-2025-hope,zhang2024mspoe,wang2026adarope,bloc97ntk2023,an2025string,yang2025rnope,xiong2025dope,lu2025dpe}.
Recent theoretical and empirical studies highlight a tension between
preserving content preferences and distinguishing token positions as context length
grows~\citep{ICLR2025_e6d58fc6,du2026ropedistinguishespositionstokens,ICLR2026_68f1a693,liu2026rotarypositionalembeddingsphase,NEURIPS2024_9f12dd32,jerad2026expressivity}.
Practical progress based on these findings requires a more precise
characterization of RoPE's tradeoff under assumptions that better reflect
learned representations, together with diagnostics that provide actionable
insights for targeted intervention.
We advance both the theoretical and empirical fronts.

Our theory quantifies two fundamental failure modes of RoPE-based single-head attention in a single layer and establishes their quantitative tradeoff across expanding context scales (Figure~\ref{fig:rope-tradeoff-panel}).
As illustrated in Figure~\ref{fig:rope-frequency-dynamics-panel}, RoPE rotates coordinate pairs of query and key activations at geometrically spaced angular frequencies across relative token distances.
These frequency channels, which the attention mechanism aggregates, have different functions: 
 \textcolor[HTML]{CC7A3B}{rapidly rotating components} produce fluctuations whose mean is close to zero and whose variance we can bound analytically with a Gaussian approximation \citep{du2026ropedistinguishespositionstokens,salem1947lacunary,aistleitner2023lacunary},
 whereas \textcolor[HTML]{3B6C8E}{slowly rotating components} are insensitive to adjacent positions and contribute little change to the attention score  \citep{liu2024scaling,hong2024token}.
Consequently, these different functionalities triggers two fundamental failures: \emph{semantic reversal} and \emph{positional insensitivity} (\S\ref{sec:failure-modes}).
In semantic reversal, changing position alone reverses a query's preference between keys, potentially redirecting attention toward distractors \citep{ICLR2026_b619cd6d,ICLR2025_e6d58fc6}.
In positional insensitivity, 
a one-token shift changes the attention score too little to distinguish adjacent positions,
which can hurt tasks that require retrieving elements immediately before or after a target, as shown in frequency-scaling based length-extension techniques \citep{ICLR2024_874a4d89,bloc97ntk2023,wu2026datashapesropefrequency}.
As context length grows, Figure~\ref{fig:rope-tradeoff-panel} illustrates how preserving adjacent positional sensitivity imposes a higher minimum high-frequency norm share, raising the calibrated lower bound on reversal risk and shrinking the candidate region.
Under the stated calibration and parameter conditions, this tradeoff yields a context-length bound $M_{\max}$ beyond which a given attention-score margin can no longer avoid both failures within the calibrated windows (\S\ref{sec:tradeoff}).
Crucially, our finite-window bounds accommodate arbitrary fixed query and key vectors extracted from trained models, including unequal coefficient magnitudes across channels.
By avoiding restrictive assumptions on token distributions or coordinate regularity across channels (such as in \cite{NEURIPS2024_9f12dd32,ICLR2025_e6d58fc6,du2026ropedistinguishespositionstokens}; see also \S\ref{app:related-representation-assumptions}), our analysis establishes a realistic theoretical foundation for this spectral ceiling.

Our theory reveals constraints on mitigating both failures simultaneously. However, we also find that these constraints imply a principled basis for identifying which weakness to prioritize and how to address it for a given task.
Building on our theoretical insights, we introduce \textbf{\tool{}}, a lightweight diagnostic tool that seamlessly integrates into existing benchmarks (\S\ref{subsec:current-tool-definition}).
\tool{} reuses query and key activations cached during evaluation and incurs little additional computational overhead.
\tool{} turns the two theoretical failure criteria into practical diagnostics, supplementing standard benchmark scores with two metrics: the \textit{Semantic Score} measures the stability of key preferences across positions, while the \textit{Positional Score} measures sensitivity across adjacent positions.
Our theory provides a quantitative basis for two training-free interventions: rescaling high-frequency components to adjust their relative strength
and freezing their rotations to suppress position-induced fluctuations \citep{qiao2025rethinking,ICLR2025_e6d58fc6,chen-etal-2025-hope}.
Across 49 task configurations from nine benchmark sources
(described in Appendix~\ref{app:diagnostic-task-set}), the evaluated models show broadly similar rankings of their relative semantic and positional weaknesses (\S\ref{subsec:current-task-preferences}).
Guided by these diagnostic profiles, our training-free interventions yield best observed accuracy gains of up to 20 percentage points on Qwen3-8B and 25 percentage points on Llama-3.1-8B-Instruct, demonstrating that spectral diagnosis provides effective, targeted guidance for long-context optimization (\S\ref{subsec:current-intervention}).

Although our theory aligns with \citet{du2026ropedistinguishespositionstokens}'s pessimistic assessment of RoPE's intrinsic long-context limitations, our precise spectral characterization reveals concrete opportunities for targeted improvements in practice. 
Our findings demonstrate that substantial optimization space remains simply by adjusting frequency allocations across individual attention heads.
We believe further progress lies in incorporating \tool{}'s diagnostics into training and long-context adaptation.
Beyond uniform frequency scaling, designing attention mechanisms that assign complementary roles to different heads can better balance content retrieval and positional reasoning \citep{shi2026implicit,wang2026adarope, xiong2025dope, xiao2025duo, tang2025razor, ICLR2025_9b77f073}.
In the longer term, fully overcoming these intrinsic constraints will require rethinking attention architectures and how positions are encoded in foundation models~\citep{kimiteam2026kimik3, alibi, kazemnejad2023impact, yang2025rnope, hua2025fope}.

\section{Two Failure Modes of RoPE: Semantic Reversal and Positional Insensitivity}
\label{sec:failure-modes}

A good positional embedding (PE) in transformers should achieve two objectives:
\textbf{1) retain the attention's ability to distinguish different tokens:} for a given query, the attention mechanism distinguishes two keys by having different
scores, and PEs should not get in the way of this feature;
\textbf{2) distinguish different positions:} varying the score for the same key across locations informs the model of token positions~\citep{Su_2024,vaswani2017attention,shaw2018self,press2022alibi}.
Unmasked attention with position-independent inputs is permutation-equivariant~\citep{lee2019set}, although causal language models can learn positional information without explicit positional encodings~\citep{haviv2022positional}. RoPE encodes relative distance by rotating query and key coordinate pairs, making their attention score position-dependent.

\subsection{Background}
\label{subsec:failure-background}
\paragraph{RoPE as a Sum of Rotated Complex Numbers.} 
Let $q$ and $k$ denote the query and key vectors from an attention head. Their scaled dot product forms the pre-softmax attention score, which converts into weights for combining value vectors. RoPE~\citep{Su_2024} injects position information by rotating coordinate pairs of queries and keys. Let $p_q$ and $p_k$ represent the token positions, and let $m=p_q-p_k$ denote their relative distance. For a head with $h$ rotary coordinate pairs and base $B$, coordinate pair $n$ rotates by angle $p\omega_n$ at position $p$, where $\omega_n=B^{-n/h}$ for $n=0,\ldots,h-1$. While the base frequency $\omega_n$ decays smoothly with coordinate index $n$, individual components exhibit starkly disparate behaviors across long contexts.\footnote{For example, consider a typical setup with rotary dimension $2h=128$, base $B=500{,}000$ (as in Llama 3 \cite{llama3}), and context length $L=32{,}768$. Over this window, the fastest component with $\omega_0=1$ completes over $5{,}215$ full cycles, while the slowest component with $\omega_{h-1} \approx 2 \times 10^{-6}$ completes approximately $0.01$ cycles, covering merely a tiny fraction of a single rotation.}

Following the complex-valued formulation of RoPE in \citet{zhu-etal-2024-coca}, let $Q_n=q_{n,1}+i q_{n,2}$ and $K_n=k_{n,1}+i k_{n,2}$ denote the complex representations of each rotary pair, with $i^2=-1$. With attention scale $c_{\mathrm{att}}$, the rotary attention score simplifies to
\begin{equation}
S_{q,k}(m)=\Rea\sum_{n=0}^{h-1}z_ne^{im\omega_n},
\qquad z_n=c_{\mathrm{att}}Q_n\overline{K_n},
\label{eq:background-rope-score}
\end{equation}
where $\Rea$ takes the real part and the overline denotes complex conjugation. Here, token content determines the static amplitude coefficients $z_n$, while relative distance $m$ modulates the dynamic rotation phases $m\omega_n$. We write $S(m)$ when the query and key are clear from the context.

\paragraph{The Janus Face of RoPE: Sustaining Content and Localizing Tokens.}
By applying frequency-dependent rotations across relative distances, RoPE enables attention to produce distinct scores according to both token content vectors and spatial positions. Extensive prior studies seek to demonstrate how RoPE successfully realizes these two core functions. These efforts span early decoupled representation designs, approximate additive decompositions and frequency-wise functional division in learned RoPE representations, examinations of locality and symmetry under role shuffling, and symbolic-versus-positional permutations that probe expressive capacity~\citep{kitaev-klein-2018-constituency,ke2021rethinking,chen-etal-2023-locality,han-ji-2025-computation,ICLR2025_e6d58fc6,ICLR2026_68f1a693,ICLR2026_0aee38a6,ICLR2026_993dbaec,chen2024ropeweights,gu2026deconstructing,hong2024token,jin2025massive}. In essence, these investigations explain \textit{how RoPE succeeds}. However, analyzing its functional capability provides only an incomplete picture. The more critical question is how this dual responsibility becomes unbalanced as the context scale continues to expand.

\ifdefined\ArxivFloatFigures\else
\ifdefined\ArxivFloatFigures
\begin{figure}[tbp]
\else
\begin{figure}[H]
\fi
\centering
\captionsetup[subfigure]{font=normalsize,labelfont=bf,textfont=bf,skip=\abovecaptionskip}
\begin{subfigure}[t]{0.59\linewidth}
\centering
\includegraphics[width=\linewidth]{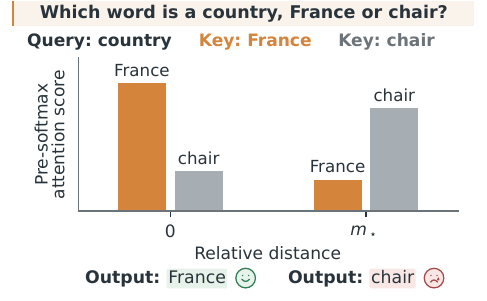}
\caption{Semantic reversal}
\label{fig:semantic-reversal-panel}
\end{subfigure}\hfill
\begin{subfigure}[t]{0.38\linewidth}
\centering
\includegraphics[width=\linewidth]{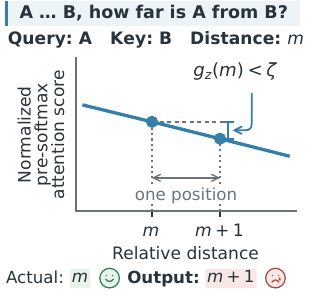}
\caption{Positional insensitivity}
\label{fig:positional-insensitivity-panel}
\end{subfigure}
\caption{\textbf{Illustrative task consequences of two RoPE failure modes.}
\subref*{fig:semantic-reversal-panel} For query \texttt{country}, the scores of keys \texttt{France} and \texttt{chair} reverse order between virtual relative distances $0$ and $m_\star$.
 As attention directs information routing toward higher scores, this reversal favors the distractor \texttt{chair} over the semantically relevant \texttt{France} at relative distance $m_\star$ and can contribute to an incorrect prediction.
\subref*{fig:positional-insensitivity-panel} For query \texttt{A} and key \texttt{B}, the vertical bracket marks a marginal normalized score gap between adjacent virtual relative distances, $g_z(m)<\zeta$. This leaves the model unable to distinguish between the two adjacent positions.}
\label{fig:semantic-reversal}
\end{figure}

\fi

\subsection{Formal Definitions of Semantic Reversal and Positional Insensitivity}

\label{subsec:two-failure-modes}
\label{subsec:semantic-reversal}
\label{subsec:positional-insensitivity}

\textbf{RoPE gives attention a way to distinguish positions, but what is the cost?}
Examining this question through the lens of failure modes, \citet{du2026ropedistinguishespositionstokens} formalize four distinct vulnerabilities in RoPE-based attention.
In this work, we focus on two core failure modes: \emph{semantic reversal} and \emph{positional insensitivity}.
These two dimensions prove theoretically sufficient to uncover the fundamental context limits of RoPE in \S\ref{sec:theory}, while remaining practically informative for our diagnostic tool design in \S\ref{sec:current-diagnostic-tool}.
Furthermore, our theoretical framework naturally extends to the other failure modes analyzed in prior literature, which we detail in Appendix~\ref{app:deferred-reliability-theory}.
Real-world tasks do not necessarily suffer from both vulnerabilities equally, and in \S\ref{subsec:current-task-preferences} we illustrate how specific task families are governed by distinct failure modes.
Below, we formalize the mathematical definitions of these two primary failure modes.

\ifdefined\ArxivFloatFigures

\fi

\paragraph{\textcolor[HTML]{D5843C}{Semantic reversal  (Different Tokens, Same Position).}}
Motivated by \citet{du2026ropedistinguishespositionstokens}, we examine the stability of attention preferences between distinct tokens across varying relative distances.
Figure~\ref{fig:semantic-reversal-panel} illustrates an instance of semantic reversal. For a query and two keys, let $D(m)=S_+(m)-S_-(m)$ denote their ordered score margin. Without loss of generality, we assume $D(0)>0$. A semantic reversal occurs at any relative distance where $D(m)<0$. To capture overall behavior without being biased by individual positions, we evaluate how frequently semantic reversals occur across the entire context window of length $M$ as the semantic reversal probability:
\begin{equation}
p_{\mathrm{rev}}(D;M)
:=\frac{1}{M}\sum_{m=0}^{M-1}\mathbf{1}_{\{D(m)<0\}}
=\mathbb P[D(\mathbf m)<0],
\qquad \mathbf m\sim\operatorname{Unif}\{0,\ldots,M-1\}.
\label{eq:semantic-order-reversal}
\end{equation}
The probability evaluates the stability of the model's preference across the full sequence.

\paragraph{\textcolor[HTML]{357FA8}{Positional insensitivity (Same Token, Different Positions).}}
Motivated by \citet{liu2026rotarypositionalembeddingsphase,sun2026pas}, we evaluate how sensitively an attention score responds to one-token changes in relative distance while holding token representations fixed.
For distant or semantically unrelated token pairs, resolving exact single-token offsets carries little operational value;
however, in retrieval tasks that demand precise relative order, as illustrated in Figure~\ref{fig:positional-insensitivity-panel}, distinguishing adjacent positions becomes essential to resolve local sequence structure \citep{wu-etal-2025-lifbench,wang-etal-2024-ada}.

An intuitive metric is to measure the raw score change $|S(m+1) - S(m)|$ across adjacent locations.
However, this raw difference scales with representation magnitude, leaving the raw gap vulnerable to arbitrary norm variations across heads \citep{qi2025semanticsrediscoveringspatialawareness,jin2025massive}.
To isolate intrinsic positional sensitivity from overall vector magnitude, we normalize the local score difference by the pair's coefficient norm $U(z) = \|z\|_2 = (\sum_n |z_n|^2)^{1/2}$.
The resulting \emph{normalized adjacent response} is
\begin{equation}
g_z(m) := \frac{|S(m+1) - S(m)|}{U(z)},
\qquad U(z) > 0,\quad 0 \leq m \leq M-2.
\label{eq:current-adjacent-response}
\end{equation}
For a chosen response threshold $\zeta > 0$, \emph{positional insensitivity} occurs when $g_z(m) < \zeta$, quantifying the failure to preserve local positional resolution.

\section{A Theory of Long-Context Failures: The Tradeoff Between Semantic Stability and Positional Sensitivity}
\label{sec:theory}
\label{sec:tradeoff}

RoPE relies on a shared set of coordinate rotations to preserve content preferences and distinguish positions.
In this section, we reveal a tradeoff between semantic stability and
positional sensitivity in RoPE, both governed by the norm share of \textbf{high-frequency} components. Over long context windows, \textit{an insufficient high-frequency contribution leaves attention blind to adjacent positions, whereas amplifying these fast oscillations triggers semantic reversal}.
Below, we formalize high-frequency score variation under a finite-window framework, and then derive \textbf{a context-length limit} where avoiding both failure modes becomes mathematically impossible.

\ifdefined\ArxivFloatFigures
\ifdefined\ArxivFloatFigures
\begin{figure}[htbp]
\else
\begin{figure}[H]
\fi
\centering
\captionsetup[subfigure]{font=normalsize,labelfont=bf,textfont=bf,skip=\abovecaptionskip}
\ifdefined\ArxivFloatFigures
\begin{subfigure}[t]{.30\textwidth}
\centering
\includegraphics[width=1.56in]{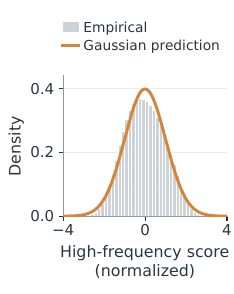}
\else
\begin{subfigure}[t]{1.56in}
\centering
\includegraphics[width=\linewidth]{figures/high_frequency_gaussian_qwen.pdf}
\fi
\caption{Qwen3-8B distribution}
\label{fig:high-frequency-gaussian-qwen}
\end{subfigure}\hfill
\ifdefined\ArxivFloatFigures
\begin{subfigure}[t]{.33\textwidth}
\centering
\includegraphics[width=1.92in]{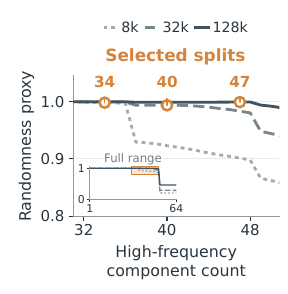}
\else
\begin{subfigure}[t]{1.92in}
\centering
\includegraphics[width=\linewidth]{figures/high_frequency_cutoff_qwen.pdf}
\fi
\caption{Qwen3-8B split selection}
\label{fig:high-frequency-cutoff-qwen}
\end{subfigure}\hfill
\ifdefined\ArxivFloatFigures
\begin{subfigure}[t]{.37\textwidth}
\centering
\includegraphics[width=1.92in]{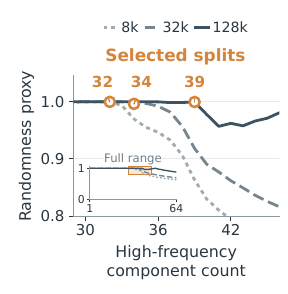}
\else
\begin{subfigure}[t]{1.92in}
\centering
\includegraphics[width=\linewidth]{figures/high_frequency_cutoff_llama.pdf}
\fi
\caption{Llama-3.1-8B split selection}
\label{fig:high-frequency-cutoff-llama}
\end{subfigure}
\caption{
\textbf{(a)} Under our high-frequency split, the predicted score distribution closely matches the empirical distribution.
\textbf{(b,c) Defining the high-frequency band.} The curves show how randomness changes as more components are included. Our theory-guided split selects the largest band satisfying the split criterion while retaining high randomness. The trends vary with context length, and larger $M$ allows more high-frequency components, consistent with our theory. (See Appendix~\ref{app:fig3-experimental-details}.)}
\label{fig:main-high-frequency-cutoff}
\end{figure}

\fi

\subsection{High-Frequency Norm Share and Score Variation}
\label{subsec:partition}

To analyze high-frequency fluctuations rigorously, we first specify an analytical moment tolerance $\eps \in (0, 1/2)$ that governs the precision of finite-window moment control.
Under the standard geometric grid $\omega_n = \rho^n$ with $\rho = B^{-1/h}$, demanding a tighter analytical tolerance requires a higher frequency cutoff, yielding the certified high-frequency band
\begin{equation}
H = \left\{n \in \{0, \dots, h-1\} : M\omega_n \geq \frac{2C_{\mathrm F}}{(1-\rho)\eps}\right\},
\label{eq:main-window-split}
\end{equation}
where $C_{\mathrm F} > 0$ denotes the universal Fourier-frame bound established in Appendix~\ref{app:frame-bound}.
This $\eps$-dependent criterion guarantees that all coordinates in $H$ complete sufficient oscillations over window $M$ to suppress cross-frequency interference.

\begin{restatable}[\textbf{Analytical moment guarantees under finite windows}]{proposition}{HighBandCalibration}
\label{prop:high-calibration}
For any tolerance $\eps \in (0, 1/2)$ and certified window $M$ where $H$ is nonempty, the high-frequency attention score $S_H(m) := \Re \sum_{n \in H} z_n e^{i m \omega_n}$ and its coefficient norm $U_H(z) := (\sum_{n \in H} |z_n|^2)^{1/2}$ over uniformly distributed relative distances $m \sim \operatorname{Unif}\{0, \dots, M-1\}$ satisfy
\begin{equation}
|\mathbb{E}_m[S_H(m)]| \leq \eps U_H(z)
\qquad\text{and}\qquad
\left|\operatorname{Var}_m(S_H(m)) - \frac{1}{2}U_H(z)^2\right| \leq \eps U_H(z)^2.
\label{eq:main-moment-bounds}
\end{equation}
\end{restatable}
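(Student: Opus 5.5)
The plan is to compute both moments explicitly with geometric-sum (Dirichlet-kernel) estimates, and to control the cross-frequency interference through the universal Fourier-frame bound $C_{\mathrm F}$ of Appendix~\ref{app:frame-bound}.

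\textbf{Mean.} Write $S_H(m)=\Rea\sum_{n\in H} z_n e^{im\omega_n}$. For any real $\theta$ we have
\[
\Bigl|\tfrac1M\sum_{m=0}^{M-1}e^{im\theta}\Bigr|\le \min\Bigl(1,\tfrac{1}{M|\sin(\theta/2)|}\Bigr),
\]
and we use the version of this estimate that the frame bound packages as $\le C_{\mathrm F}/(M\|\theta\|)$, where $\|\theta\|$ denotes the distance from $\theta$ to $2\pi\mathbb Z$. This gives
\[
|\mathbb E_m S_H|\le \sum_{n\in H}|z_n|\,\frac{C_{\mathrm F}}{M\omega_n}.
\]
We do \emph{not} apply Cauchy--Schwarz crudely, since that would introduce a factor $\sqrt{|H|}$. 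Instead we use the geometric grid. If $n_\ast=\max H$, then $\omega_n=\omega_{n_\ast}\rho^{\,n-n_\ast}$, so $\sum_{n\in H}1/(M\omega_n)$ is a geometric series bounded by $\frac{1}{M\omega_{n_\ast}}\cdot\frac{1}{1-\rho}$. Combining this with $\max_n|z_n|\le U_H(z)$ and the defining condition $M\omega_{n_\ast}\ge 2C_{\mathrm F}/((1-\rho)\eps)$ yields $|\mathbb E S_H|\le \eps U_H/2\le \eps U_H$.

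A complication is that $\omega_0=1$ can lie near $2\pi\mathbb Z$ only when $\omega\ge 2\pi$, which cannot happen here because $\omega_n\le 1<\pi$. Hence $\|\omega_n\|=\omega_n$ and $|\sin(\omega_n/2)|\ge \omega_n/\pi$; the constant $C_{\mathrm F}$ absorbs this factor.

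\textbf{Second moment.} Expand
\[
S_H(m)^2=\tfrac12\Rea\sum_{n,n'}z_n\overline{z_{n'}}e^{im(\omega_n-\omega_{n'})}+\tfrac12\Rea\sum_{n,n'}z_nz_{n'}e^{im(\omega_n+\omega_{n'})}.
\]
The diagonal terms of the first sum average exactly to $\tfrac12U_H^2$. The remaining off-diagonal terms and the sum-frequency terms are averages of $e^{im\theta}$ with $\theta=\omega_n\pm\omega_{n'}$, $n\ne n'$ in the difference case.

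This is the main obstacle. The differences $\omega_n-\omega_{n'}$ with $n<n'$ satisfy $\omega_n-\omega_{n'}\ge(1-\rho)\omega_n$, which is where the factor $(1-\rho)$ enters. The sums satisfy $\omega_n+\omega_{n'}\le 2<2\pi$, so both $\theta$ and $2\pi-\theta$ are bounded below, and the $\theta$ near $2\pi$ issue is again absent. The error is therefore a bilinear form
\[
\sum_{n,n'}|z_n||z_{n'}|\,K_{nn'},\qquad K_{nn'}\lesssim \frac{C_{\mathrm F}}{M(1-\rho)\max(\omega_n,\omega_{n'})}.
\]
I would bound its operator norm by Schur's test. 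For each row, $\sum_{n'}K_{nn'}$ is dominated by a geometric series in the larger frequency, which is at most $\frac{1}{M(1-\rho)\omega_{n_\ast}}$ times an absolute constant. This is exactly the quantity the frame bound of Appendix~\ref{app:frame-bound} is stated to control, and it gives a total error of at most $\tfrac{\eps}{2}U_H^2$.

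\textbf{Variance.} Finally,
\[
\Var=\mathbb E S_H^2-(\mathbb E S_H)^2 .
\]
The mean bound gives $(\mathbb E S_H)^2\le \eps^2U_H^2/4\le \eps U_H^2/4$ for $\eps<1/2$, and adding the second-moment error gives $|\Var-\tfrac12U_H^2|\le\eps U_H^2$. If the frame-bound appendix states the estimate directly in the form $\bigl|\tfrac1M\sum_m|\sum_n a_ne^{im\omega_n}|^2-\|a\|^2\bigr|\le \frac{C_{\mathrm F}}{(1-\rho)M\omega_{\min}}\|a\|^2$, I would invoke it directly, with $a=z$ for the difference part and suitably paired coefficients for the sum part, rather than redoing the Schur argument.
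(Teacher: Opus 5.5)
Your mean bound and the final assembly $\Var=\mathbb E S_H^2-(\mathbb E S_H)^2$ are fine. The mean step uses the pointwise bound $|K_M(\omega_n)|\le\pi/(M\omega_n)$ for $\omega_n\le1$ and then $\sum_{n\le n_\ast}\rho^{-n}\le\rho^{-n_\ast}/(1-\rho)$. It needs only $C_{\mathrm F}\ge\pi$, and it is even more elementary than the paper's argument.

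The gap is the off-diagonal second-moment estimate: your Schur row-sum claim is false. With $K_{nn'}\lesssim 1/(M(1-\rho)\max(\omega_n,\omega_{n'}))$ and $\max(\omega_n,\omega_{n'})=\rho^{\min(n,n')}$, the off-diagonal entries of row $n_\ast$ alone sum to
\[
\frac{1}{M(1-\rho)}\sum_{n'<n_\ast}\rho^{-n'}=\frac{\rho\,(\rho^{-n_\ast}-1)}{M(1-\rho)^2}.
\]
The geometric series contributes its own factor $1/(1-\rho)$ on top of the one already in the kernel. Under the certification $M\omega_{n_\ast}\ge 2C_{\mathrm F}/((1-\rho)\eps)$, which is essentially tight for the last index of $H$, this controls the error only at order $\eps/(1-\rho)$ once $|H|\gtrsim 1/(1-\rho)$, not at $\eps/2$. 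Since $C_{\mathrm F}$ is absolute and the proposition must hold uniformly in $\rho=B^{-1/h}$, the loss cannot be absorbed into constants. For the grids used here $\rho$ is about $0.81$, and it tends to $1$ as $h$ grows.

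Sharpening the kernel to $\pi/(M|\omega_n-\omega_{n'}|)$ does not rescue the argument. Near the bottom of $H$ the frequencies are nearly equally spaced with gap about $(1-\rho)\omega_{n_\ast}$, so absolute row sums still pick up a harmonic factor of order $\log\frac{1}{1-\rho}$. Any bound that discards the signs of the kernel proves the estimate only for a stricter cutoff, i.e.\ for a different $H$.

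The missing ingredient is cancellation. The paper applies the Montgomery--Vaughan Hilbert inequality on the \emph{signed} set $\Lambda_H^0=\{0\}\cup\{\pm\omega_n:n\in H\}$. It first checks that the minimal wrapped separation $\delta_H$ of this set is at least $(1-\rho)\omega_{n_\ast}$:
\begin{enumerate}
\item distances to $0$ are at least $\omega_{n_\ast}$;
\item same-sign gaps are at least $(1-\rho)\omega_{n_\ast}$;
\item opposite-sign gaps are at least $2\omega_{n_\ast}$;
\item wrap-around gaps exceed $2\pi-2$.
\end{enumerate}
For the Gram matrix $\mathcal G_H(M)=[K_M(\lambda'-\lambda)]_{\lambda,\lambda'\in\Lambda_H^0}$, this yields $\norm{\mathcal G_H(M)-I}_{\op}\le 2\pi/(M\delta_H)\le\eps/2$, with no logarithmic or extra $(1-\rho)^{-1}$ loss. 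Three objects are compressions of this single matrix: the mean vector $(K_M(\omega_n))_n$, the difference block $[K_M(\omega_{n'}-\omega_n)]-I$, and the sum block $[K_M(\omega_n+\omega_{n'})]$. Each therefore inherits the $\eps/2$ bound, and the variance bound follows as $(\eps/2+\eps^2/4)U_H(z)^2\le\eps U_H(z)^2$.

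Your closing fallback points toward this, but it is conditional and phrased only for positive frequencies. The sum-frequency terms $\tfrac12\Rea\sum_{n,n'}z_nz_{n'}K_M(\omega_n+\omega_{n'})$ form a cross-Gram between $\{e^{im\omega_n}\}$ and $\{e^{-im\omega_{n'}}\}$. You therefore need the frame bound on the signed set, not just on $\{\omega_n\}_{n\in H}$.
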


Full proofs are deferred to \S~\ref{app:high-exact-calibration}.
Following the statistical behavior observed by \citet{du2026ropedistinguishespositionstokens}, which is an application of the lacunary Central Limit Theorem  \citep{salem1947lacunary,aistleitner2023lacunary}, sums of rapidly oscillating components across relative distances behave as zero-mean Gaussian fluctuations, yielding the operational score approximation $S_H(m) \approx \mathcal{N}(0, U_H(z)^2/2)$.

\ifdefined\ArxivFloatFigures\else

\fi

\paragraph{Error-Bounded Spectral Partitioning.}
Proposition~\ref{prop:high-calibration} links the frequency cutoff $H$ directly to the analytical error tolerance $\eps$, establishing precise control over finite-window moments.
Demanding tighter precision forces slower components out of $H$, raising the frequency cutoff accordingly.
While prior analyses assume specific query and key distributions or amplitude regularity to derive bounds~\citep{NEURIPS2024_9f12dd32, ICLR2025_e6d58fc6, du2026ropedistinguishespositionstokens}, our finite-window moment bounds hold for arbitrary fixed activations extracted from trained models, accommodating general phases and non-uniform coefficient magnitudes.
Figure~\ref{fig:main-high-frequency-cutoff} illustrates the empirical behavior of our theoretically predicted partitioning across 8k, 32k, and 128k relative-distance windows.

\paragraph{\textbf{The Governing Norm Share $r_H$}}
For context length $M$, we quantify the proportion of energy concentrated in fast oscillations through the high-frequency norm share
\begin{equation}
r_H(z; M) := \frac{U_H(z)}{U(z)} \in [0, 1],
\qquad\text{where } U(z) := \|z\|_2 = \left(\sum_{n=0}^{h-1} |z_n|^2\right)^{1/2}.
\label{eq:main-norm-share-def}
\end{equation}
The parameter $r_H(z; M)$ serves as the central mathematical pivot connecting spectral allocations to downstream failure modes.
Specifically, the two failure modes defined in \S\ref{subsec:two-failure-modes} constrain this energy share in opposing directions.
For an ordered score margin, the calibrated lower bound on semantic reversal probability is non-decreasing in $r_H$ (Appendix~\ref{app:reversal-cutoff-setup}), while retaining adjacent positional sensitivity for the same margin imposes a context-dependent lower bound on $r_H$ (Appendix~\ref{subsubsec:local-positional-response}).
This dual constraint establishes the foundation for the fundamental tradeoff derived in \S\ref{subsec:reversal-cutoff}.

\subsection{The Semantic and Positional Tradeoff Limits Context Length}
\label{subsec:reversal-cutoff}

Distinguishing adjacent positions requires attention scores to vary, while preserving token preferences demands that their relative ordering remains stable. High-frequency components govern both requirements. Suppressing these oscillations stabilizes token orderings at the expense of adjacent positional sensitivity. As the context window expands, slower frequencies enter the high-frequency band, leaving fewer static components to sustain local differences. Maintaining a target positional gap imposes a context-dependent lower bound on the high-frequency norm share, yielding a non-decreasing lower bound on reversal probability under the calibration conditions below.

\begin{restatable}[Context-length limit on joint reliability (informal)]{theorem}{MaximumReliableContext}
\label{thm:maximum-reliable-context}
For a fixed ordered score margin and reversal and adjacent-response tolerances
$\delta$ and $\zeta$ satisfying the 
conditions in
Appendix~\ref{app:reversal-cutoff-setup}, there exists a context-length bound
$M_{\max}$ such that, for every calibrated window,
\begin{equation}
M > M_{\max} \implies p_{\mathrm{rev}}(M) > \delta
\quad\text{or}\quad
\max_{0 \leq m \leq M-2} g(m) < \zeta,
\label{eq:main-incompatibility-implication}
\end{equation}
where $p_{\mathrm{rev}}(M)$ and $g(m)$ denote the reversal probability and
normalized adjacent response of the same margin over relative distance $m$.
Beyond $M_{\max}$, this margin cannot satisfy both requirements within the
calibrated family.
\end{restatable}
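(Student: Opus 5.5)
The proof splits into two monotone constraints on the high-frequency norm share $r_H(z;M)$ of the margin's coefficient vector $z$ (the coefficients of $D=S_+-S_-$). The plan is to show that
\begin{itemize}
\item avoiding reversal forces $r_H$ \emph{below} a ceiling $\bar r(\delta)$ that does not depend on $M$, and
\item avoiding positional insensitivity forces $r_H$ \emph{above} a floor $\underline r(\zeta;M)$ that tends to $1$ (or at least exceeds $\bar r$) as $M$ grows.
\end{itemize}
$M_{\max}$ is then the first window at which $\underline r(\zeta;M)>\bar r(\delta)$. Monotonicity of $H$ in $M$ makes the implication hold for every larger calibrated window: the geometric grid means $H$ only gains slower frequencies as $M$ increases.

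\textbf{Semantic side.} Decompose $D(m)=D_H(m)+D_L(m)$ into the certified band $H$ and its complement $L$.
\begin{itemize}
\item On $L$, the low frequencies satisfy $M\omega_n$ below the cutoff. Hence $D_L$ drifts by at most $O(\eps^{-1})\,U_L(z)$-scale amounts over the window and stays close to $D_L(0)$. A crude bound is $|D_L(m)|\le U_L\sqrt{|L|}$, i.e.\ $|D_L(m)|\le \sqrt{h}\,\sqrt{1-r_H^2}\,U$, which I will use as the calibrated envelope.
\item On $H$, Proposition~\ref{prop:high-calibration} gives $|\mathbb E D_H|\le \eps U_H$ and $\Var D_H\ge(\tfrac12-\eps)U_H^2$.
\item To turn these moments into a lower bound on $\mathbb P[D<0]$, I will use a Paley--Zygmund-type argument on $-D_H$. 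This needs a fourth-moment bound $\mathbb E D_H^4\le C U_H^4$, obtained from the same Fourier-frame estimate of Appendix~\ref{app:frame-bound} applied to products of frequencies, or assumed as part of the calibration conditions.
\end{itemize}
The result is $p_{\mathrm{rev}}\ge \Phi_{\mathrm{cal}}\bigl(r_H\bigr)$, where $\Phi_{\mathrm{cal}}$ is non-decreasing and positive once $U_H$ dominates the static part. This matches the Gaussian heuristic $\mathbb P[\mathcal N(0,U_H^2/2)<-|D_L|]$. Requiring $p_{\mathrm{rev}}\le\delta$ then gives $r_H\le\bar r(\delta)$.

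\textbf{Positional side.} Write $S(m+1)-S(m)=\Rea\sum_n z_n e^{im\omega_n}(e^{i\omega_n}-1)$ and use $|e^{i\omega}-1|\le\omega$.
\begin{itemize}
\item The $L$-band contributes at most $\sum_{n\in L}|z_n|\omega_n\le U_L\bigl(\sum_{n\in L}\omega_n^2\bigr)^{1/2}$.
\item Every $n\in L$ has $\omega_n<2C_{\mathrm F}/((1-\rho)\eps M)$, and the geometric series is dominated by its largest term. Hence this contribution is at most $c_\eps U_L/M$ with $c_\eps=O(1/((1-\rho)^{3/2}\eps))$.
\item The $H$-band contributes at most $2U_H\sqrt{|H|}$. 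The $\sqrt{|H|}$ factor can be sharpened to $\|(\omega_n)_{n\in H}\|$-type constants.
\end{itemize}
Thus $\max_m g(m)\le 2\sqrt{h}\,r_H+c_\eps/M$. Requiring $\max_m g\ge\zeta$ yields
\[
r_H\ \ge\ \underline r(\zeta;M)=\frac{\zeta-c_\eps/M}{2\sqrt h}.
\]

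\textbf{Main obstacle.} With the crude constants above, $\underline r$ does not tend to $1$; it only saturates at $\zeta/(2\sqrt h)$. The intended mechanism is subtler: as $M$ grows, previously low frequencies join $H$, so the low band's adjacent response shrinks like $1/M$. The incompatibility therefore holds exactly when $\zeta/(2\sqrt h)$ (or its sharpened analogue) exceeds $\bar r(\delta)$, and $M_{\max}$ solves $\zeta-c_\eps/M_{\max}=2\sqrt h\,\bar r(\delta)$. This is presumably the role of the ``conditions in Appendix~\ref{app:reversal-cutoff-setup}''. The hardest step is the anti-concentration lower bound on $p_{\mathrm{rev}}$ from only two moments. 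I would first try to avoid the fourth moment by restricting the calibrated family so that $|D_L|$ is bounded. In that setting a Cantelli/second-moment argument on $D_H$ suffices, and anti-concentration can be derived from boundedness $|D_H|\le\sqrt{|H|}U_H$ together with the variance lower bound.
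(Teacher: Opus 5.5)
Your positional half is sound. It is essentially the paper's Lemma~\ref{lem:stable-adjacent-response-envelope} and Corollary~\ref{cor:normalized-positional-share-floor}, with $2\sqrt h$ in place of the sharper factor $R_\cF$ and a different but valid non-high constant.

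The gap is on the semantic side. By replacing $\sqrt{|L|}$ with $\sqrt h$ so that $\bar r(\delta)$ is independent of $M$, you discard the mechanism that actually produces the crossing. With your constants, no crossing exists for any attainable $\zeta$. Write $d$ for the margin coefficients, $r=r_H(d;M)$ and $U=U(d)$.
\begin{itemize}
\item Since $|D_H(m)|\le\sum_{n\in H}|d_n|\le\sqrt h\,rU$, the event $\{D_H<-\sqrt h\sqrt{1-r^2}\,U\}$ that your inclusion argument lower-bounds is empty for $r\le1/\sqrt2$.
\item Even granting an exact-moment Gaussian calibration, the reverse-triangle bound $\sigma_D\ge U[c_\eps r-\sqrt h\sqrt{1-r^2}]_+$, with $c_\eps=\sqrt{1/2-\eps}$, vanishes for $r\le\sqrt{h/(h+c_\eps^2)}$.
\item So any ceiling you extract has $\bar r(\delta)\ge1/\sqrt2$ for every $\delta\ge0$.
\item On the other hand, Cauchy--Schwarz gives $\max_m g_d(m)\le R_\cF=(\sum_n|e^{i\omega_n}-1|^2)^{1/2}\le2\sin(1/2)\sqrt h$ for every margin. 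For any attainable $\zeta$, your floor is therefore at most $\zeta/(2\sqrt h)\le\sin(1/2)<1/\sqrt2$.
\end{itemize}
Your crossing condition $\zeta/(2\sqrt h)>\bar r(\delta)$ can hold only when $\zeta$ exceeds every attainable response, where the conclusion is trivial. Under the paper's hypothesis $0<\zeta<R_\cF$, your argument produces no $M_{\max}$ at all. Switching to $R_\cF$ alone would help only for $\zeta$ close to $R_\cF$.

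The missing idea is to keep $k_N(M)=|\cN_\eps(M)|$ in the static envelope, so that the $M$-dependence sits in the reversal bound rather than in the floor.
\begin{itemize}
\item The paper uses $|\mu_D|\le U\,u_M(r)$ and $\sigma_D\ge U\,v_M(r)$, with $u_M(r)=\eps r+\sqrt{k_N(M)}\sqrt{1-r^2}$ and $v_M(r)=[c_\eps r-\sqrt{k_N(M)}\sqrt{1-r^2}]_+$. These come from the band moment bounds via the triangle and reverse triangle inequalities in the window $L_2$ norm.
\item This yields $p_{\mathrm{rev}}\ge\psi_M(r)=[\Phi_{\mathrm{std}}(-u_M(r)/v_M(r))-\overline\tau_{\mathrm G}]_+$.
\item Writing $u_M/v_M$ as a function of $x=\sqrt{1-r^2}/r$ shows $\psi_M$ is non-decreasing in $r$. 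Together with the floor this gives $p_{\mathrm{rev}}\ge\psi_M(\ell(M;\zeta))$.
\item This bound is non-decreasing in $M$, because $\ell$ increases while $k_N(M)$ decreases.
\item Once $M\omega_{h-1}\ge\Gamma_\eps(\rho)$, one has $k_N=0$ and hence $u_M/v_M=\eps/c_\eps$ for every $r>0$. Any positive floor (i.e.\ $M>\Gamma_\eps(\rho)\sqrt h/\zeta$) then already forces $p_{\mathrm{rev}}\ge\delta_\star$; the floor never needs to approach $1$.
\item On the finite calibrated family, $M_{\max}$ is simply the largest window with $\psi_M(\ell(M;\zeta))\le\delta$. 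The existence of a crossing is a hypothesis on $\delta$, with $M_+\ge M_{\mathrm{suff}}$ and $\delta<\delta_\star$ as a sufficient condition.
\end{itemize}

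Finally, ``calibrated window'' means exactly the assumption $|p_{\mathrm{rev}}(d;M)-\Phi_{\mathrm{std}}(-\mu_D/\sigma_D)|\le\overline\tau_{\mathrm G}$. The paper does not derive anti-concentration from moments, and your fallback would not work as stated.
\begin{itemize}
\item Cantelli's inequality only bounds one-sided tails from above. It gives no positive lower bound on $\mathbb P[D<0]$ when $\mu_D$ may be nonnegative.
\item A bounded-variable argument is rigorous, e.g.\ $\mathbb E|D_H|\ge\mathbb E D_H^2/\|D_H\|_\infty$, or Paley--Zygmund with $\mathbb E D_H^4\le\|D_H\|_\infty^2\,\mathbb E D_H^2$. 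However, it gives constants of order $1/h$ and is positive only when $\eps$ is small compared with $1/\sqrt h$.
\item Non-high frequencies can rotate by up to $\Gamma_\eps(\rho)$ radians over the window (about $6472$ for the Qwen grid). So $D_L$ need not stay near $D_L(0)$; only the crude sup bound is available.
\end{itemize}
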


Appendix~\ref{app:reversal-cutoff-setup} gives the precise calibrated-window
conditions, the construction of $M_{\max}$, and the formal version of
Theorem~\ref{thm:maximum-reliable-context}. The construction balances the
high-frequency share required for local sensitivity against the reversal
probability it induces.

\paragraph{\textbf{A RoPE-of-War between Content and Position} (\textcolor{failureaccent}{\textbf{Diagnose What the Task Needs First}}).}
While existing literature identifies conceptual tensions in rotary attention regarding content selectivity and positional tracking~\citep{ICLR2025_e6d58fc6,du2026ropedistinguishespositionstokens,ICLR2026_68f1a693}, we formalize a quantitative tradeoff showing that semantic stability and positional sensitivity are mutually constrained under rotary representations.
Theorem~\ref{thm:maximum-reliable-context} gives a conditional bound on joint
score-level reliability, determined by the frequency grid, the two reliability
tolerances, and the calibration error.
Within this setting, shifting spectral weight changes the balance between two requirements. 
The analysis suggests weakening high-frequency contributions to stabilize key preferences and strengthening them to distinguish adjacent positions, depending on the task.
\S\ref{sec:current-diagnostic-tool} operationalizes this principle by introducing \textbf{\tool{}} to evaluate these competing vulnerabilities and guide targeted interventions.

\section{Diagnosis and Targeted Mitigation of Long-Context Failures with \tool{}}
\label{sec:current-diagnostic-tool}

Building on our theoretical criteria, we introduce \textbf{\tool{}} to turn spectral failure analysis into actionable diagnostics and targeted model adjustments.
Moving beyond prior diagnostic frameworks that primarily identify internal vulnerabilities (Appendix~\ref{app:related-internal-diagnosis}), our tool establishes a bridge between spectral state profiling and targeted training-free mitigation.
Figure~\ref{fig:diagnostic-workflow} outlines the five-stage pipeline.
In \S\ref{subsec:current-tool-definition}, we formalize how cached activations yield the \textit{Semantic Score} and \textit{Positional Score}, evaluate relative failure susceptibility across benchmark settings, and derive theory-motivated interventions.
In \S\ref{subsec:current-intervention-results}, we evaluate these diagnostic profiles and targeted modulations across diverse tasks and models, demonstrating substantial accuracy gains across a comprehensive long-context suite.

\begin{figure}[tbhp]
  \centering
  \includegraphics[width=\linewidth]{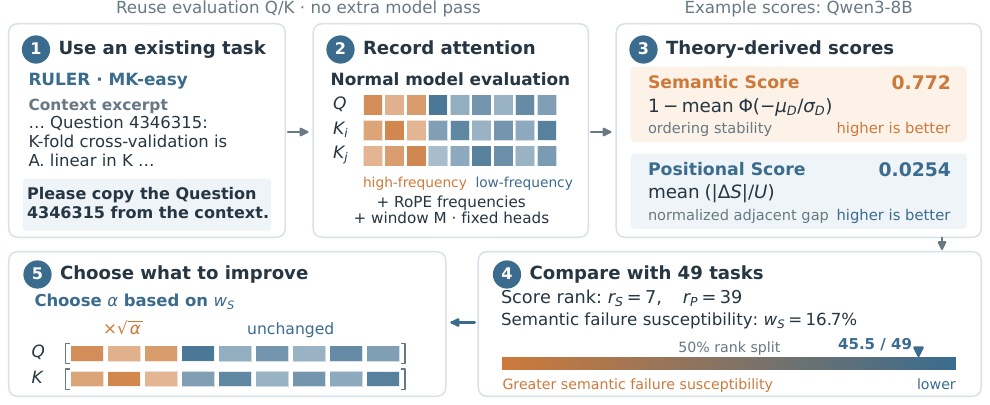}
  \caption{\textbf{Augmenting accuracy-based evaluation with failure diagnostics.}
  For a given model and the 49 reference task configurations
  (described in \S\ref{app:diagnostic-task-set}), we use query and key activations
  from standard evaluation to compute relative semantic failure
  susceptibility $w_S$.
  A higher $w_S$ suggests greater susceptibility to semantic reversal
  relative to positional insensitivity within the reference set.}
  \label{fig:diagnostic-workflow}
\end{figure}

\subsection{From Spectral Theory to Practical Profiling and Interventions}
\label{subsec:current-tool-definition}

During the standard inference pass of benchmark evaluation, \tool{} samples query and key tokens to cache their internal activations $q, k$ across target attention heads (selection details and heatmaps in Appendix~\ref{app:selection-protocol}) (Figure~\ref{fig:diagnostic-workflow}, Steps 1 and 2).
Holding these content vectors fixed, we evaluate virtual attention scores by sweeping RoPE relative distances across the context window, incurring zero additional model inference passes.

\paragraph{Theoretical Scores from Cached States (Figure~\ref{fig:diagnostic-workflow}, Step 3).}
Our theoretical criteria translate directly into two continuous metrics.
For local positional sensitivity, we evaluate query--key pairs $(q, k)$ by averaging the normalized adjacent response $g_{q, k}(m) = |S(m+1) - S(m)| / U(z)$ across relative distances $m$, defining the \emph{Positional Score} as $\bar{g}$.
For semantic stability, we construct triplets $(q, k_1, k_2)$ with distinct keys and compute the finite-window probability $P_{\text{rev}}$ that position shifts invert their reference ranking at distance zero, using our moment bounds and Gaussian approximation.
The \emph{Semantic Score} is defined as $1 - \bar{P}_{\text{rev}}$ to ensure that higher values are better for both metrics.

\paragraph{Relative Failure Susceptibility via Normalized Ranks (Figure~\ref{fig:diagnostic-workflow}, Step 4).}
Directly comparing raw scores across different tasks or failure modes is ineffective because absolute score magnitudes lack a shared scale, and meanwhile semantic and positional metrics operate on distinct numerical ranges.
Therefore, we rank all evaluated tasks separately by each metric, yielding semantic rank $r_S$ and positional rank $r_P$.
We use the relative semantic failure susceptibility $w_S \in [0, 1]$ to show the normalized rank gap: a larger $w_S$ indicates greater susceptibility to semantic reversal relative to positional insensitivity
within the reference set.

\paragraph{Theory-Motivated Interventions (Figure~\ref{fig:diagnostic-workflow}, Step 5).}
The spectral tradeoff driven by the high-frequency norm share motivates a training-free intervention approach with two opposing directions on task-sensitive heads.
Guided by task diagnostic profiles, we assign the upper half of $w_S$ rankings to the semantic direction, attenuating high frequencies or freezing their rotations to suppress position-induced fluctuations~\citep{ICLR2025_e6d58fc6,chen-etal-2025-hope}.
The remaining settings follow the positional direction, using $\alpha > 1$ to amplify adjacent score resolution~\citep{qiao2025rethinking,chiang2025dimension,mikaeili2026untwisting}.
While $w_S$ determines the intervention direction, the specific scaling scalar $\alpha$ is selected from candidate values.
This strategy departs from conventional frequency manipulations~\citep{hua2025fope,xiong2025dope,li2026cope,an2025string} by dynamically assigning the intervention direction per task from its diagnostic profile and restricting modifications strictly to task-sensitive heads.
We report the best observed outcome for each setting, while Appendix~\ref{app:current-intervention} documents complete sweeps across all candidate configurations.

\ifdefined\ArxivFloatFigures
\begin{figure}[htbp]
\else
\begin{figure}[H]
\fi
  \centering
  \includegraphics[width=\linewidth]{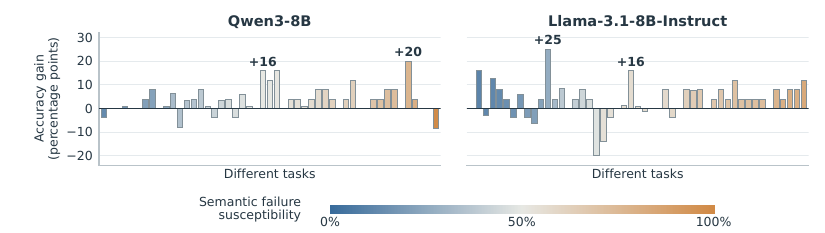}
  \caption{\textbf{Accuracy gains from our exploratory intervention search.}
  Bars show the best observed intervention in each setting's assigned
  semantic or positional direction minus the original
  baseline on identical examples within each setting.
  The baseline is excluded from the candidate maximum.
  Each model's 49 task/configuration settings
  (described in Appendix~\ref{app:diagnostic-task-set}) are ordered from
  left to right by increasing semantic failure susceptibility, from blue to orange.}
  \label{fig:intervention-accuracy-gain}
\end{figure}

\subsection{Empirical Validation Across Diverse Long-Context Tasks}
\label{subsec:current-intervention-results}
\label{subsec:current-intervention}
\label{subsec:current-task-preferences}

\paragraph{Targeted Intervention Gains Reveal Untapped Potential.}
Guided by diagnostic susceptibility profiles, our targeted training-free interventions achieve performance gains across a majority of evaluated tasks (63.3\% of Qwen configurations and 65.3\% of Llama configurations). 
As illustrated in Figure~\ref{fig:intervention-accuracy-gain}, accuracy gains can reach 20\% on Qwen3-8B and 25\% on Llama-3.1-8B-Instruct, showing the practical value of adjusting the spectral contributions identified by our theory.
\newcommand{\TaskPropertiesFigure}[1]{%
\begin{wrapfigure}[#1]{r}
{0.44\textwidth}
\ifdefined\ArxivFloatFigures
\vspace{-20pt}
\else
\vspace{-60pt}
\fi
  \centering
  \includegraphics[width=\linewidth]{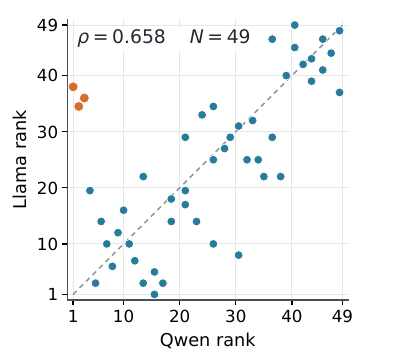}
  \caption{\textbf{Semantic failure susceptibility across tasks.}
  Points compare ranks of semantic failure susceptibility $w_S$ in Qwen3-8B and Llama-3.1-8B-Instruct for the 49
  task/configuration settings described in Appendix~\ref{app:diagnostic-task-set}.
  \textcolor[HTML]{277B9B}{Most tasks} show similar patterns across models, while three
  \textcolor[HTML]{D56E2A}{adjacent-element retrieval tasks} differ.}
  \label{fig:semantic-preference-cross-model}
\end{wrapfigure}%
}
\ifdefined\ArxivFloatFigures
\Needspace{26\baselineskip}
\TaskPropertiesFigure{25}
\fi

\paragraph{RoPE Profiler Reveals Intrinsic Task Properties.}
Beyond intervention gains, Figure~\ref{fig:semantic-preference-cross-model} compares semantic failure susceptibility across the 49 shared settings to inspect how failure patterns relate to underlying task structures.
Across the the \textcolor[HTML]{277B9B}{majority of tasks}, the two models exhibit consistent susceptibility rankings with a Spearman correlation of $\rho = 0.658$, demonstrating that the diagnostic captures intrinsic task properties.
\ifdefined\ArxivFloatFigures\else
\Needspace{0.5\textheight}
\TaskPropertiesFigure{22}
\fi

Specifically, \textit{reasoning tasks cluster in the upper half where semantic reversal dominates, while multi-key retrieval tasks occupy the lower half governed by positional insensitivity}. In contrast, \textcolor[HTML]{D56E2A}{adjacent-element retrieval tasks} deviates sharply between models, ranking in the top three for Qwen but between ranks 34 and 38 for Llama, underscoring that hybrid tasks coupling semantic matching with local order also reflect model-specific representation choices. These patterns align with previous observations on different task settings \citep{goldman2024really,vodrahalli2024michelangelo,NEURIPS2024_babilong,pmlr-v267-modarressi25a,hsieh2024ruler}.

\WFclear
\section{Related Work}
\label{sec:related-work}

\paragraph{RoPE and context extension} RoPE \citep{Su_2024} is widely used in open LLMs \citep{llama3,qwen3,liu2026ministral3}. Existing extension methods involve frequency rescaling, including NTK-aware scaling \citep{bloc97ntk2023}, PI \citep{chen2023extendingcontextwindowlarge}, ABF \citep{xiong2024effective}, YaRN \citep{ICLR2024_874a4d89}, LongRoPE \citep{ding2024longrope}, and Resonance RoPE \citep{wang-etal-2024-resonance}. Others manipulate frequency components \citep{ICLR2025_e6d58fc6, hua2025fope, chen-etal-2025-hope, li2026cope, xiong2025dope}, adjust scaling per head \citep{wang2026adarope}, or reindex positions \citep{an2025string, zhang2024mspoe}. These methods are task-agnostic, while we diagnose task-specific failures before deciding the intervention direction.

\paragraph{Theory of RoPE frequencies} Existing RoPE theories derive base-dependent context bounds \citep{NEURIPS2024_9f12dd32,liu2024scaling,liu2026rotarypositionalembeddingsphase} and roles of low-frequency dimensions \citep{hong2024token,jin2025massive}. Prior works observe a tension between semantic and positional resolution \citep{ICLR2025_e6d58fc6,du2026ropedistinguishespositionstokens,ICLR2026_68f1a693}. These analyses assume specific distribution properties of vector space or amplitude regularity, whereas our bounds hold for real-world activations without additional assumptions.

\paragraph{Internal diagnostics and specialized heads}

Retrieval heads \citep{ICLR2025_9b77f073}, attention sinks \citep{ICLR2024_5e5fd18f}, position-wise features \citep{NEURIPS2024_1403ab1a}, positional attention bias \citep{hsieh-etal-2024-found}, and other internal signals \citep{tan2026contrastiveattributionwildinterpretability} explain long-context behavior inside the model. Our RoPE Profiler also utilizes the cached activations but both identify theoretic vulnerabilities and diagnose with task-specific failure susceptibility. For further discussion, see Appendix~\ref{app:additional-related-work}.

\section{Conclusion}
\label{sec:conclusion}

In this paper, we have studied long-context failures of RoPE through the tradeoff between semantic stability and positional sensitivity.
High-frequency components support fine positional distinctions while introducing fluctuations that can destabilize token preferences.
Our finite-window analysis accommodates unequal coefficient magnitudes across frequencies and, under the stated calibration conditions, yields a context-length bound on jointly avoiding semantic reversal and positional insensitivity.
Building on these insights, we introduced \tool{} to measure both vulnerabilities from cached activations and guide targeted frequency interventions.
Across the evaluated Qwen and Llama settings, we observed broadly consistent patterns of failure susceptibility, and our exploratory intervention search identified accuracy gains on a majority of settings without retraining.
Together, these findings highlight the value of adapting frequency contributions to the needs of each task.
We hope this work informs the design of long-context models that preserve relevant token preferences while resolving the positional distinctions a task requires.

\section*{Acknowledgments}
We thank members of the Alta group at the University of Illinois Urbana-Champaign for their helpful feedback. This work was supported in part by an Amazon AICE award, NSF Grant No. CHE2505932, and a Capital One ASKS award.

\section*{AI Use Statement}
We used AI tools to aid or polish writing, adjust manuscript length and layout, create figures, organize the appendices, and search for related literature. We reviewed all generated sentences and figures, and checked against original sources for the validity of related literature. We take responsibility for the final content of this work. 

\bibliographystyle{iclr2027_conference}
\bibliography{references}

\clearpage
\appendix
\startcontents[appendix]
\begingroup
\hypersetup{linkcolor=black}
\color{black}
\renewcommand{\textcolor}[3][]{#3}
\setlength{\parskip}{0pt}
\begin{center}
  {\LARGE\bfseries Appendix\par}
\end{center}
\vspace{1.2em}
{\Large\scshape Contents\par}
\vspace{1em}
\printcontents[appendix]{appendix}{1}[2]{}
\endgroup

\begingroup
\let\AppendixSection\section
\renewcommand{\section}{\clearpage\AppendixSection}
\section{Formal Theory for the Main Results}
\label{app:formal-theory}

This appendix states the score representation, the finite-window moment
guarantee, the local-response requirement, and the precise calibrated
context-length theorem.  Their proofs are collected in
Appendix~\ref{app:all-theory-proofs}; complementary failure modes appear in
Appendix~\ref{app:deferred-reliability-theory}.

\subsection{Notation for the Deferred Analysis}
\label{app:theory-notation}

We record the general notation used by the proofs and relate it to the
high-frequency score $S_H$ in \S\ref{subsec:partition}.
Consider a fully rotary head with $h\geq1$ rotary coordinate pairs and base
$B>1$. Define
\begin{equation}
\rho:=B^{-1/h},
\qquad
\omega_n:=\rho^n,
\qquad n=0,\ldots,h-1,
\qquad
\cF:=\{0,\ldots,h-1\}.
\label{eq:grid}
\end{equation}
For fixed coefficients $z\in\mathbb C^h$ and a frequency band
$J\subseteq\cF$, write
\begin{equation}
X_J^z(m):=\Rea\sum_{n\in J}z_ne^{im\omega_n},
\qquad
J\subseteq\cF.
\label{eq:score}
\end{equation}
Thus the fully rotary score $S(m)$ in the main text is $X_\cF^z(m)$,
and its high-frequency contribution is $S_H(m)=X_H^z(m)$.
The general band norm is
\begin{equation}
U_J(z):=\left(\sum_{n\in J}|z_n|^2\right)^{1/2},
\qquad
U(z):=U_\cF(z)=\|z\|_2.
\label{eq:norms}
\end{equation}
For an integer window length $M\geq2$, let
\begin{equation}
\avg{f}:=\frac1M\sum_{m=0}^{M-1}f(m).
\label{eq:average}
\end{equation}
The notation $\Var_M(f)$ denotes $\avg{(f-\avg f)^2}$ for real-valued
$f$. Equivalently, these are the mean and variance under
$\mathbf m\sim\operatorname{Unif}\{0,\ldots,M-1\}$.

Let $C_{\mathrm F}>0$ be the absolute Fourier-frame constant in
Lemma~\ref{lem:frame-bound}. For a fixed tolerance $0<\eps<1/2$, define
\begin{equation}
\Gamma_\eps(\rho):=\frac{2C_{\mathrm F}}{(1-\rho)\eps},
\qquad
\cH_\eps(M):=\{n\in\cF:M\omega_n\geq\Gamma_\eps(\rho)\},
\qquad
\cN_\eps(M):=\cF\setminus\cH_\eps(M).
\label{eq:certified-partition}
\end{equation}
A certified window satisfies $M\geq\Gamma_\eps(\rho)$, so the
high-frequency prefix $H=\cH_\eps(M)$ is nonempty. Its complement is
$N=\cN_\eps(M)$. The random variable used in the calibration proof is
\begin{equation}
Y_H^z:=X_H^z(\mathbf m)=S_H(\mathbf m),
\qquad
\mathbf m\sim\operatorname{Unif}\{0,\ldots,M-1\}.
\label{eq:high-window-random-variable}
\end{equation}
For $z\neq0$, the two norm shares are
\begin{equation}
r_H(z;M):=\frac{U_H(z)}{U(z)},
\qquad
r_N(z;M):=\frac{U_N(z)}{U(z)},
\qquad
r_H(z;M)^2+r_N(z;M)^2=1.
\label{eq:high-share}
\end{equation}
These coincide with the main-text high-frequency norm share and its
complement. Their squares are the corresponding energy fractions.

Apply the normalized adjacent response from
\Eqref{eq:current-adjacent-response} to $X_\cF^z$. Failure to attain
response level $\zeta>0$ anywhere in the window means
\begin{equation}
\max_{0\leq m\leq M-2}g_z(m)
=\max_{0\leq m\leq M-2}
\frac{|X_\cF^z(m+1)-X_\cF^z(m)|}{U(z)}
<\zeta.
\label{eq:positional-window-failure}
\end{equation}
For the ordered margin $D(m)=X_\cF^d(m)$ with reference ordering
$D(0)>0$, the coefficient-based and function-based reversal notation
refer to the same probability:
\begin{equation}
p_{\mathrm{rev}}(d;M)
=p_{\mathrm{rev}}(D;M)
:=\mathbb P[D(\mathbf m)<0]
=\mathbb P[X_\cF^d(\mathbf m)<0].
\label{eq:main-pairwise-reversal}
\end{equation}
In particular, the main theorem's $g_d$ is the normalized adjacent
response of this same margin, with denominator $U(d)$.

\subsection{RoPE Score Reduction}
The reduction uses RoPE's relative-rotation identity~\citep{Su_2024}.
\label{app:deferred-rope-score-reduction}

\begin{restatable}[RoPE score reduction]{proposition}{RoPEScoreReduction}
\label{prop:rope-score-representation}
For a fully rotary RoPE head, as the real query and key blocks range over
$\mathbb R^{2h}$, the class of fixed query--key scores and their finite real
linear combinations as functions of relative distance is exactly
$\{X_\cF^z:z\in\mathbb C^h\}$; hence all subsequent full- and bandwise
analyses reduce to studying $X_J^z$.
\end{restatable}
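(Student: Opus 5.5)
We need to prove Proposition~\ref{prop:rope-score-representation}: the set of scores (and their finite real linear combinations) as functions of $m$ equals $\{X_\cF^z : z\in\mathbb C^h\}$. The argument has two inclusions.

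\emph{First inclusion: every score, and every finite real combination of scores, has the form $X_\cF^z$.} Write the RoPE rotation at position $p$ blockwise. In complex form, pair $n$ multiplies $Q_n$ by $e^{ip_q\omega_n}$ and $K_n$ by $e^{ip_k\omega_n}$. For two complex numbers $a,b$, the real inner product of the associated planar vectors equals $\Rea(a\overline b)$. The scaled dot product is therefore
\begin{equation*}
c_{\mathrm{att}}\sum_{n}\Rea\bigl(Q_ne^{ip_q\omega_n}\overline{K_ne^{ip_k\omega_n}}\bigr)=\Rea\sum_n z_ne^{im\omega_n},
\end{equation*}
with $z_n=c_{\mathrm{att}}Q_n\overline{K_n}$ and $m=p_q-p_k$. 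This is the relative-rotation identity~\citep{Su_2024}. It shows the score depends only on $m$ and equals $X_\cF^z$, recovering \Eqref{eq:background-rope-score}. For a finite combination $\sum_j\lambda_jX_\cF^{z^{(j)}}$ with $\lambda_j\in\mathbb R$, the map $z\mapsto X_\cF^z$ is real-linear, so the combination equals $X_\cF^{\sum_j\lambda_jz^{(j)}}$. Hence the class is contained in $\{X_\cF^z\}$.

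\emph{Second inclusion: every $z\in\mathbb C^h$ is realized by a single query--key pair.} Given $z$, choose $K_n=1$, i.e.\ $k_{n,1}=1$ and $k_{n,2}=0$. Then choose $Q_n=z_n/c_{\mathrm{att}}$, i.e.\ $q_{n,1}=\Rea z_n/c_{\mathrm{att}}$ and $q_{n,2}=\operatorname{Im} z_n/c_{\mathrm{att}}$. These are valid real blocks in $\mathbb R^{2h}$, and they give $c_{\mathrm{att}}Q_n\overline{K_n}=z_n$ for every $n$. This uses $c_{\mathrm{att}}>0$, which holds for the standard $1/\sqrt{2h}$ scaling. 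Since single scores already exhaust $\{X_\cF^z\}$, so does the larger class that includes combinations.

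\emph{Bandwise reduction.} For any band $J\subseteq\cF$, set $z^J_n=z_n\mathbf 1_{n\in J}$. Then $X_J^z=X_\cF^{z^J}$. Margins such as $D=S_+-S_-$ are real combinations of scores and so are covered by the first inclusion. Subsequent statements about bands or margins can therefore be phrased in terms of $X_J^z$ with an arbitrary coefficient vector. This is why the analysis never needs assumptions on $q$ and $k$ beyond $z$.

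\emph{Main point of care.} The argument is essentially bookkeeping, and the main thing to check is the convention. The complex encoding $Q_n=q_{n,1}+iq_{n,2}$ must match the rotation direction of the real RoPE matrix, so that the sign of the phase comes out as $+m\omega_n$ rather than $-m\omega_n$. If the paper's RoPE convention gives $e^{-im\omega_n}$ instead, replace $z$ by $\overline z$: this leaves $\Rea$ of the sum unchanged up to conjugation and maps $\mathbb C^h$ onto itself. Both inclusions are therefore unaffected.
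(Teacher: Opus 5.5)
Your proof is correct and follows the same two-inclusion argument as the paper. The paper derives the per-plane identity from $R(\alpha)^\top R(\beta)=R(\beta-\alpha)$ and an explicit real expansion of $\Rea(Q_n\overline{K_n}e^{im\omega_n})$, whereas you obtain it directly from complex multiplication and $\langle a,b\rangle=\Rea(a\overline b)$; for the converse the paper takes $Q_n=1$, $K_n=\overline{z_n/c_{\mathrm{att}}}$ in place of your $K_n=1$, $Q_n=z_n/c_{\mathrm{att}}$, and your convention check agrees with the paper's counterclockwise $R(\theta)$, which gives the phase $+m\omega_n$.
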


\paragraph{Complex-coordinate derivation.}
Write the two real coordinates of each rotary pair as
$Q_n=q_{n,1}+iq_{n,2}$ and $K_n=k_{n,1}+ik_{n,2}$, and let
$R(\theta)$ denote their planar rotation.  For relative distance
$m=p_q-p_k$ and a position-independent score scale $c_{\mathrm{att}}>0$,
\[
\begin{aligned}
c_{\mathrm{att}}\sum_n
\langle R(p_q\omega_n)q_n,R(p_k\omega_n)k_n\rangle
&=c_{\mathrm{att}}\sum_n q_n^\top R(-m\omega_n)k_n\\
&=\Rea\sum_n
\underbrace{c_{\mathrm{att}}Q_n\overline{K_n}}_{z_n}
e^{im\omega_n}
=X_\cF^z(m).
\end{aligned}
\]
The same calculation applies to score margins by subtracting their
coefficient vectors.  Appendix~\ref{app:proof-rope-score-reduction} gives
the full derivation and the converse construction.

Operationally, the reduction isolates all dependence on relative distance in
a finite trigonometric polynomial.  It preserves the spectral competition
between the diagnostics while leaving content-state changes and downstream
attention computations outside the certificate.

Under the same fixed-content assumption, the argument does not depend on
whether an implementation stores each rotary pair contiguously or in a
split-half layout; only the pairing of real coordinates matters.  A non-rotary
tail then contributes a position-independent constant.  It is omitted from the
formal fully rotary results, while Appendix~\ref{app:prope-interpretation}
discusses its distinct effects on cross-key score margins and same-key
positional comparisons.

\subsection{Finite-Window High-Frequency Calibration}
\label{app:high}
\label{app:high-exact-calibration}

The moment conclusion below is Proposition~\ref{prop:high-calibration}
in the notation of this appendix.  The additional conditional statement
records the Gaussian approximation error used by the extensions.

\begin{restatable}[Finite-window moments and conditional Gaussian calibration]{lemma}{HighBandCalibrationFormal}
\label{lem:high-calibration-formal}
Fix $0<\eps<1/2$, a certified
window $M\geq\Gamma_\eps(\rho)$, and fixed coefficients $z$ with
$U_H(z)>0$. Use $H=\cH_\eps(M)$ from
\Eqref{eq:certified-partition} and the uniformly sampled relative distance
in \Eqref{eq:high-window-random-variable}. For its random score $Y_H^z$, write
\begin{equation}
\mu_H:=\avg{X_H^z},
\qquad
\sigma_H^2:=\Var_M(X_H^z).
\label{eq:high-window-moments}
\end{equation}

Then the following moment bounds hold:
\begin{equation}
\abs{\mu_H}\leq\eps U_H(z),
\qquad
\abs{\sigma_H^2-\tfrac12U_H(z)^2}
\leq\eps U_H(z)^2.
\label{eq:high-calibration}
\end{equation}

Because $0<\eps<1/2$, \Eqref{eq:high-calibration} implies
$\sigma_H^2\geq(1/2-\eps)U_H(z)^2>0$ whenever $U_H(z)>0$.  The standardization
below is therefore well defined.  The remaining, non-deterministic source of
approximation error is the Gaussian-shape discrepancy of the standardized
window law,
\begin{equation}
\Delta_{\mathrm{H,G}}(z;M)
:=
\sup_{x\in\mathbb R}
\abs{
\mathbb P\!\left[
\frac{Y_H^z-\mu_H}{\sigma_H}\leq x
\right]
-\Phi_{\mathrm{std}}(x)
}
\leq\tau_{\mathrm{H,G}}.
\label{eq:high-gaussian-shape-assumption}
\end{equation}
Under this condition, the exact form of the Gaussian conclusion is
\begin{equation}
\sup_{x\in\mathbb R}
\abs{
\mathbb P[Y_H^z\leq x]
-\Phi_{\mathrm{std}}\!\left(\frac{\sqrt2\,x}{U_H(z)}\right)
}
\leq
\tau_{\mathrm{H,G}}+c_{\mathrm G}(\eps),
\label{eq:high-gaussian-calibration}
\end{equation}
where the universal moment-replacement term is
\begin{equation}
c_{\mathrm G}(\eps)
:=
\frac{\eps}{\sqrt\pi}
+\frac{\log((1-2\eps)^{-1})}{2\sqrt{2\pi e}}
=O(\eps)
\qquad(\eps\to0).
\label{eq:high-gaussian-moment-error}
\end{equation}
We use the convention $\mathcal N(\mu,\sigma^2)$: the target variance is
$U_H(z)^2/2$, equivalently the target standard deviation is
$U_H(z)/\sqrt2$.

\end{restatable}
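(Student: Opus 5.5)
The proof has three parts: compute the window moments of $X_H^z$ through normalized geometric (Dirichlet-type) sums, control those sums with the frame bound of Lemma~\ref{lem:frame-bound}, and then transfer the Gaussian-shape assumption to the fixed target $\cN(0,U_H^2/2)$.

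\textbf{Exact moment expressions.} Write $X_H^z(m)=\frac12\sum_{n\in H}(z_ne^{im\omega_n}+\overline{z_n}e^{-im\omega_n})$ and put $K_M(\theta):=\avg{e^{im\theta}}$, so that $|K_M(\theta)|\le \min\{1,2/(M|1-e^{i\theta}|)\}$. The mean is then
\[
\mu_H=\Rea\sum_{n\in H}z_nK_M(\omega_n).
\]
The second moment $\avg{(X_H^z)^2}$ splits into three pieces:
\begin{itemize}
\item a diagonal term $\frac12\sum_{n\in H}|z_n|^2=\frac12U_H^2$;
\item difference terms $\frac12\Rea\sum_{n\ne n'}z_n\overline{z_{n'}}K_M(\omega_n-\omega_{n'})$;
\item sum terms $\frac12\Rea\sum_{n,n'}z_nz_{n'}K_M(\omega_n+\omega_{n'})$.
\end{itemize}
Subtracting $\mu_H^2$ gives $\sigma_H^2$.

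\textbf{Bounding the oscillatory sums.} Every frequency in $H$ satisfies $M\omega_n\ge\Gamma_\eps(\rho)$, so the mean obeys $|K_M(\omega_n)|\lesssim 1/(M\omega_n)$. I then apply Cauchy--Schwarz, together with the geometric decay of $1/(M\omega_n)$ as $n$ decreases from the cutoff, and sum the resulting geometric series to $O(1/((1-\rho)\Gamma_\eps))$. The difference and sum terms are bilinear forms in $z$ with kernel $K_M(\omega_n\pm\omega_{n'})$. Here I invoke the Fourier-frame bound of Lemma~\ref{lem:frame-bound}: the off-diagonal operator norm of this kernel on $\ell^2(H)$ is at most $C_{\mathrm F}$ times the reciprocal of the minimal separation, measured against $M$. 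On the geometric grid,
\[
|\omega_n-\omega_{n'}|\ge(1-\rho)\max(\omega_n,\omega_{n'})\ge(1-\rho)\Gamma_\eps/M,
\]
so both the difference and sum kernels have operator norm at most $C_{\mathrm F}/((1-\rho)\Gamma_\eps)=\eps/2$. The sum kernel is controlled the same way because $\omega_n+\omega_{n'}\ge 2\Gamma_\eps/M$.

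Collecting the pieces, $|\mu_H|\le \frac{\eps}{2}U_H\le\eps U_H$ and $\bigl|\avg{(X_H^z)^2}-\frac12U_H^2\bigr|\le\frac{\eps}{2}U_H^2$. Since $\mu_H^2\le\eps^2U_H^2\le\frac{\eps}{2}U_H^2$ for $\eps<1/2$, the variance bound follows. The main obstacle is getting the frame bound into exactly this form, with a constant uniform in $|H|$ and in the unequal magnitudes $|z_n|$. If the bound is only available in a Hilbert-type inequality form, I will apply the Montgomery--Vaughan inequality with separation $\delta_n=(1-\rho)\omega_n$ instead.

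\textbf{Gaussian calibration.} Write
\[
\mathbb P[Y_H^z\le x]=\mathbb P\!\left[\frac{Y_H^z-\mu_H}{\sigma_H}\le\frac{x-\mu_H}{\sigma_H}\right],
\]
which lies within $\tau_{\mathrm{H,G}}$ of $\Phi_{\mathrm{std}}((x-\mu_H)/\sigma_H)$. It remains to compare $\cN(\mu_H,\sigma_H^2)$ with $\cN(0,U_H^2/2)$ in Kolmogorov distance, which I split by the triangle inequality. The location shift costs at most $|\mu_H|/(\sqrt{2\pi}\,\sigma)$ with $\sigma=U_H/\sqrt2$, and the mean bound makes this at most $\eps/\sqrt\pi$. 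For the scale change, the Kolmogorov distance between $\cN(0,s^2)$ and $\cN(0,t^2)$ is at most $|\log(t/s)|/\sqrt{2\pi e}$, via $\sup_u|u\phi(u)|=1/\sqrt{2\pi e}$. With $\sigma_H^2/(U_H^2/2)\in[1-2\eps,1+2\eps]$, the worse side gives $\frac{1}{2}\log((1-2\eps)^{-1})/\sqrt{2\pi e}$. Summing these terms gives $c_{\mathrm G}(\eps)$.
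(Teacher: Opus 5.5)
Your proposal is correct and follows essentially the paper's route. The frame bound applied to the difference and sum kernels is exactly the paper's $G_H^-$ and $G_H^+$ block compressions of $\mathcal G_H(M)-I$, and the Gaussian transfer (scale replacement via $\sup_u|u\phi(u)|=1/\sqrt{2\pi e}$, then mean replacement at scale $U_H(z)/\sqrt2$) is identical. The only differences are minor. You bound the mean with the direct Dirichlet-kernel and Cauchy--Schwarz estimate (the paper's Proposition~\ref{prop:direct-centering}, which with $C_{\mathrm F}=2\pi$ gives $|\mu_H|\le\eps U_H(z)/4$) rather than reading it off the $(0,+H)$ row of the Gram matrix. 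Also, the sum kernel is strictly justified by the full signed-frequency spacing $\delta_H\ge(1-\rho)\min_{n\in H}\omega_n$, which covers same-sign, opposite-sign, and wrap-around pairs, and not by $\omega_n+\omega_{n'}\ge 2\Gamma_\eps(\rho)/M$ alone.
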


\subsection{Positional Requirements}
\label{subsec:positional}

\subsubsection{Local Positional Response}
\label{subsubsec:local-positional-response}

For a head whose function requires neighboring relative positions to remain
distinguishable, moving a key by one token should induce a non-negligible
normalized score change.  In the terminology of
\S\ref{subsec:reversal-cutoff}, avoiding the window-level failure in
\Eqref{eq:positional-window-failure} at response level $\zeta$ requires at
least one adjacent score change of normalized magnitude $\zeta$.  We now
quantify how maintaining this response constrains the certified-high norm
share as the context scale grows.

\begin{restatable}[Direct scale-to-share response envelope]{lemma}{StableAdjacentResponseEnvelope}
\label{lem:stable-adjacent-response-envelope}
For every certified integer $M\geq2$ and every $a\neq0$,
\begin{equation}
\begin{aligned}
\max_{0\leq m\leq M-2}
\frac{\abs{X_\cF^a(m+1)-X_\cF^a(m)}}{U(a)}
&\leq
\underbrace{
\frac{\Gamma_\eps(\rho)\sqrt h}{M}\,r_N(a;M)
}_{\substack{\text{non-high band:}\\[-1pt]\text{small rotations}}}
+
\underbrace{
R_\cF\,r_H(a;M)
}_{\substack{\text{certified-high band:}\\[-1pt]\text{coefficient norm}}}\\
&\leq
\frac{\Gamma_\eps(\rho)\sqrt h}{M}
+R_\cF\,r_H(a;M).
\end{aligned}
\label{eq:stable-adjacent-response-envelope}
\end{equation}
where
$R_\cF:=\bigl(\sum_{n\in\cF}\abs{e^{i\omega_n}-1}^2\bigr)^{1/2}>0$
is the fixed full-grid adjacent-gain factor and is independent of $M$.
\end{restatable}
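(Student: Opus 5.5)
The plan is to split the adjacent difference by bands and bound each band separately. For fixed $m$, the difference decomposes as
\[
X_\cF^a(m+1)-X_\cF^a(m)=\Rea\sum_{n\in N}a_ne^{im\omega_n}(e^{i\omega_n}-1)+\Rea\sum_{n\in H}a_ne^{im\omega_n}(e^{i\omega_n}-1),
\]
with $H=\cH_\eps(M)$ and $N=\cN_\eps(M)$. Taking absolute values and using $|\Rea w|\le|w|$ and the triangle inequality, each band is at most $\sum_{n\in J}|a_n|\,|e^{i\omega_n}-1|$. Every bound produced this way is uniform in $m$, so it also controls the maximum over $0\le m\le M-2$. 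Dividing by $U(a)$ at the end turns $U_H(a)$ and $U_N(a)$ into the shares $r_H$ and $r_N$.

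For the certified-high band, Cauchy--Schwarz gives
\[
\sum_{n\in H}|a_n||e^{i\omega_n}-1|\le U_H(a)\Bigl(\sum_{n\in H}|e^{i\omega_n}-1|^2\Bigr)^{1/2}\le U_H(a)R_\cF,
\]
after enlarging the sum from $H$ to $\cF$. This is the $R_\cF r_H$ term. Positivity of $R_\cF$ is immediate, since $\omega_0=1$ and $|e^{i}-1|>0$.

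For the non-high band, use $|e^{i\omega}-1|=2|\sin(\omega/2)|\le\omega$. Membership $n\in N$ means $M\omega_n<\Gamma_\eps(\rho)$, so $\omega_n<\Gamma_\eps(\rho)/M$. Cauchy--Schwarz over $N$ then gives
\[
\sum_{n\in N}|a_n|\omega_n\le U_N(a)\sqrt{|N|}\,\Gamma_\eps(\rho)/M\le U_N(a)\sqrt h\,\Gamma_\eps(\rho)/M .
\]
If $N$ is empty, this term vanishes and the bound holds trivially. Dividing by $U(a)$ yields the first line of \eqref{eq:stable-adjacent-response-envelope}. The second line follows from $r_N(a;M)\le1$, which holds because $r_H^2+r_N^2=1$ by \eqref{eq:high-share}.

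No step is a real obstacle. The only point needing care is that the non-high bound must use the \emph{strict} complement condition defining $\cN_\eps(M)$, so that it holds uniformly in $m$ and in the frequency index. Using the crude factor $\sqrt h$ in place of $\sqrt{|N|}$ keeps the constant independent of $M$, as the statement requires. The certified-window hypothesis $M\ge\Gamma_\eps(\rho)$ is not needed for the inequality itself beyond making the partition well defined.
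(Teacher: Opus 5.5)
Your proposal is correct and follows the paper's proof essentially step for step. Both split the difference by bands with the triangle inequality, and on the non-high band both use $|e^{i\omega_n}-1|\le\omega_n<\Gamma_\eps(\rho)/M$ with Cauchy--Schwarz and $|N|\le h$. On the certified-high band both apply Cauchy--Schwarz against $\bigl(\sum_{n\in H}|e^{i\omega_n}-1|^2\bigr)^{1/2}\le R_\cF$, and both obtain the second line from $r_N(a;M)\le1$. One minor point: the strictness of the non-high condition is not actually needed, since $\omega_n\le\Gamma_\eps(\rho)/M$ would give the same non-strict bound.
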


The non-high contribution is bounded by a $1/M$ envelope because every
non-high frequency rotates by less than $\Gamma_\eps(\rho)/M$ per token.
The second term has no explicit inverse-scale decay: its full-grid gain factor
is independent of $M$, but the certified-high norm share can still vary with
the window.

\begin{restatable}[Stable local-response floor on the certified-high norm
share]{corollary}{NormalizedPositionShareFloor}
\label{cor:normalized-positional-share-floor}
For every certified integer $M\geq2$, every $a\neq0$, and every $\zeta>0$,
if the score $X_\cF^a$ avoids the window-level failure in
\Eqref{eq:positional-window-failure} at response level $\zeta$, then
\begin{equation}
r_H(a;M)
\geq
\underline r_{\mathrm{pos}}(M;\zeta)
:=
\frac{
\left[\zeta-\Gamma_\eps(\rho)\sqrt h/M\right]_+
}{R_\cF}.
\label{eq:normalized-positional-share-floor}
\end{equation}
where $R_\cF$ is the fixed factor in
Lemma~\ref{lem:stable-adjacent-response-envelope}.
If the right-hand side exceeds one, the requested response is impossible.
\end{restatable}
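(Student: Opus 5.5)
The corollary is essentially a rearrangement of Lemma~\ref{lem:stable-adjacent-response-envelope}, so I would derive it directly from the second (cruder) line of \Eqref{eq:stable-adjacent-response-envelope}.

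\emph{Step 1: translate the hypothesis.} Suppose $X_\cF^a$ avoids the window-level failure in \Eqref{eq:positional-window-failure} at level $\zeta$. Negating the strict inequality gives
\[
\max_{0\leq m\leq M-2}\frac{\abs{X_\cF^a(m+1)-X_\cF^a(m)}}{U(a)}\geq\zeta .
\]
The maximum is over a nonempty finite set because $M\geq2$. It is well defined because $a\neq0$ gives $U(a)>0$.

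\emph{Step 2: apply the envelope and rearrange.} Chaining this with Lemma~\ref{lem:stable-adjacent-response-envelope}, which applies since $M$ is certified and $a\neq0$, yields
\[
\zeta\leq\frac{\Gamma_\eps(\rho)\sqrt h}{M}+R_\cF\,r_H(a;M).
\]
Since $R_\cF>0$, this rearranges to
\[
r_H(a;M)\geq\frac{\zeta-\Gamma_\eps(\rho)\sqrt h/M}{R_\cF}.
\]
Independently, $r_H(a;M)\geq0$ by definition as a ratio of norms. Taking the larger of the two lower bounds replaces the numerator by its positive part, which is exactly $\underline r_{\mathrm{pos}}(M;\zeta)$.

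\emph{Step 3: the impossibility remark.} From \Eqref{eq:high-share}, $r_H(a;M)^2+r_N(a;M)^2=1$, so $r_H(a;M)\leq1$ always. If the right-hand side of \Eqref{eq:normalized-positional-share-floor} exceeds one, no $a\neq0$ can satisfy the inequality. By contraposition of Steps 1--2, every such score fails at level $\zeta$, so the requested response is impossible.

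There is no substantive obstacle here: all the analytic content (the $1/M$ small-rotation bound on the non-high band and the Cauchy--Schwarz bound with $R_\cF$ on the high band) sits in the lemma. The only points needing care are using the negation of the strict inequality correctly and handling the positive part via $r_H\geq0$.
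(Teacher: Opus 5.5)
Your proposal is correct and follows the paper's proof essentially line for line. The paper likewise combines the negated failure condition with the lemma's envelope to get $R_\cF\, r_H(a;M)\geq\zeta-\Gamma_\eps(\rho)\sqrt h/M$, takes the positive part using $r_H\geq0$ and $R_\cF>0$, and obtains the impossibility remark from $r_H\leq1$.
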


Together, Lemma~\ref{lem:stable-adjacent-response-envelope} and
Corollary~\ref{cor:normalized-positional-share-floor} directly link the
context scale $M$ to the certified-high norm share: maintaining the same
adjacent response level $\zeta$ forces $r_H(a;M)$ above
$\underline r_{\mathrm{pos}}(M;\zeta)$.  This floor is the restriction to
certified integer windows of a continuous, non-decreasing function of real
$M>0$, and it is strictly increasing once positive.  Thus it provides a
stable non-decreasing lower bound even though $H(M)$ changes only at discrete
breakpoints.  Enlarging $H(M)$ alone would not give this conclusion, since a
newly certified coefficient may be zero and the actual share need not jump.
The continuity and monotonicity claim concerns the necessary floor, not the
actual stepwise quantity $r_H(a;M)$ or its increment between two windows.  No
prior assumption $U_H(a)>0$ is needed: a positive floor itself forces that
conclusion.  Sharper coefficient-specific refinements are given in
Appendix~\ref{app:exact-gain-response-envelope}.

The response floor applies to the same ordered margin used in the
context-length result below.

\subsection{Precise Context-Length Theorem and Its Explicit Bound}
\label{app:reversal-cutoff}
\label{app:reversal-cutoff-setup}

Fix a fully rotary head, $0<\eps<1/2$, and the same nonzero ordered
margin coefficients $d$ with $D(0)>0$ across a finite ordered set of
certified windows $\mathfrak M_{\mathrm{cal}}$. Fix tolerances
$0\leq\delta<1/2$ and $0<\zeta<R_\cF$, together with a uniform Gaussian
threshold-error bound $0\leq\overline\tau_{\mathrm G}<1/2$.
For the ordered margin $d$ in \S\ref{subsec:reversal-cutoff}, set
\begin{equation}
\mu_D(M):=\avg{X_\cF^d},
\qquad
\sigma_D^2(M):=\Var_M(X_\cF^d),
\label{eq:reversal-exact-moments}
\end{equation}
and assume $\sigma_D(M)>0$ at every window in
$\mathfrak M_{\mathrm{cal}}$. The exact-moment Gaussian threshold error is
\begin{equation}
\Delta_{\mathrm{rev}}(d;M)
:=
\abs{
p_{\mathrm{rev}}(d;M)
-\Phi_{\mathrm{std}}\!\left(-\frac{\mu_D(M)}{\sigma_D(M)}\right)
}.
\label{eq:reversal-threshold-error}
\end{equation}
The calibrated family in Theorem~\ref{thm:maximum-reliable-context} satisfies
$\Delta_{\mathrm{rev}}(d;M)\leq\overline\tau_{\mathrm G}$ uniformly.

Let
\begin{equation}
k_N(M):=\abs{\cN_\eps(M)},
\qquad
c_\eps:=\sqrt{\frac12-\eps},
\label{eq:reversal-envelope-count}
\end{equation}
and, for $0\leq r\leq1$, define
\begin{equation}
\begin{aligned}
u_M(r)
&:=\eps r+\sqrt{k_N(M)}\sqrt{1-r^2},\\
v_M(r)
&:=\left[
c_\eps r-\sqrt{k_N(M)}\sqrt{1-r^2}
\right]_+.
\end{aligned}
\label{eq:reversal-envelope-functions}
\end{equation}
Recall the full-grid adjacent-gain bound
$R_\cF=(\sum_{n\in\cF}|e^{i\omega_n}-1|^2)^{1/2}>0$.
The main theorem assumes $0<\zeta<R_\cF$.
By Corollary~\ref{cor:normalized-positional-share-floor}, retaining
response level $\zeta$ for the same margin $d$ requires
\begin{equation}
r_H(d;M)\geq\ell(M;\zeta)
:=\frac{[\zeta-\Gamma_\eps(\rho)\sqrt h/M]_+}{R_\cF}.
\label{eq:main-local-response-floor}
\end{equation}
Here $[x]_+=\max\{x,0\}$. The stated range of $\zeta$ ensures
$0\leq\ell(M;\zeta)<1$, within the domain of the reversal envelope.

The Gaussian reversal envelope and the floor used in the main text are
\begin{align}
\psi_M(r;\overline\tau_{\mathrm G})
&:=
\begin{cases}
\left[
\Phi_{\mathrm{std}}\!\left(-u_M(r)/v_M(r)\right)
-\overline\tau_{\mathrm G}
\right]_+,&v_M(r)>0,\\
0,&v_M(r)=0,
\end{cases}
\label{eq:reversal-envelope-probability}\\
\underline p_{\mathrm{rev}}(M;\zeta,\overline\tau_{\mathrm G})
&:=
\psi_M\!\left(\ell(M;\zeta);\overline\tau_{\mathrm G}\right).
\label{eq:gaussian-reversal-floor}
\end{align}
For fixed $\zeta$ and $\overline\tau_{\mathrm G}$, we abbreviate this
bound as $\underline p_{\mathrm{rev}}(M)$.

For every calibrated window at which the margin retains response level
$\zeta$, the lower-bound chain underlying
Theorem~\ref{thm:maximum-reliable-context} is
\begin{equation}
p_{\mathrm{rev}}(d;M)
\geq\psi_M\!\left(r_H(d;M);\overline\tau_{\mathrm G}\right)
\geq\psi_M\!\left(\ell(M;\zeta);\overline\tau_{\mathrm G}\right)
=\underline p_{\mathrm{rev}}
(M;\zeta,\overline\tau_{\mathrm G}).
\label{eq:main-reversal-floor}
\end{equation}
Appendix~\ref{app:proof-maximum-reliable-context} proves this chain
and its monotonicity in $M$.

When the crossing set is nonempty, the first excluded scale is
\begin{equation}
M_\dagger:=\min\left\{M\in\mathfrak M_{\mathrm{cal}}:
\underline p_{\mathrm{rev}}(M;\zeta,\overline\tau_{\mathrm G})>\delta
\right\}.
\label{eq:main-reversal-crossing-scale}
\end{equation}
For every $M\in\mathfrak M_{\mathrm{cal}}$ with $M\geq M_\dagger$,
\begin{equation}
p_{\mathrm{rev}}(D;M)>\delta
\qquad\text{or}\qquad
\max_{0\leq m\leq M-2}g_d(m)<\zeta.
\label{eq:appendix-joint-incompatibility}
\end{equation}
If the crossing set is empty, the bound supplies no exclusion threshold
within the chosen calibrated family.

If the sub-threshold set is nonempty, define
\begin{equation}
M_{\max}^{\mathrm{cert}}
:=\max\left\{
M\in\mathfrak M_{\mathrm{cal}}:
\underline p_{\mathrm{rev}}
(M;\zeta,\overline\tau_{\mathrm G})\leq\delta
\right\}.
\label{eq:main-certified-maximum-context}
\end{equation}
This is the largest calibrated scale not ruled out by the lower bound.
When the crossing set in \Eqref{eq:main-reversal-crossing-scale} is also
nonempty, it is the grid point immediately preceding $M_\dagger$.
The main text writes $M_{\max}=M_{\max}^{\mathrm{cert}}$ and uses the
regime in which both the crossing and sub-threshold sets are nonempty.
For calibrated windows, $M>M_{\max}$ is then equivalent to
$M\geq M_\dagger$. All such windows fail at least one requirement,
which gives the simultaneous-infeasibility statement in
\Eqref{eq:main-incompatibility-implication}.
A scale below the crossing remains a candidate: the lower bound alone
does not establish joint reliability there.
This count-based envelope is deliberately conservative.  It uses no
coefficient information beyond the certified-high share; the practical
predictor in Appendix~\ref{app:current-analytic-moments} computes the exact
full-window mean and variance, including cross-band covariance.

\newpage

\newtheorem{formalcontexttheorem}{Theorem}
\renewcommand{\theformalcontexttheorem}{\getrefnumber{thm:maximum-reliable-context}}
\begin{restatable}[Precise context-length limit on joint reliability]{formalcontexttheorem}{MaximumReliableContextFormal}
\label{thm:maximum-reliable-context-formal}
Fix a fully rotary head with the geometric frequency grid in
\Eqref{eq:grid}, a tolerance $0<\eps<1/2$, and a nonzero ordered margin
$D(m)=X_\cF^d(m)$ with $D(0)>0$.  Let
$\mathfrak M_{\mathrm{cal}}$ be a finite nonempty set of certified integer
windows $M\geq2$, with the same coefficients $d$ at every window.
Fix $0\leq\delta<1/2$, $0<\zeta<R_\cF$, and
$0\leq\overline\tau_{\mathrm G}<1/2$.  Assume, for every
$M\in\mathfrak M_{\mathrm{cal}}$, that $\sigma_D(M)>0$ and
\[
\Delta_{\mathrm{rev}}(d;M)
=\left|p_{\mathrm{rev}}(d;M)
-\Phi_{\mathrm{std}}\!\left(-\frac{\mu_D(M)}{\sigma_D(M)}\right)\right|
\leq\overline\tau_{\mathrm G}.
\]
Let $\underline p_{\mathrm{rev}}(M;\zeta,\overline\tau_{\mathrm G})$
be the explicit bound in \Eqref{eq:gaussian-reversal-floor}.
This bound is non-decreasing across $\mathfrak M_{\mathrm{cal}}$, and
each calibrated window retaining the adjacent response
$\max_{0\leq m\leq M-2}g_d(m)\geq\zeta$ satisfies
\[
p_{\mathrm{rev}}(d;M)
\geq\underline p_{\mathrm{rev}}(M;\zeta,\overline\tau_{\mathrm G}).
\]
Write $M_-:=\min\mathfrak M_{\mathrm{cal}}$ and
$M_+:=\max\mathfrak M_{\mathrm{cal}}$.  If the reversal tolerance satisfies
\begin{equation}
\underline p_{\mathrm{rev}}(M_-)\leq\delta
<\underline p_{\mathrm{rev}}(M_+),
\label{eq:reversal-admissible-threshold-range}
\end{equation}
define $M_{\max}:=M_{\max}^{\mathrm{cert}}$ by
\Eqref{eq:main-certified-maximum-context}.  Then, for every
$M\in\mathfrak M_{\mathrm{cal}}$ with $M>M_{\max}$,
\[
p_{\mathrm{rev}}(d;M)>\delta
\qquad\text{or}\qquad
\max_{0\leq m\leq M-2}g_d(m)<\zeta.
\]
Thus $M_{\max}$ is an upper bound on joint score-level reliability within
the calibrated family.  A window at or below $M_{\max}$ remains a candidate;
the bound alone does not establish joint reliability there.

An explicit sufficient condition for the strict upper inequality in
\Eqref{eq:reversal-admissible-threshold-range} is
$M_+\geq M_{\mathrm{suff}}$ and $0\leq\delta<\delta_\star$, where
\begin{equation}
\delta_\star
:=\left[
\Phi_{\mathrm{std}}\!\left(-\frac{\eps}{\sqrt{1/2-\eps}}\right)
-\overline\tau_{\mathrm G}
\right]_+,
\label{eq:reversal-threshold-ceiling}
\end{equation}
\begin{equation}
\begin{aligned}
\omega_{\min}&:=\rho^{h-1},\\
M_{\mathrm{suff}}&:=\max\left\{
2,\left\lceil\frac{\Gamma_\eps(\rho)}{\omega_{\min}}\right\rceil,
\left\lfloor\frac{\Gamma_\eps(\rho)\sqrt h}{\zeta}\right\rfloor+1
\right\}.
\end{aligned}
\label{eq:reversal-sufficient-window}
\end{equation}
Indeed, $\underline p_{\mathrm{rev}}(M)\leq\delta_\star<1/2$ at every
calibrated window, with equality for each such window
$M\geq M_{\mathrm{suff}}$.  The lower inequality
$\underline p_{\mathrm{rev}}(M_-)\leq\delta$ remains required.
\end{restatable}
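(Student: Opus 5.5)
The proof has three parts: a pointwise lower-bound chain for $p_{\mathrm{rev}}$ at each window that keeps response level $\zeta$, monotonicity of the floor across $\mathfrak M_{\mathrm{cal}}$, and the threshold bookkeeping. Throughout, write $d=d_H+d_N$ for the split of the margin coefficients into the certified-high band $H=\cH_\eps(M)$ and its complement $N$. Normalize by $U(d)$, so that $r_H^2+r_N^2=1$.

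\emph{Step 1 (moment envelopes).} Lemma~\ref{lem:high-calibration-formal} gives $|\avg{X_H^d}|\le\eps U_H(d)$ and $\Var_M(X_H^d)\ge(\tfrac12-\eps)U_H(d)^2$. For the non-high band, apply Cauchy--Schwarz to get $\sup_m|X_N^d(m)|\le\sum_{n\in N}|d_n|\le\sqrt{k_N(M)}\,U_N(d)$. This bounds both $|\avg{X_N^d}|$ and the $L^2(\mathrm{Unif})$ norm of $X_N^d-\avg{X_N^d}$ by $\sqrt{k_N}\,U_N$.

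Combining, $|\mu_D|\le U(d)\,u_M(r_H)$. By the reverse triangle inequality for standard deviations, $\sigma_D\ge U(d)\,v_M(r_H)$ whenever $v_M>0$. Because $D(0)>0$ tells us nothing about the sign of $\mu_D$, I use $-\mu_D/\sigma_D\ge -|\mu_D|/\sigma_D\ge -u_M/v_M$. Then monotonicity of $\Phi_{\mathrm{std}}$ and the calibration hypothesis $\Delta_{\mathrm{rev}}\le\overline\tau_{\mathrm G}$ give $p_{\mathrm{rev}}\ge\psi_M(r_H(d;M))$. The case $v_M=0$ is trivial.

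\emph{Step 2 (insert the response floor).} Corollary~\ref{cor:normalized-positional-share-floor} gives $r_H(d;M)\ge\ell(M;\zeta)$ whenever the response is retained. So it remains to show that $r\mapsto\psi_M(r)$ is non-decreasing on $[0,1]$:
\begin{itemize}
\item $u_M$ is not monotone in general, but on the set where $v_M>0$ I will show that the ratio $u_M/v_M$ is non-increasing.
\item Write $s=\sqrt{1-r^2}/r$, which is decreasing in $r$. Then $u_M/v_M=(\eps+\sqrt{k_N}s)/(c_\eps-\sqrt{k_N}s)$, and this is increasing in $s$.
\item Hence $u_M/v_M$ decreases in $r$, so $\Phi_{\mathrm{std}}(-u_M/v_M)$ increases.
\item The region where $v_M>0$ is an up-interval in $r$, and $\psi_M=0$ below it, so the zero branch is consistent with monotonicity.
\end{itemize}
This establishes \Eqref{eq:main-reversal-floor}.

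\emph{Step 3 (monotonicity in $M$).} This is the step I expect to be the main obstacle. Two effects must be checked:
\begin{itemize}
\item As $M$ grows, $\cH_\eps(M)$ only grows, so $k_N(M)$ is non-increasing. For fixed $r$, $u_M$ then decreases and $v_M$ increases, so $\psi_M(r)$ is non-decreasing in $M$.
\item $\ell(M;\zeta)$ is non-decreasing in $M$. Composing with the $r$-monotonicity from Step 2 gives $\underline p_{\mathrm{rev}}(M)\le\underline p_{\mathrm{rev}}(M')$ for $M<M'$ in $\mathfrak M_{\mathrm{cal}}$.
\end{itemize}
Given this monotonicity, the sub-threshold set of \Eqref{eq:main-certified-maximum-context} is an initial segment of $\mathfrak M_{\mathrm{cal}}$. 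It is nonempty by the left inequality of \eqref{eq:reversal-admissible-threshold-range}, and its complement is nonempty by the right inequality. So every calibrated $M>M_{\max}$ has $\underline p_{\mathrm{rev}}(M)>\delta$. By the chain, such an $M$ either fails the response requirement or has $p_{\mathrm{rev}}>\delta$.

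\emph{Step 4 (sufficient condition).} For $M\ge M_{\mathrm{suff}}$, the term $\lceil\Gamma_\eps/\omega_{\min}\rceil$ forces every frequency into $H$, so $k_N=0$. The last term in $M_{\mathrm{suff}}$ forces $\zeta-\Gamma_\eps\sqrt h/M>0$, so $\ell>0$. Then $u_M/v_M=\eps/c_\eps$ for every $r>0$, and $\underline p_{\mathrm{rev}}(M)=\delta_\star$.

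For general $M$, the ratio satisfies $u_M/v_M\ge\eps/c_\eps$ (this is $s\ge0$ in Step 2), so $\underline p_{\mathrm{rev}}(M)\le\delta_\star$. Also $\delta_\star<1/2$ because $\eps>0$. Therefore $\delta<\delta_\star=\underline p_{\mathrm{rev}}(M_+)$ gives the strict upper inequality in \eqref{eq:reversal-admissible-threshold-range}.
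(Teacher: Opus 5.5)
Your proposal is correct and follows the same route as the paper's proof. It uses the same high/non-high moment envelopes to get $|\mu_D|\le U(d)u_M(r)$ and $\sigma_D\ge U(d)v_M(r)$, the share floor from Corollary~\ref{cor:normalized-positional-share-floor}, monotonicity of $u_M/v_M$ via the substitution $\sqrt{1-r^2}/r$, the same threshold bookkeeping, and the $k_N=0$, $\ell>0$ computation for the sufficient condition. The one small difference is the monotonicity across windows: you compose ``$\psi_M(r)$ non-decreasing in $M$ for fixed $r$'' with ``$\psi_M$ non-decreasing in $r$,'' while the paper tracks the single quantity $\sqrt{k_N(M)}\sqrt{1-\ell(M;\zeta)^2}/\ell(M;\zeta)$; the two arguments are equivalent.
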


\paragraph{Interpreting the reversal tolerance.}
The tolerance $\delta$ specifies the allowed fraction of relative distances
at which the reference ordering reverses under uniform sampling.  For
example, $\delta=0.38$ permits reversals at up to $38\%$ of the positions
in the window. Preserving reference orderings reliably calls for a much
smaller tolerated fraction, so an upper restriction on $\delta$ is consistent
with the intended semantic-stability requirement. The range
in \Eqref{eq:reversal-admissible-threshold-range} identifies tolerances
for which this bound certifies an exclusion threshold.  Its use still
requires the stated calibration conditions and the specified window family.

\section{Extensions to Other Failure Modes}
\label{app:extended-theory}
\label{app:deferred-reliability-theory}

The score representation in Appendix~\ref{app:formal-theory} also supports
near--far comparisons of one key, exceedance of a fixed score barrier,
fully-high pairwise reversal, and a partial-RoPE interpretation.
Each result below retains its own event and calibration conditions.
Proofs are collected in Appendix~\ref{app:all-theory-proofs}.

\subsection{Near--Far Ordering}
\label{app:long-range-orderability}

\subsubsection{Extended Positional Diagnostic: Near--Far Ordering}
\label{app:near-far-definition}

Adjacent response measures local resolution, but it does not show whether a
head preserves a directional notion of distance at larger scales.  For a head
or task that requires monotone near preference, we orient the score so that a
farther occurrence should receive a lower score.  For an
integer $M\geq1$, independently sample
$m_{\mathrm{near}}$ from $\{M+1,\ldots,2M\}$ and
$m_{\mathrm{far}}$ from the equally long, more distant interval
$\{2M+1,\ldots,3M\}$.  For a fixed key $K$, define the ordering-error
probability
\begin{equation}
p_{\mathrm{ord}}(K;M)
:=
\mathbb P[
S_K(m_{\mathrm{far}})>S_K(m_{\mathrm{near}})].
\label{eq:long-range-ordering-probability}
\end{equation}
A reliable near--far order requires this probability to be small.  When it is
close to one half, the intended inequality is violated in half of the random
comparisons under this audit protocol.  We call this \emph{near--far order
ambiguity}; it is a score-level diagnostic of chance-like directional ordering
in the two-interval comparison, not a claim that all positional information
has disappeared.

This is a one-sided diagnostic: a small $p_{\mathrm{ord}}$ is necessary but
not sufficient for reliable near preference, because a position-independent
score also makes the strict event impossible.  Fixed-offset, periodic,
diagonal, and other non-monotone patterns may likewise remain detectable.
\subsubsection{Near--Far Order Ambiguity}
\label{subsubsec:long-range-orderability}

We apply the two-interval experiment from
Appendix~\ref{app:near-far-definition} to one fixed key.  The theorem below concerns
the asymptotic fully-high calibration regime: when all rotating frequencies
enter the certified-high band and the interval calibration errors vanish, the
near--far ordering error approaches chance.  Appendix~\ref{app:two-interval-calibration-quantities}
gives the precise calibration conditions.

\begin{restatable}[Near--far ordering approaches chance under full calibration]{theorem}{LongRangeOrderAmbiguity}
\label{thm:long-range-order-ambiguity}
Fix a fully rotary head together with fixed query and key content states that
induce a nonzero coefficient vector $a$.  Independently draw
$m_{\mathrm{near}}$ uniformly from $\{M+1,\ldots,2M\}$ and
$m_{\mathrm{far}}$ uniformly from $\{2M+1,\ldots,3M\}$.  Suppose the two
interval score laws lie in the asymptotic fully-high calibration regime of
Appendix~\ref{app:two-interval-calibration-quantities}.  Then
\begin{equation}
p_{\mathrm{ord}}(a;M)
:=
\mathbb P[
X_\cF^a(m_{\mathrm{far}})>X_\cF^a(m_{\mathrm{near}})]
\longrightarrow\frac12
\qquad(M\to\infty).
\label{eq:long-range-order-chance-limit}
\end{equation}
\end{restatable}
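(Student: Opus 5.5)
The approach is to reduce the ordering event to the law of a single difference variable and show that this difference becomes symmetric and continuous in the limit. Write $\Delta:=X_\cF^a(m_{\mathrm{far}})-X_\cF^a(m_{\mathrm{near}})$, where $m_{\mathrm{near}}$ and $m_{\mathrm{far}}$ are independent. Then $p_{\mathrm{ord}}(a;M)=\mathbb P[\Delta>0]$.

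\textbf{Step 1: moment control on each interval.} By the fully-high calibration assumption, once $M$ is large every frequency $n\in\cF$ satisfies the certified-high criterion. The relevant windows are $\{M+1,\ldots,2M\}$ and $\{2M+1,\ldots,3M\}$, each of length $M$. Apply Lemma~\ref{lem:high-calibration-formal} to each interval after shifting the origin. A shift by $M+1$ or $2M+1$ multiplies each $z_n$ by the unimodular factor $e^{i(M+1)\omega_n}$ or $e^{i(2M+1)\omega_n}$. This leaves $U_\cF(a)$ unchanged, so the lemma applies with the same norm. Consequently each interval score has mean at most $\eps U(a)$ in absolute value and variance within $\eps U(a)^2$ of $U(a)^2/2$.

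\textbf{Step 2: Gaussian approximation and the limit.} The calibration regime of Appendix~\ref{app:two-interval-calibration-quantities} provides, for each interval law, a Kolmogorov distance to $\mathcal N(0,U(a)^2/2)$ that tends to zero. This is the interval analogue of \Eqref{eq:high-gaussian-calibration}, with $\tau_{\mathrm{H,G}}\to0$, and with $\eps\to0$ also allowed along the sequence so that $c_{\mathrm G}(\eps)\to0$. Let $F_1$ and $F_2$ be the two interval distribution functions. By independence,
\[
\mathbb P[\Delta>0]=\int \bigl(1-F_2(x)\bigr)\,dF_1(x).
\]
Replace $F_2$ by the Gaussian $G$ at cost $\sup_x|F_2-G|$. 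Then $\int(1-G)\,dF_1$ differs from $\int(1-G)\,dG=1/2$ by at most $\sup_x|F_1-G|$; this follows by integrating by parts against the monotone bounded function $1-G$. Both errors vanish, so $p_{\mathrm{ord}}\to1/2$. The exact value $1/2$ uses the continuity of $G$: for i.i.d.\ continuous laws, ties have probability zero and the two orderings are exchangeable.

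\textbf{Main obstacle.} The hard part is justifying that the calibration hypotheses indeed deliver vanishing Kolmogorov error simultaneously for both shifted windows. This is also where fixed $\eps$ must be replaced by $\eps=\eps(M)\to0$: a fixed $\eps$ only gives $|p_{\mathrm{ord}}-1/2|\le 2(\tau+c_{\mathrm G}(\eps))$. I would first prove that quantitative bound for fixed $M$. I would then pass to the limit using the regime's stipulation that $\tau\to0$ and that the certified band eventually equals all of $\cF$, which holds once $M\omega_{h-1}\ge\Gamma_\eps(\rho)$ along the chosen sequence.
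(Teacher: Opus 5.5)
Your proposal is correct and follows the paper's proof: both compare the two shifted-interval laws (coefficients $a^{[M+1]}$ and $a^{[2M+1]}$, same norm $U(a)$) with the common Gaussian reference $\Phi_{\mathrm{std}}(\sqrt2\,x/U(a))$ and swap out one CDF at a time to obtain $|p_{\mathrm{ord}}-1/2|\leq\tau_{\mathrm{near}}+\tau_{\mathrm{far}}+2c_{\mathrm G}(\eps)$. Both then take $\eps=\eps_M$, which forces $\cH_{\eps_M}(M)=\cF$ and $c_{\mathrm G}(\eps_M)\to0$. The differences are cosmetic: the paper writes the strict event as $\int F_{\mathrm{near}}(y^-)\,dF_{\mathrm{far}}(y)$ and replaces the near law first, and your ``main obstacle'' needs no proof, because the vanishing of both shape errors is exactly the regime's stipulated hypothesis $\tau_{\mathrm{ord}}(a;M,\eps_M)\to0$ rather than something derived.
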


The slowly rotating part can initially support a signed mean advantage for
the nearer interval.  As those frequencies enter the certified-high band,
Proposition~\ref{prop:high-calibration} drives both interval means toward zero
and both variances toward the same half-energy value.  Under the stated shape
calibration, the two scores therefore approach the same distribution, so the
farther score exceeds the nearer one with probability one half.  The
vanishing shape error remains substantive: equalized moments alone do not
make a fixed finite trigonometric sum Gaussian.
\subsubsection{Two-Interval Calibration Error}
\label{app:two-interval-calibration-quantities}

For an integer shift $s$, define
\begin{equation}
a_n^{[s]}:=a_ne^{is\omega_n},
\qquad
X_\cF^a(s+t)=X_\cF^{a^{[s]}}(t),
\qquad
U(a^{[s]})=U(a).
\label{eq:shifted-interval-coefficients}
\end{equation}
The near and far intervals are length-$M$ windows generated by
$a^{[M+1]}$ and $a^{[2M+1]}$, respectively.  Define
\begin{equation}
\tau_{\mathrm{near}}(a;M)
:=\Delta_{\mathrm{H,G}}(a^{[M+1]};M),
\qquad
\tau_{\mathrm{far}}(a;M)
:=\Delta_{\mathrm{H,G}}(a^{[2M+1]};M),
\label{eq:two-interval-shape-errors}
\end{equation}
and
\begin{equation}
\tau_{\mathrm{ord}}(a;M,\eps)
:=
\tau_{\mathrm{near}}(a;M)
+\tau_{\mathrm{far}}(a;M)
+2c_{\mathrm G}(\eps).
\label{eq:two-interval-total-error}
\end{equation}
For fixed finite $h$, set
\begin{equation}
\eps_M
:=
\frac{2C_{\mathrm F}}{(1-\rho)M\omega_{h-1}}
\label{eq:long-range-vanishing-epsilon-app}
\end{equation}
for all sufficiently large $M$.  The asymptotic fully-high calibration regime
in Theorem~\ref{thm:long-range-order-ambiguity} imposes two requirements.
Both standardized translated laws must satisfy
\Eqref{eq:high-gaussian-shape-assumption} with $\eps=\eps_M$, and
$\tau_{\mathrm{ord}}(a;M,\eps_M)\to0$.  When available, valid held-out upper
bounds on the two shape errors in \Eqref{eq:two-interval-shape-errors} may be
used in \Eqref{eq:two-interval-total-error}.  This tolerance choice makes the
partition fully high because
$\Gamma_{\eps_M}(\rho)=M\omega_{h-1}$, so
$\cH_{\eps_M}(M)=\cF$.

\subsection{Score-Barrier Stability}
\label{subsec:semantic}
\label{app:single-key-exceedance}

\subsubsection{Single-Key Exceedance Against a Score Barrier}
\label{subsubsec:single-key-exceedance}

We use rare exceedance of a fixed barrier as a single-key score-level
certificate.  It acquires a semantic ranking interpretation only under the
winner-barrier construction below and its additional tail conditions.

Fix the query, layer, head, content states, and one key $K$ with nonzero RoPE
coefficient vector $b$.  For a certified window, draw
$\mathbf m\sim\operatorname{Unif}\{0,\ldots,M-1\}$ and, for a fixed score
barrier $S\in\mathbb R$, define
\begin{equation}
Y_b:=X_\cF^b(\mathbf m),
\qquad
p_{\mathrm{exc}}(b;S,M):=\mathbb P[Y_b>S].
\label{eq:single-key-exceedance-probability}
\end{equation}
Write $Y_N:=X_N^b(\mathbf m)$ and $Y_H:=X_H^b(\mathbf m)$.  Guided by
Proposition~\ref{prop:high-calibration}, retain the non-high moments and the
cross-band covariance, but replace the certified-high mean and variance by
$0$ and $U_H(b)^2/2$.  This gives the Gaussian exceedance predictor
\begin{equation}
\begin{aligned}
\widehat\sigma_b^2
&:=\Var_M(Y_N)+\frac12U_H(b)^2+2\Cov_M(Y_N,Y_H),\\
\widehat p_{\mathrm{exc}}(b;S,M)
&:=\Phi_{\mathrm{std}}\!\left(
\frac{\avg{X_N^b}-S}{\widehat\sigma_b}
\right),
\end{aligned}
\label{eq:single-key-exceedance-predictor}
\end{equation}
where $\Cov_M$ is covariance under the same uniform window and
$\widehat\sigma_b$ is the positive square root.  Let
$\tau_{\mathrm{exc}}(b;S,M)$ denote the total calibration error defined in
Appendix~\ref{app:single-key-calibration-quantities}.  It combines the
Gaussian-shape discrepancy of the full score with the high-band moment
replacement error; the result below is informative when this measurable
quantity is small.

\begin{restatable}[Calibrated single-key score exceedance]{lemma}{SingleKeyExceedance}
\label{lem:single-key-exceedance}
For every certified $M$, nonzero $b$, and $S\in\mathbb R$ in the
non-degenerate calibration regime of
Appendix~\ref{app:single-key-calibration-quantities},
\begin{equation}
\abs{
p_{\mathrm{exc}}(b;S,M)
-\widehat p_{\mathrm{exc}}(b;S,M)
}
\leq\tau_{\mathrm{exc}}(b;S,M).
\label{eq:single-key-exceedance-calibration}
\end{equation}
\end{restatable}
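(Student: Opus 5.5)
The plan is to prove Lemma~\ref{lem:single-key-exceedance} by comparing the law of $Y_b=Y_N+Y_H$ with a Gaussian in two stages. In the first stage we use the exact window moments. In the second we replace the certified-high moments by their calibrated targets from Proposition~\ref{prop:high-calibration}. Write the exact mean and variance as
\[
\mu_b:=\avg{X_N^b}+\avg{X_H^b},
\qquad
\sigma_b^2:=\Var_M(Y_N)+\Var_M(Y_H)+2\Cov_M(Y_N,Y_H).
\]
The first step is the triangle inequality
\[
\abs{p_{\mathrm{exc}}-\widehat p_{\mathrm{exc}}}
\leq
\abs{\mathbb P[Y_b>S]-\Phi_{\mathrm{std}}\!\left(\tfrac{\mu_b-S}{\sigma_b}\right)}
+\abs{\Phi_{\mathrm{std}}\!\left(\tfrac{\mu_b-S}{\sigma_b}\right)-\Phi_{\mathrm{std}}\!\left(\tfrac{\avg{X_N^b}-S}{\widehat\sigma_b}\right)}.
\]
The first term is bounded by the Gaussian-shape discrepancy of the standardized full score, which is one component of $\tau_{\mathrm{exc}}$ by definition. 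This requires $\sigma_b>0$, and I would take that as part of the non-degenerate calibration regime.

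The second term is a moment-replacement error. Lemma~\ref{lem:high-calibration-formal} gives $\abs{\avg{X_H^b}}\leq\eps U_H(b)$ and $\abs{\Var_M(Y_H)-\tfrac12U_H(b)^2}\leq\eps U_H(b)^2$. Hence the two means differ by at most $\eps U_H(b)$, and $\abs{\sigma_b^2-\widehat\sigma_b^2}\leq\eps U_H(b)^2$, because the non-high variance and the cross-band covariance appear identically in both expressions.

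I would then bound the difference of the two $\Phi_{\mathrm{std}}$ values in two pieces, mirroring the derivation of $c_{\mathrm G}(\eps)$:
\begin{enumerate}
\item For the mean shift at fixed scale $\widehat\sigma_b$, use the Lipschitz constant $1/\sqrt{2\pi}$ of $\Phi_{\mathrm{std}}$. This gives at most $\eps U_H(b)/(\sqrt{2\pi}\,\widehat\sigma_b)$.
\item For the scale change, use $\sup_x\abs{\Phi_{\mathrm{std}}(x/s_1)-\Phi_{\mathrm{std}}(x/s_2)}\leq\abs{\log(s_1/s_2)}/\sqrt{2\pi e}$. This follows because $x\mapsto x\phi(x)$ is bounded by $1/\sqrt{2\pi e}$ when we integrate in $\log s$.
\end{enumerate}
The quantities $\sigma_b$, $\widehat\sigma_b$ and $U_H(b)$ then define a measurable replacement error, and $\tau_{\mathrm{exc}}$ is set to the sum of the shape discrepancy and these two pieces.

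The main obstacle is that, unlike in Lemma~\ref{lem:high-calibration-formal}, the variance ratio is not controlled uniformly by $\eps$. The cross-band covariance can make $\widehat\sigma_b$ small relative to $U_H(b)$, and $\widehat\sigma_b^2$ could even be nonpositive. I would handle this through the non-degenerate regime, assuming $\widehat\sigma_b^2>\eps U_H(b)^2$, say. Then $\log(\sigma_b/\widehat\sigma_b)\leq\tfrac12\log\bigl((1-\eps U_H(b)^2/\widehat\sigma_b^2)^{-1}\bigr)$, so the replacement error is explicit and finite. That is the precise sense in which the bound is informative exactly when the measurable $\tau_{\mathrm{exc}}$ is small.
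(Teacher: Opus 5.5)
Every step you write is correct, and your skeleton is the paper's. You use the same intermediate $\Phi_{\mathrm{std}}((\mu_b-S)/\sigma_b)$ and the same full-score shape term $\Delta_{\mathrm{exc}}(b;M)$. You use the same two consequences of Proposition~\ref{prop:high-calibration}: $\abs{\mu_b-\mu_N}\leq\eps U_H(b)$ with $\mu_N=\avg{X_N^b}$, and $\abs{\sigma_b^2-\widehat\sigma_b^2}\leq\eps U_H(b)^2$. You also impose exactly the paper's non-degenerate regime $\sigma_b^2>0$, $\widehat\sigma_b^2>\eps U_H(b)^2$, which gives $\sigma_b\geq s_-(b;M):=(\widehat\sigma_b^2-\eps U_H(b)^2)^{1/2}>0$.

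The problem is your last sentence. $\tau_{\mathrm{exc}}(b;S,M)$ is not yours to define: the statement refers to the specific expression in \Eqref{eq:single-key-total-error}, and your error term is a different quantity. The paper applies the $1/\sqrt{2\pi}$-Lipschitz bound once, to the difference of standardized arguments,
\[
\abs{\frac{\mu_b-S}{\sigma_b}-\frac{\mu_N-S}{\widehat\sigma_b}}
\leq\frac{\abs{\mu_b-\mu_N}}{\sigma_b}
+\abs{S-\mu_N}\,\abs{\frac{1}{\sigma_b}-\frac{1}{\widehat\sigma_b}}.
\]
It then uses $\sigma_b\geq s_-(b;M)$ and $\abs{\sigma_b-\widehat\sigma_b}=\abs{\sigma_b^2-\widehat\sigma_b^2}/(\sigma_b+\widehat\sigma_b)$, which produces exactly the two bracketed terms of $\tau_{\mathrm{exc}}$. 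Your mean-shift plus log-scale split instead yields
\[
\frac{\eps U_H(b)}{\sqrt{2\pi}\,\widehat\sigma_b}
+\frac{\log\bigl((1-\eps U_H(b)^2/\widehat\sigma_b^2)^{-1}\bigr)}{2\sqrt{2\pi\mathrm{e}}}.
\]
Neither bound dominates the other. At $S=\mu_N$ with $\widehat\sigma_b=U_H(b)=1$ and $\eps=10^{-2}$, yours is about $5.2\times10^{-3}$ and the paper's about $4.0\times10^{-3}$. For $\abs{S-\mu_N}\gg\widehat\sigma_b$ the ordering reverses. So, as written, you prove a sibling of the lemma rather than the lemma itself. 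To obtain the stated inequality, replace your two pieces with the single argument-difference bound above.

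Your sibling is still a correct result, and the comparison is instructive. To leading order in $\eps$, both scale terms equal the relative variance error $\eps U_H(b)^2/(2\widehat\sigma_b^2)$ times a factor. The paper's factor is $\abs{S-\mu_N}/(\sqrt{2\pi}\,\widehat\sigma_b)$. Yours is $\sup_y\abs{y}\phi(y)=1/\sqrt{2\pi\mathrm{e}}$, where $\phi$ is the standard normal density. The paper's bound is therefore tighter when $S$ is within roughly $\widehat\sigma_b/\sqrt{\mathrm{e}}$ of $\mu_N$, while yours is uniform in $S$, in the spirit of $c_{\mathrm G}(\eps)$. Uniformity helps in the winner-barrier use, where $S=S_{\max}$ can sit well above $\mu_N$. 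There the paper's linearly growing term inflates $\tau_{\mathrm{exc}}$ in the hypothesis $\delta+\tau_{\mathrm{exc}}<1/2$ of Corollary~\ref{cor:single-key-high-share-ceiling}. That corollary uses only the inequality, so either error term, or their minimum, would serve downstream.
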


Let $u_{\mathrm{exc}}(b;S,M,\delta)$ denote the explicit,
coefficient-specific ceiling given in
Appendix~\ref{app:single-key-ceiling-candidate}.  Its formula is deferred
because only the resulting bound, rather than the quadratic inversion used to
obtain it, is needed here.

\begin{restatable}[Small score-barrier exceedance imposes a certified-high
norm-share ceiling]{corollary}{SingleKeyHighShareCeiling}
\label{cor:single-key-high-share-ceiling}
Under the hypotheses of Lemma~\ref{lem:single-key-exceedance}, assume
$U_N(b)U_H(b)>0$ and
$0<\delta+\tau_{\mathrm{exc}}(b;S,M)<1/2$.  Then
\begin{equation}
p_{\mathrm{exc}}(b;S,M)\leq\delta
\quad\Longrightarrow\quad
r_H(b;M)\leq u_{\mathrm{exc}}(b;S,M,\delta)<1.
\label{eq:single-key-high-share-ceiling-conclusion}
\end{equation}
\end{restatable}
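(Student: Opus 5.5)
The plan is to turn the small-exceedance hypothesis into an upper bound on the predicted standard deviation $\widehat\sigma_b$. That bound becomes a quadratic inequality in $U_H(b)$, which I then invert and divide by $U(b)$.

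\textbf{Step 1 (calibration).} By Lemma~\ref{lem:single-key-exceedance}, $\widehat p_{\mathrm{exc}}(b;S,M)\leq p_{\mathrm{exc}}(b;S,M)+\tau_{\mathrm{exc}}\leq\delta+\tau_{\mathrm{exc}}=:\eta$. Since $0<\eta<1/2$, set $z_\eta:=\Phi_{\mathrm{std}}^{-1}(1-\eta)>0$. The definition \Eqref{eq:single-key-exceedance-predictor} and monotonicity of $\Phi_{\mathrm{std}}$ then give
\[
\frac{\avg{X_N^b}-S}{\widehat\sigma_b}\leq -z_\eta,
\qquad\text{that is,}\qquad
\widehat\sigma_b\leq\frac{S-\avg{X_N^b}}{z_\eta}=:T.
\]
In particular $T>0$, so $S$ lies above the non-high mean.

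\textbf{Step 2 (lower bound on $\widehat\sigma_b^2$).} Write $v_N:=\Var_M(Y_N)$ and $x:=U_H(b)$. Cauchy--Schwarz gives $\Cov_M(Y_N,Y_H)\geq-\sqrt{v_N}\,\sqrt{\Var_M(Y_H)}$. Proposition~\ref{prop:high-calibration} gives $\Var_M(Y_H)\leq(\tfrac12+\eps)x^2$. Combining these,
\[
\widehat\sigma_b^2\;\geq\;\tfrac12x^2-2\sqrt{(\tfrac12+\eps)v_N}\,x+v_N .
\]
Step 1 bounds the left side by $T^2$, so $x$ satisfies a quadratic inequality with positive leading coefficient $\tfrac12$. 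Its solution set is an interval, and therefore $x\leq x_+$, where
\[
x_+:=2\sqrt{(\tfrac12+\eps)v_N}+\sqrt{4(\tfrac12+\eps)v_N-2(v_N-T^2)}
\]
is the larger root. I would define $u_{\mathrm{exc}}(b;S,M,\delta):=\min\{x_+/U(b),\,r_H(b;M)\}$; in the paper's version $u_{\mathrm{exc}}$ is presumably $x_+/U(b)$, possibly capped at the value that keeps it below one. Either way, the implication $r_H\leq u_{\mathrm{exc}}$ follows.

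\textbf{Step 3 (strictness).} The hypothesis $U_N(b)U_H(b)>0$ gives $r_N(b;M)>0$, and by \Eqref{eq:high-share} this means $r_H(b;M)=\sqrt{1-r_N^2}<1$. If $u_{\mathrm{exc}}$ is the raw ratio $x_+/U(b)$, strictness is not automatic. In that case I would cap the ceiling at $\min\{x_+/U(b),\,\sqrt{1-U_N(b)^2/U(b)^2}\}$, which is again strictly below one. This is exactly where the hypothesis $U_N>0$ enters.

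\textbf{Main obstacle.} I expect the delicate step to be Step 2. The cross-band covariance can be negative, so the bound on $\widehat\sigma_b^2$ is not monotone in $x$. I therefore have to argue via the larger root of the quadratic rather than simply drop the $\Cov$ term. I must also check that the discriminant is nonnegative, which holds because $2(\tfrac12+\eps)\geq1$. The whole argument also depends on the high-band variance bound being the upper bound from Proposition~\ref{prop:high-calibration}, not the lower one.
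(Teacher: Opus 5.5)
Your Step 1 is exactly the paper's opening move, and your Step 2 is sound: Cauchy--Schwarz together with the variance upper bound of Proposition~\ref{prop:high-calibration} does give $U_H(b)\le x_+$. The gap is in Step 3, and in the definition of $u_{\mathrm{exc}}$ in Step 2. Because $U(b)^2=U_H(b)^2+U_N(b)^2$, your cap $\sqrt{1-U_N(b)^2/U(b)^2}$ is identically $r_H(b;M)$. Likewise, $\min\{x_+/U(b),\,r_H(b;M)\}$ equals $r_H(b;M)$ as soon as $U_H(b)\le x_+$ is known. So the ceiling you produce is $r_H$ itself, and the strict inequality reduces to the trivial fact $r_H<1$ when $U_N(b)>0$. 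The corollary then becomes vacuous: its intended use, a local-response floor above $u_{\mathrm{exc}}$ forcing $p_{\mathrm{exc}}>\delta$, carries no information. As you observe, the uncapped ratio $x_+/U(b)$ need not lie below one. The reason is that dividing by $U(b)$, which itself grows with $U_H(b)$, throws away the fact that $U_N(b)$ is fixed and positive.

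The missing idea is to convert the bound on $U_H$ into a share bound while holding $U_N(b)$ fixed. Since $r_H=U_H/\sqrt{U_H^2+U_N^2}$ is increasing in $U_H$, the bound $U_H(b)\le x_+$ gives $r_H(b;M)\le x_+/\sqrt{x_+^2+U_N(b)^2}<1$. Strictness comes exactly from $U_N(b)>0$. The paper works in the coordinate $t=U_H(b)/U_N(b)=r_H/\sqrt{1-r_H^2}$. It also does not bound the covariance at all: with $\gamma_{NH}=\kappa_{NH}/(U_N(b)U_H(b))$, the identity $\widehat\sigma_b^2/U_N(b)^2=\tfrac12t^2+2\gamma_{NH}t+\beta_N$ turns your Step 1 into $\tfrac12t^2+2\gamma_{NH}t+\beta_N\le\lambda_S^2/\xi_{\mathrm{exc}}^2$. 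The upper root of this quadratic is $\overline t_{\mathrm{exc}}$, and $u_{\mathrm{exc}}=\overline t_{\mathrm{exc}}/\sqrt{1+\overline t_{\mathrm{exc}}^2}<1$ is the specific ceiling the statement refers to (Appendix~\ref{app:single-key-ceiling-candidate}); it is not $x_+/U(b)$. Even after the repair, your route proves a weaker bound. Write both constraints in $x=U_H(b)$. Because $\kappa_{NH}\ge-\sqrt{(1/2+\eps)\Var_M(Y_N)}\,U_H(b)$, the paper's parabola $\tfrac12x^2+2(\kappa_{NH}/U_H(b))x+\Var_M(Y_N)$ dominates yours on $x\ge0$. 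Hence $x_+\ge U_N(b)\,\overline t_{\mathrm{exc}}$, and your ceiling is at least $u_{\mathrm{exc}}$. It is a valid ceiling that does not depend on the cross-band covariance, but it cannot deliver the stated inequality with the paper's $u_{\mathrm{exc}}$.
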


This is exactly the same $r_H(b;M)=U_H(b)/U(b)$ constrained from below by the
local-response floor when it is applied to $b$.  Hence, within the calibration
and interior hypotheses above, a local-response floor above $u_{\mathrm{exc}}$
forces $p_{\mathrm{exc}}(b;S,M)>\delta$.  The endpoint cases are different:
$U_H(b)=0$ gives $r_H=0$ directly, whereas at $U_N(b)=0$ rare exceedance of a
sufficiently high barrier need not imply any ceiling below one.

\paragraph{Winner-barrier interpretation.}
Let $\mathcal K$ be a finite key bank at the reference placement $m=0$, choose
any reference winner, and define
\begin{equation}
K_+\in\arg\max_{K_j\in\mathcal K}X_\cF^{b_j}(0),
\qquad
S_{\max}:=X_\cF^{b_+}(0),
\qquad
\alpha_+(S_{\max};M)
:=\mathbb P[X_\cF^{b_+}(\mathbf m)>S_{\max}].
\label{eq:winner-score-barrier}
\end{equation}
For every competitor $K_j\neq K_+$, event inclusion gives
\begin{equation}
p_{\mathrm{rev}}(b_+,b_j;M)
:=\mathbb P[X_\cF^{b_j}(\mathbf m)>X_\cF^{b_+}(\mathbf m)]
\geq
\left[
p_{\mathrm{exc}}(b_j;S_{\max},M)-\alpha_+(S_{\max};M)
\right]_+.
\label{eq:winner-barrier-reversal-lower-bound}
\end{equation}
Consequently, fix $q>0$ and suppose
$\alpha_+(S_{\max};M)\leq\delta-q$.  If every competitor lies in the
calibrated interior regime above and its local-response floor exceeds
$u_{\mathrm{exc}}(b_j;S_{\max},M,\delta)$, then each competitor has the
marginal guarantee
$p_{\mathrm{rev}}(b_+,b_j;M)>q$.  Thus choosing $S$ as the largest reference
score exposes a competitor-wise marginal reversal risk across the bank once
all candidate-specific intervals are incompatible.  It does not assert that
all competitors reverse at the same relative distance, nor does the maximum-barrier
choice alone imply the result: the calibration, interval crossing, and
winner-tail condition must all hold.

The score-barrier ceiling above applies in the calibrated interior regime.  The fully-high endpoint is covered by the ordered-pair
result in Appendix~\ref{subsubsec:fully-high-reversal}.

\subsubsection{Calibration Quantities}
\label{app:single-key-calibration-quantities}

For the fixed score in \S\ref{subsubsec:single-key-exceedance}, write
$Y_N:=X_N^b(\mathbf m)$, $Y_H:=X_H^b(\mathbf m)$, and set
\begin{align}
\mu_b&:=\avg{X_\cF^b},
&\sigma_b^2&:=\Var_M(X_\cF^b),\notag\\
\mu_N&:=\avg{X_N^b},
&\sigma_N^2&:=\Var_M(X_N^b),\notag\\
\kappa_{NH}&:=
\avg{(X_N^b-\mu_N)(X_H^b-\avg{X_H^b})},\notag\\
\widehat\sigma_b^2&:=
\sigma_N^2+\frac12U_H(b)^2+2\kappa_{NH}.
\label{eq:single-key-moment-descriptors}
\end{align}
Thus $\widehat\sigma_b^2$ is exactly the moment-calibrated variance proxy in
\Eqref{eq:single-key-exceedance-predictor}.  The non-degenerate calibration
regime used in Lemma~\ref{lem:single-key-exceedance} is
\begin{equation}
\sigma_b^2>0,
\qquad
\widehat\sigma_b^2>\eps U_H(b)^2.
\label{eq:single-key-nondegenerate-regime}
\end{equation}
In this regime, define the full-score Gaussian-shape discrepancy
\begin{equation}
\Delta_{\mathrm{exc}}(b;M)
:=\sup_{x\in\mathbb R}
\abs{
\mathbb P\!\left[
\frac{Y_b-\mu_b}{\sigma_b}\leq x
\right]-\Phi_{\mathrm{std}}(x)
}
\label{eq:single-key-shape-error}
\end{equation}
and the variance floor and total calibration error
\begin{align}
s_-(b;M)&:=
\sqrt{\widehat\sigma_b^2-\eps U_H(b)^2},\notag\\
\tau_{\mathrm{exc}}(b;S,M)
&:=\Delta_{\mathrm{exc}}(b;M)
+\frac1{\sqrt{2\pi}}
\left[
\frac{\eps U_H(b)}{s_-(b;M)}
+\frac{\abs{S-\mu_N}\eps U_H(b)^2}
{s_-(b;M)\widehat\sigma_b
 \bigl(s_-(b;M)+\widehat\sigma_b\bigr)}
\right].
\label{eq:single-key-total-error}
\end{align}
The first term measures the deviation of the \emph{full} score law from
Gaussianity; the remaining terms account for replacing the high-band moments
using Proposition~\ref{prop:high-calibration}.
\subsubsection{Explicit Ceiling Candidate}
\label{app:single-key-ceiling-candidate}

For the interior case $U_N(b)U_H(b)>0$ and
$0<\delta+\tau_{\mathrm{exc}}(b;S,M)<1/2$, define
\begin{align}
\lambda_S&:=\frac{S-\mu_N}{U_N(b)},
&\beta_N&:=\frac{\sigma_N^2}{U_N(b)^2},\notag\\
\gamma_{NH}&:=\frac{\kappa_{NH}}{U_N(b)U_H(b)},
&
\xi_{\mathrm{exc}}
&:=\Phi_{\mathrm{std}}^{-1}\!\left(
1-\delta-\tau_{\mathrm{exc}}(b;S,M)
\right).
\label{eq:single-key-normalized-descriptors}
\end{align}
The corresponding explicit ceiling candidate is
\begin{align}
\overline t_{\mathrm{exc}}
&:=-2\gamma_{NH}
+\sqrt{
4\gamma_{NH}^2
+2\left(
\frac{\lambda_S^2}{\xi_{\mathrm{exc}}^2}-\beta_N
\right)},\notag\\
u_{\mathrm{exc}}(b;S,M,\delta)
&:=\frac{\overline t_{\mathrm{exc}}}
{\sqrt{1+\overline t_{\mathrm{exc}}^2}}.
\label{eq:single-key-high-share-ceiling}
\end{align}
Its radicand need not be nonnegative for arbitrary inputs.  Under the
antecedent of Corollary~\ref{cor:single-key-high-share-ceiling}, the induced
high-to-non-high norm ratio is feasible for the resulting quadratic; this
guarantees that the
candidate is real and positive.

\subsection{Fully-High Pairwise Reversal}
\label{subsubsec:fully-high-reversal}
\label{app:long-context-reversal}

For an ordered key pair, we call the event that the first score falls below
the second a directional reversal; it is semantic when the first key is the
preferred candidate.

\begin{restatable}[Fully-high pairwise reversal approaches chance]{theorem}{FullyHighReversal}
\label{thm:fully-high-reversal}
For a fully rotary head, fix a query and an ordered pair of keys with coefficient vectors
$a,b\in\mathbb C^h$ such that $d:=a-b\neq0$.  For any $0<\eps<1/2$ and any
certified window satisfying
$\cH_\eps(M)=\cF$, draw
$\mathbf m\sim\operatorname{Unif}\{0,\ldots,M-1\}$.  If the Gaussian-shape
condition in \Eqref{eq:high-gaussian-shape-assumption} holds for $z=d=a-b$ with
tolerance $\tau_{\mathrm{H,G}}(d;M)$, then
\begin{equation}
\begin{aligned}
p_{\mathrm{rev}}(a,b;M)
&:=\mathbb P\!\left[X_\cF^a(\mathbf m)<X_\cF^b(\mathbf m)\right]
=\mathbb P\!\left[X_\cF^{a-b}(\mathbf m)<0\right],\\
\abs{p_{\mathrm{rev}}(a,b;M)-\frac12}
&\leq\tau_{\mathrm{H,G}}(a-b;M)+c_{\mathrm G}(\eps).
\end{aligned}
\label{eq:fully-high-pairwise-reversal}
\end{equation}
Here $c_{\mathrm G}(\eps)$ is the universal moment-replacement term defined in
\Eqref{eq:high-gaussian-moment-error}.
For fixed finite $h$, define
$\eps_M:=2C_{\mathrm F}/((1-\rho)M\omega_{h-1})$ for all sufficiently large
$M$.  If the same shape calibration holds for the resulting fully-high
partitions and $\tau_{\mathrm{H,G}}(a-b;M)\to0$, then
\begin{equation}
\lim_{M\to\infty}p_{\mathrm{rev}}(a,b;M)=\frac12.
\label{eq:fully-high-reversal-limit}
\end{equation}
\end{restatable}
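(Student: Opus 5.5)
The plan is to reduce the statement to Lemma~\ref{lem:high-calibration-formal} applied to the margin coefficients $z=d=a-b$. By linearity of $z\mapsto X_\cF^z$ we have $X_\cF^a(m)-X_\cF^b(m)=X_\cF^{a-b}(m)$ for every $m$, which gives the first identity in \Eqref{eq:fully-high-pairwise-reversal}. Since the window is certified with $\cH_\eps(M)=\cF$, the high band is the whole grid. Hence $X_\cF^d=X_H^d$, $U_H(d)=U(d)>0$ because $d\neq0$, and $Y_H^d=X_\cF^d(\mathbf m)$. All hypotheses of Lemma~\ref{lem:high-calibration-formal} then hold, and together with the assumed shape condition \Eqref{eq:high-gaussian-shape-assumption} it gives the uniform bound \Eqref{eq:high-gaussian-calibration}:
\[
\sup_{x}\abs{\mathbb P[Y_H^d\le x]-\Phi_{\mathrm{std}}(\sqrt2\,x/U(d))}\le\tau_{\mathrm{H,G}}(d;M)+c_{\mathrm G}(\eps).
\]

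Next we evaluate this at the threshold $x=0$. Here $\Phi_{\mathrm{std}}(0)=1/2$, so $\abs{\mathbb P[Y\le0]-1/2}$ is controlled. The event in $p_{\mathrm{rev}}$ is the strict event $\{Y<0\}$, and the two events differ by the atom $\mathbb P[Y=0]$. To pass to the strict inequality, we apply the uniform bound at $x=-\eta$ for small $\eta>0$ and use left limits: $\mathbb P[Y<0]=\lim_{\eta\downarrow0}\mathbb P[Y\le-\eta]$, and $\Phi_{\mathrm{std}}(-\sqrt2\eta/U(d))\to1/2$ by continuity. The same error bound therefore transfers to $\mathbb P[Y<0]$. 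This is the only point needing care, since the window law is discrete. The sup-norm (Kolmogorov) bound survives left limits because the Gaussian target is continuous, so the atom causes no trouble.

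For the limit, fix finite $h$ and take $\eps_M=2C_{\mathrm F}/((1-\rho)M\omega_{h-1})$, which lies in $(0,1/2)$ for large $M$. Then $\Gamma_{\eps_M}(\rho)=M\omega_{h-1}\le M\omega_n$ for every $n$, so $\cH_{\eps_M}(M)=\cF$ and $M\ge\Gamma_{\eps_M}(\rho)$ because $\omega_{h-1}\le1$. The window is therefore certified and fully high, and the finite bound applies with $\eps=\eps_M$.

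It remains to show that the error vanishes. From \Eqref{eq:high-gaussian-moment-error}, $c_{\mathrm G}(\eps_M)\to0$ as $\eps_M\to0$, since $\log((1-2\eps)^{-1})\to0$. Together with the assumption $\tau_{\mathrm{H,G}}(d;M)\to0$, the right-hand side tends to zero, which gives \Eqref{eq:fully-high-reversal-limit}. The main obstacle is inherited rather than new: it lies in Lemma~\ref{lem:high-calibration-formal}'s moment-replacement term $c_{\mathrm G}$. That term comes from comparing $\Phi_{\mathrm{std}}((x-\mu_H)/\sigma_H)$ with $\Phi_{\mathrm{std}}(\sqrt2x/U_H)$, and it is assumed here as already established.
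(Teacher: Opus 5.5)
Your proposal is correct and follows essentially the same route as the paper. It applies the conditional Gaussian calibration of Lemma~\ref{lem:high-calibration-formal} to $z=d$ on the fully-high partition, passes to the strict event $\{X_\cF^d(\mathbf m)<0\}$ via left limits at thresholds $x\uparrow0$ using continuity of $\Phi_{\mathrm{std}}$, and takes $\eps=\eps_M$ so that $\Gamma_{\eps_M}(\rho)=M\omega_{h-1}$ forces $\cH_{\eps_M}(M)=\cF$ while $c_{\mathrm G}(\eps_M)\to0$; your extra checks (the linearity identity, $\eps_M<1/2$ for large $M$, and $M\geq\Gamma_{\eps_M}(\rho)$ from $\omega_{h-1}\leq1$) make explicit steps the paper leaves implicit.
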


Fully-high membership controls moments but does not by itself make a fixed
finite trigonometric sum Gaussian or guarantee chance-level reversal at a
finite window.  The theorem provides a calibrated error bound and an
asymptotic conclusion only under the stated shape condition.  The main theorem
instead combines the local-response floor with a conservative finite-window
reversal floor for the same ordered score margin.

\paragraph{Scope of the asymptotic premise.}
For fixed finite $h$ and fixed nonzero $d$, the score satisfies
$|X_\cF^d(m)|\leq\sum_n|d_n|$ at every relative distance. Its limiting variance
under the fully-high moment bounds is $U(d)^2/2>0$. Thus its standardized
support remains bounded, while a Gaussian has positive tails beyond that
bound. The all-threshold Gaussian-shape discrepancy therefore cannot vanish
solely by increasing $M$ in this fixed-dimensional setting. The vanishing-error
implication above is retained as a conditional statement from the earlier
analysis; the present paper uses the finite-window error bound and the
finite-window context-scale theorem. The accepted small finite-window
Gaussian approximation is consistent with this distinction.

\subsection{Interpretation for p-RoPE}
\label{app:prope-interpretation}
The main reversal-cutoff theorem and the reversal results above concern
fully rotary RoPE.  In
a partial-RoPE variant~\citep{ICLR2025_e6d58fc6}, the score of a fixed key with fixed content states can
be written as
\[
S_K^{\mathrm p}(m)=c_K+X_{\mathcal R}^{b_{\mathcal R}}(m),
\]
where $\mathcal R$ indexes the retained rotating dimensions and the NoPE
component $c_K$ is independent of relative distance.  This constant does not
cancel against a fixed score barrier:
\[
p_{\mathrm{exc}}^{\mathrm p}(K;S,M)
=
\mathbb P[
X_{\mathcal R}^{b_{\mathcal R}}(\mathbf m)>S-c_K].
\]
Depending on its sign, a sufficiently strong NoPE anchor can therefore relax
or tighten the score-barrier requirement after the ceiling is recalibrated with
the effective barrier $S-c_K$.  Likewise, a favorable NoPE margin adds a
nonzero DC term to the cross-key score difference, so the fully rotary
one-half conclusion of Theorem~\ref{thm:fully-high-reversal} need not apply.
Thus p-RoPE may avoid a particular score-barrier or fully rotary reversal
cutoff; the magnitude of its NoPE share alone,
without the sign of the induced score margin, is not sufficient.

The same constant cancels exactly from the positional interval comparison:
\[
S_K^{\mathrm p}(m_{\mathrm{near}})
-S_K^{\mathrm p}(m_{\mathrm{far}})
=
X_{\mathcal R}^{b_{\mathcal R}}(m_{\mathrm{near}})
-X_{\mathcal R}^{b_{\mathcal R}}(m_{\mathrm{far}}).
\]
NoPE dimensions may be viewed as zero-frequency dimensions and hence never
become certified high, but for the same reason they supply no changing signal
with which to order the two intervals.  If the retained rotating component is
nonzero and satisfies all hypotheses of
Theorem~\ref{thm:long-range-order-ambiguity}, including full certification and
two-interval calibration, the same bound applies to that component.  If no
rotating component remains, the two sampled scores are identical and
$p_{\mathrm{ord}}=0$; this is positional indifference, not a reliable
preference for the near interval.  Hence an unrotated anchor can support
score-barrier stability, but it cannot by itself provide long-distance
positional discrimination.

\section{Proofs of the Theoretical Results}
\label{app:all-theory-proofs}

\subsection{Proof of Proposition~\ref{prop:rope-score-representation}}
\label{app:proof-rope-score-reduction}

\RoPEScoreReduction*

\begin{proof}
The proof has two directions.  We first rewrite each rotary plane in complex
form and use real linearity; we then construct real rotary blocks for an
arbitrary coefficient vector.

Let $q,k_+,k_-\in\mathbb R^{2h}$ denote one pre-RoPE query and two pre-RoPE keys.
For each frequency $n$, group the corresponding real rotary coordinates into
the blocks
$q_n:=(q_{n,1},q_{n,2})^\top$ and
$(k_\pm)_n:=((k_\pm)_{n,1},(k_\pm)_{n,2})^\top$, and represent these blocks by
\begin{equation}
Q_n:=q_{n,1}+iq_{n,2},
\qquad
(K_\pm)_n:=(k_\pm)_{n,1}+i(k_\pm)_{n,2}.
\label{eq:complex-rope-coordinates}
\end{equation}
Let $p_q$ and $p_k$ be the query and key positions, set $m:=p_q-p_k$, and
write
\begin{equation}
R(\theta):=
\begin{pmatrix}
\cos\theta&-\sin\theta\\
\sin\theta&\cos\theta
\end{pmatrix}.
\label{eq:rope-block-rotation}
\end{equation}
For a position-independent attention-score scale $c_{\mathrm{att}}>0$, define
\begin{equation}
S_\pm(m):=
c_{\mathrm{att}}\sum_{n\in\cF}
\left\langle R(p_q\omega_n)q_n,R(p_k\omega_n)(k_\pm)_n\right\rangle.
\label{eq:real-rope-score-definition}
\end{equation}

The rotation matrices satisfy
\begin{equation}
R(\alpha)^\top R(\beta)=R(\beta-\alpha).
\label{eq:rotation-composition}
\end{equation}
Consequently, the contribution of the $n$th rotary plane is
\begin{align}
\left\langle
R(p_q\omega_n)q_n,R(p_k\omega_n)(k_\pm)_n
\right\rangle
&=q_n^\top R\bigl((p_k-p_q)\omega_n\bigr)(k_\pm)_n\notag\\
&=q_n^\top R(-m\omega_n)(k_\pm)_n.
\label{eq:real-rope-plane-score}
\end{align}
The same plane contribution admits the following complex representation:
\begin{equation}
Q_n\overline{(K_\pm)_n}
=q_{n,1}(k_\pm)_{n,1}+q_{n,2}(k_\pm)_{n,2}
+i\bigl(q_{n,2}(k_\pm)_{n,1}-q_{n,1}(k_\pm)_{n,2}\bigr).
\label{eq:complex-coefficient-expansion}
\end{equation}
Taking the real part after multiplication by $e^{im\omega_n}$ gives
\begin{align}
\Rea\!\left(Q_n\overline{(K_\pm)_n}e^{im\omega_n}\right)
&=\bigl(q_{n,1}(k_\pm)_{n,1}+q_{n,2}(k_\pm)_{n,2}\bigr)
\cos(m\omega_n)\notag\\
&\quad+\bigl(q_{n,1}(k_\pm)_{n,2}-q_{n,2}(k_\pm)_{n,1}\bigr)
\sin(m\omega_n)\notag\\
&=q_n^\top R(-m\omega_n)(k_\pm)_n.
\label{eq:complex-real-plane-identity}
\end{align}
Combining \Eqref{eq:real-rope-score-definition} and
\Eqref{eq:complex-real-plane-identity} yields
\begin{align}
S_\pm(m)
&=\Rea\sum_{n\in\cF}
\bigl(c_{\mathrm{att}}Q_n\overline{(K_\pm)_n}\bigr)e^{im\omega_n}
=X_\cF^{a_\pm}(m),\label{eq:rope-score-representation}\\
(a_+)_n&:=c_{\mathrm{att}}Q_n\overline{(K_+)_n},
\qquad
(a_-)_n:=c_{\mathrm{att}}Q_n\overline{(K_-)_n}.
\label{eq:rope-score-coefficients}
\end{align}
Thus every fixed score has the required form.  Real linearity in the
coefficient vector shows that every finite real linear combination has the
same form.  In particular, setting
\begin{equation}
a_n:=(a_+)_n,\qquad
b_n:=(a_-)_n,\qquad
d_n:=a_n-b_n
\label{eq:coefficient-triple}
\end{equation}
gives
\begin{equation}
D(m):=S_+(m)-S_-(m)=X_\cF^d(m).
\label{eq:rope-margin-representation}
\end{equation}

Conversely, fix any $z\in\mathbb C^h$.  Choosing $Q_n=1$ and
$(K_+)_n=\overline{z_n/c_{\mathrm{att}}}$ for every $n$ gives
$c_{\mathrm{att}}Q_n\overline{(K_+)_n}=z_n$.  Each chosen complex coordinate
corresponds to a real two-dimensional rotary block, so this constructs real
$q,k_+\in\mathbb R^{2h}$ that realize $X_\cF^z$.
\end{proof}

\subsection{Finite-Window Fourier-Frame Bound}
\label{app:frame-bound}

For $\alpha\in\mathbb R$, define the normalized finite-sum kernel
\begin{equation}
K_M(\alpha):=\frac1M\sum_{m=0}^{M-1}e^{im\alpha}.
\label{eq:KM-def}
\end{equation}
For a finite frequency set $J\subseteq\cF$, define
\begin{equation}
\Lambda_J^0:=\{0\}\cup\{+\omega_n:n\in J\}\cup\{-\omega_n:n\in J\},
\qquad
\delta_J:=\min_{\lambda\neq\lambda'\in\Lambda_J^0}
\dist(\lambda-\lambda',2\pi\mathbb Z),
\label{eq:signed-wrapped-spacing}
\end{equation}
and the finite-window Gram matrix and signed-frame defect
\begin{equation}
\mathcal G_J(M):=
\bigl[K_M(\lambda'-\lambda)\bigr]_{\lambda,\lambda'\in\Lambda_J^0},
\qquad
\mathfrak d_J(M):=\norm{\mathcal G_J(M)-I}_{\op}.
\label{eq:defect}
\end{equation}

\begin{restatable}[Finite-window Fourier-frame bound]{lemma}{FiniteWindowFrame}
\label{lem:frame-bound}
If $\delta_J>0$, then
\begin{equation}
\mathfrak d_J(M)\leq\frac{C_{\mathrm F}}{M\delta_J}.
\label{eq:frame-bound}
\end{equation}
In particular, $M\delta_J\geq2C_{\mathrm F}/\eps$ implies
$\mathfrak d_J(M)\leq\eps/2$.
\end{restatable}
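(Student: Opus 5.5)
The plan is to bound the off-diagonal part of the Hermitian Gram matrix by an exact Dirichlet-kernel identity, reduce its quadratic form to generalized Hilbert forms, and apply the Montgomery--Vaughan inequality. This should give $C_{\mathrm F}=2\pi$.

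\emph{Reduction to the off-diagonal form.} Since $K_M(0)=1$, the matrix $\mathcal G_J(M)-I$ has zero diagonal. It is Hermitian because $K_M(-\alpha)=\overline{K_M(\alpha)}$. Hence
\[
\mathfrak d_J(M)=\sup_{\|x\|_2=1}\Bigl|\sum_{\lambda\neq\lambda'}x_\lambda\overline{x_{\lambda'}}\,K_M(\lambda'-\lambda)\Bigr|,
\]
where the indices range over $\Lambda_J^0$. The hypothesis $\delta_J>0$ guarantees that these points are distinct modulo $2\pi$, so no off-diagonal entry has $K_M=1$.

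\emph{Closed form for the kernel.} For $\alpha\notin2\pi\mathbb Z$, summing the geometric series gives
\[
K_M(\alpha)=\frac{e^{iM\alpha}-1}{M(e^{i\alpha}-1)}=\frac{e^{i(M-1)\alpha/2}\sin(M\alpha/2)}{M\sin(\alpha/2)}.
\]
Write $\sin(M\alpha/2)=(e^{iM\alpha/2}-e^{-iM\alpha/2})/(2i)$ and set $\alpha=\lambda'-\lambda$. Every exponential factor then splits into a product of a phase depending only on $\lambda$ and a phase depending only on $\lambda'$. Absorbing these into modified vectors $y^{\pm}_\lambda:=x_\lambda e^{-i c_\pm\lambda}$, with the appropriate constants $c_\pm$ determined by $M$, leaves $\|y^\pm\|_2=\|x\|_2$. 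The quadratic form thus becomes $\frac{1}{2iM}$ times the difference of two forms of the type
\[
\sum_{\lambda\neq\lambda'}\frac{y_\lambda\overline{y_{\lambda'}}}{\sin\bigl((\lambda'-\lambda)/2\bigr)}.
\]

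\emph{Main step: avoiding a logarithmic loss.} A Schur row-sum bound would use $|K_M(\alpha)|\le \pi/(M\dist(\alpha,2\pi\mathbb Z))$. Summing this over $\delta_J$-separated points gives a factor $\log(2\pi/\delta_J)$, so it cannot produce an absolute constant. I would therefore invoke the generalized Hilbert inequality of Montgomery--Vaughan instead. For points $\beta_r$ in $\mathbb R/\mathbb Z$ with minimal separation $\eta$, it states
\[
\Bigl|\sum_{r\neq s}\frac{y_r\overline{y_s}}{\sin(\pi(\beta_r-\beta_s))}\Bigr|\le\eta^{-1}\sum_r|y_r|^2.
\]
Apply it with $\beta=\lambda/(2\pi)$ and $\eta=\delta_J/(2\pi)$. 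Each of the two forms is then at most $(2\pi/\delta_J)\|x\|_2^2$, so
\[
\mathfrak d_J(M)\le\frac{1}{2M}\cdot2\cdot\frac{2\pi}{\delta_J}=\frac{2\pi}{M\delta_J}.
\]
This yields \Eqref{eq:frame-bound} with $C_{\mathrm F}=2\pi$.

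\emph{Final implication.} The last claim is immediate: if $M\delta_J\ge2C_{\mathrm F}/\eps$, then
\[
\mathfrak d_J(M)\le\frac{C_{\mathrm F}}{M\delta_J}\le\frac{\eps}{2}.
\]

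The step I expect to need the most care is the phase bookkeeping that turns $\sin(M\alpha/2)$ into separable unimodular factors. It must be verified that both resulting forms are genuinely of Hilbert type with the same separation $\delta_J$, including the wrap-around, which is handled by working on $\mathbb R/\mathbb Z$.
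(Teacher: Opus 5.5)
Your proposal is correct and is essentially the paper's proof. The paper likewise writes the kernel as $\bigl(e^{i(M-1/2)\alpha}-e^{-i\alpha/2}\bigr)/\bigl(2iM\sin(\alpha/2)\bigr)$, absorbs the separable phases as a diagonal unitary modulation, and bounds each of the two cosecant forms by the Montgomery--Vaughan circular Hilbert inequality with points $\lambda/(2\pi)$ and spacing $\delta_J/(2\pi)$, obtaining $C_{\mathrm F}=2\pi$. The wrap-around issue you flag is harmless, since shifting a point's real representative by $2\pi$ only flips the sign of its coordinate in the cosecant form, which preserves the norm.
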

\begin{proof}
We decompose the off-diagonal kernel into two cosecant forms, apply the
separated-point Hilbert inequality to each form, and then normalize by the
window length.

The finite geometric-sum kernel satisfies
\begin{equation}
\sum_{m=0}^{M-1}e^{im(\lambda-\lambda')}
=
\frac{e^{i(M-1/2)(\lambda-\lambda')}
      -e^{-i(\lambda-\lambda')/2}}
     {2i\sin((\lambda-\lambda')/2)}.
\label{eq:cosecant-decomposition}
\end{equation}
In the off-diagonal quadratic form, the two numerator terms decompose into
cosecant forms whose coefficient vectors differ only by diagonal unitary
modulation.  Apply the circular Hilbert inequality of
\citet[Theorem~1, Eq.~(1.2)]{Montgomery_1974} with
$x_r=\lambda_r/(2\pi)$ and modulo-one spacing $\delta_J/(2\pi)$.  It bounds
each term, including its factor $1/2$, by $\pi/\delta_J$ in operator norm.
Dividing by $M$ and applying the triangle inequality gives
\begin{equation}
\mathfrak d_J(M)
\leq\frac{2\pi}{M\delta_J}.
\end{equation}
Thus the lemma holds with the universal choice $C_{\mathrm F}:=2\pi$.  The
stated sufficient condition for the defect bound follows by substitution.
\end{proof}
\subsection{Proof of Proposition~\ref{prop:high-calibration} and Lemma~\ref{lem:high-calibration-formal}}
\label{app:high-main}

\HighBandCalibration*

\HighBandCalibrationFormal*

\begin{proof}
By the exact formulation in Appendix~\ref{app:high-exact-calibration},
it is enough to establish the two moment
inequalities in \Eqref{eq:high-calibration} and then transfer a
$\tau_{\mathrm{H,G}}$ Gaussian-shape bound to
\Eqref{eq:high-gaussian-calibration}.  The proof proceeds in four stages.  We
first certify separation of the signed
frequencies and convert that separation into blockwise Gram-matrix control.  We
then extract the mean and variance bounds and finally transfer the conditional
Gaussian approximation from the exact moments to their calibrated targets.

Use the finite-sum kernel, signed-frequency spacing, and Gram-matrix
defect defined in Appendix~\ref{app:frame-bound}.

We first verify the required signed-frequency separation.  Because
$M\geq\Gamma_\eps(\rho)$, the certified-high set is nonempty.  Write
$H=\{0,\ldots,v\}$.  To lower-bound the signed wrapped spacing, consider the
four possible pair types.  Distances from the signed frequencies to zero are
at least $\rho^v$; same-sign distances are at least
$(1-\rho)\rho^v$; opposite-sign distances are at least $2\rho^v$; and the
wrap-around distance is larger than $2\pi-2$.  These cases cover every pair of
distinct signed frequencies.  Hence
\begin{equation}
\delta_H\geq(1-\rho)\rho^v.
\label{eq:geometric-spacing}
\end{equation}
Since $v\in H$,
\begin{equation}
M\delta_H
\geq M(1-\rho)\rho^v
\geq(1-\rho)\Gamma_\eps(\rho)
=\frac{2C_{\mathrm F}}{\eps}.
\label{eq:high-spacing-condition}
\end{equation}
Lemma~\ref{lem:frame-bound} therefore gives
$\mathfrak d_H(M)\leq\eps/2$.

To read off the individual blocks, define the vector $g_H$ and the matrices
$G_H^-$ and $G_H^+$ by
\begin{equation}
(g_H)_n:=K_M(\omega_n),
\label{eq:g-vector}
\end{equation}
\begin{equation}
(G_H^-)_{n,n'}:=K_M(\omega_{n'}-\omega_n),
\qquad
(G_H^+)_{n,n'}:=K_M(\omega_n+\omega_{n'}).
\label{eq:G-matrices}
\end{equation}
Order $\Lambda_H^0$ as $(0,+H,-H)$.  Since
$K_M(-\alpha)=\overline{K_M(\alpha)}$, the complete centered Gram matrix is
\begin{equation}
\mathcal G_H(M)-I
=
\begin{bmatrix}
0 & g_H^\top & g_H^* \\
\overline{g_H} & G_H^- - I & \overline{G_H^+} \\
g_H & G_H^+ & \overline{G_H^-} - I
\end{bmatrix}.
\label{eq:block-matrix}
\end{equation}
Every displayed block is a compression of the full centered matrix.  Therefore
\begin{equation}
\norm{g_H}_2\leq\mathfrak d_H(M),
\qquad
\norm{G_H^- - I}_{\op}\leq\mathfrak d_H(M),
\qquad
\norm{G_H^+}_{\op}\leq\mathfrak d_H(M).
\label{eq:block-bounds}
\end{equation}

We now convert these block bounds into moment estimates.  Let
$Z_H^z(m):=\sum_{n\in H}z_ne^{im\omega_n}$, so
$X_H^z=\Rea Z_H^z$.  The first row of \Eqref{eq:block-matrix} gives
\begin{equation}
\avg{X_H^z}=\Rea(g_H^\top z_H),
\qquad
\abs{\avg{X_H^z}}
\leq\mathfrak d_H(M)U_H(z).
\label{eq:mean-defect}
\end{equation}
Using $(\Rea Z)^2=\frac12\abs{Z}^2+\frac12\Rea(Z^2)$,
\begin{equation}
\avg{(X_H^z)^2}
=\frac12z_H^*G_H^-z_H
+\frac12\Rea\bigl(z_H^\top G_H^+z_H\bigr).
\label{eq:second-moment-blocks}
\end{equation}
The block bounds imply
\begin{equation}
\abs{\avg{(X_H^z)^2}-\frac12U_H(z)^2}
\leq\mathfrak d_H(M)U_H(z)^2.
\end{equation}
After subtracting the squared mean,
\begin{equation}
\abs{\Var_M(X_H^z)-\frac12U_H(z)^2}
\leq
\bigl(\mathfrak d_H(M)+\mathfrak d_H(M)^2\bigr)U_H(z)^2.
\label{eq:variance-defect}
\end{equation}
Finally,
\begin{equation}
\mathfrak d_H(M)+\mathfrak d_H(M)^2
\leq\frac\eps2+\frac{\eps^2}{4}\leq\eps.
\end{equation}
Substitution into \Eqref{eq:mean-defect} and \Eqref{eq:variance-defect}
proves \Eqref{eq:high-calibration}.

It remains to prove the conditional Gaussian conclusion.  Assume
$U_H(z)>0$ and set $s_0:=U_H(z)/\sqrt{2}$.  The variance bound in
\Eqref{eq:high-calibration} gives
\begin{equation}
\sigma_H^2\geq\left(\frac12-\eps\right)U_H(z)^2>0,
\qquad
\sqrt{1-2\eps}\leq\frac{\sigma_H}{s_0}
\leq\sqrt{1+2\eps}.
\label{eq:high-gaussian-scale-ratio}
\end{equation}
Thus the standardized variable in
\Eqref{eq:high-gaussian-shape-assumption} is well defined, and that condition
implies
\begin{equation}
\sup_{x\in\mathbb R}
\abs{
\mathbb P[Y_H^z\leq x]
-\Phi_{\mathrm{std}}\!\left(\frac{x-\mu_H}{\sigma_H}\right)
}
\leq\tau_{\mathrm{H,G}}.
\label{eq:high-gaussian-exact-moment-law}
\end{equation}

We next replace the exact moments by their calibrated targets.  Differentiating
$\Phi_{\mathrm{std}}((x-\mu)/s)$ with respect to $\log s$ shows that its
absolute derivative is at most $1/\sqrt{2\pi e}$, uniformly in $x$ and
$\mu$.  Hence \Eqref{eq:high-gaussian-scale-ratio} yields
\begin{equation}
\sup_{x\in\mathbb R}
\abs{
\Phi_{\mathrm{std}}\!\left(\frac{x-\mu_H}{\sigma_H}\right)
-\Phi_{\mathrm{std}}\!\left(\frac{x-\mu_H}{s_0}\right)
}
\leq
\frac{\log((1-2\eps)^{-1})}{2\sqrt{2\pi e}}.
\label{eq:high-gaussian-scale-replacement}
\end{equation}
Since the standard Gaussian CDF is $1/\sqrt{2\pi}$-Lipschitz, the mean bound
in \Eqref{eq:high-calibration} also gives
\begin{equation}
\sup_{x\in\mathbb R}
\abs{
\Phi_{\mathrm{std}}\!\left(\frac{x-\mu_H}{s_0}\right)
-\Phi_{\mathrm{std}}\!\left(\frac{x}{s_0}\right)
}
\leq
\frac{\abs{\mu_H}}{s_0\sqrt{2\pi}}
\leq\frac{\eps}{\sqrt\pi}.
\label{eq:high-gaussian-mean-replacement}
\end{equation}
Combining \Eqref{eq:high-gaussian-exact-moment-law},
\Eqref{eq:high-gaussian-scale-replacement}, and
\Eqref{eq:high-gaussian-mean-replacement} by the triangle inequality proves
\Eqref{eq:high-gaussian-calibration}.
This completes the conditional Gaussian step.
\end{proof}
\subsection{Strengthened Bandwise Calibration}
\label{app:bandwise-strengthening}

The main text uses only the stated calibration.  For completeness, the same
frame argument also controls derivative variance and the score--derivative
covariance on any band satisfying the stated separation condition.

\begin{proposition}[Strengthened calibration for a separated band]
\label{prop:bandwise-calibration-app}
Let $J\subseteq\cF$ satisfy
$M\delta_J\geq2C_{\mathrm F}/\eps$.  Define
\begin{equation}
E_0(J;z):=\sum_{n\in J}\abs{z_n}^2,
\qquad
E_1(J;z):=\sum_{n\in J}\omega_n^2\abs{z_n}^2,
\end{equation}
and
\begin{equation}
(X_J^z)'(m):=\Rea\sum_{n\in J}i\omega_nz_ne^{im\omega_n}.
\end{equation}
Then
\begin{align}
\abs{\avg{X_J^z}}&\leq\eps\sqrt{E_0(J;z)},\\
\abs{\Var_M(X_J^z)-\tfrac12E_0(J;z)}&\leq\eps E_0(J;z),\\
\abs{\Var_M((X_J^z)')-\tfrac12E_1(J;z)}&\leq\eps E_1(J;z),\\
\abs{\operatorname{Cov}_M(X_J^z,(X_J^z)')}
&\leq\eps\sqrt{E_0(J;z)E_1(J;z)}.
\label{eq:bandwise-strengthened-calibration}
\end{align}
\end{proposition}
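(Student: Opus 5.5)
The plan is to rerun the proof of Lemma~\ref{lem:high-calibration-formal} with the band $J$ in place of $H$. The separation hypothesis is now assumed directly, so the geometric spacing step is not needed. Lemma~\ref{lem:frame-bound} with $M\delta_J\geq2C_{\mathrm F}/\eps$ gives $\mathfrak d_J(M)\leq\eps/2$. Compressing the centered Gram matrix $\mathcal G_J(M)-I$ to its blocks, exactly as in \Eqref{eq:block-matrix}, gives
\[
\norm{g_J}_2\leq\eps/2,\qquad \norm{G_J^--I}_{\op}\leq\eps/2,\qquad \norm{G_J^+}_{\op}\leq\eps/2 .
\]

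The first two inequalities follow verbatim from that proof. The mean is $\avg{X_J^z}=\Rea(g_J^\top z_J)$, so it is bounded by $\tfrac{\eps}{2}\sqrt{E_0}$. The second moment is $\tfrac12 z_J^*G_J^-z_J+\tfrac12\Rea(z_J^\top G_J^+z_J)$, which lies within $\tfrac{\eps}{2}E_0$ of $\tfrac12E_0$. Subtracting the squared mean gives a total error of at most $\tfrac{\eps}{2}+\tfrac{\eps^2}{4}\leq\eps$ times $E_0$.

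For the derivative, the key observation is that $(X_J^z)'=X_J^{w}$ with $w_n:=i\omega_nz_n$. This is again a trigonometric sum over the \emph{same} frequency set $J$, and $\sum_{n\in J}|w_n|^2=E_1(J;z)$. The Gram matrix depends only on the frequencies, not on the coefficients, so applying the previous paragraph to $w$ gives the third inequality with no extra work.

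For the covariance, I will use polarization. Write $\Cov_M(X,Y)=\avg{XY}-\avg X\avg Y$. For complex sums $Z_1=\sum z_ne^{im\omega_n}$ and $Z_2=\sum w_ne^{im\omega_n}$, the identity $\Rea Z_1\Rea Z_2=\tfrac12\Rea(Z_1\overline{Z_2})+\tfrac12\Rea(Z_1Z_2)$ gives
\[
\avg{X_J^zX_J^w}=\tfrac12\Rea(w_J^*G_J^-z_J)+\tfrac12\Rea(w_J^\top G_J^+z_J).
\]
The diagonal part of the first term is $\tfrac12\Rea\sum_n\overline{w_n}z_n=\tfrac12\Rea\sum_n(-i\omega_n)|z_n|^2=0$, because each summand is purely imaginary. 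This cancellation is the main point to check; it is where the claimed target covariance of $0$ comes from. What remains is bounded by $\tfrac12\big(\norm{G_J^--I}_{\op}+\norm{G_J^+}_{\op}\big)\norm{w_J}\norm{z_J}\leq\tfrac{\eps}{2}\sqrt{E_0E_1}$. The product of means contributes at most $(\eps/2)^2\sqrt{E_0E_1}$. The total is at most $(\tfrac{\eps}{2}+\tfrac{\eps^2}{4})\sqrt{E_0E_1}\leq\eps\sqrt{E_0E_1}$, using $\eps<1/2$ (in fact only $\eps\leq2$ is needed).

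The main obstacle is bookkeeping: the conjugation and transpose conventions in the blocks $G^\pm$ must match \Eqref{eq:G-matrices}, so that the cross term produces $w^*G^-z$ and not its transpose. Because $G^-$ is Hermitian, either convention gives the same operator-norm bound, so this should not affect the estimate.
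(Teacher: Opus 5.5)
Your proposal is correct and follows the paper's proof essentially step for step. The frame lemma gives $\mathfrak d_J(M)\leq\eps/2$, and block compression then yields the mean and variance bounds; the derivative is treated as $X_J^w$ with $w_n=i\omega_nz_n$ over the same frequency set; and the covariance uses the bilinear second-moment expansion, in which the identity term vanishes because $\Rea(z_J^*w_J)=0$, closing with $\mathfrak d_J+\mathfrak d_J^2\leq\eps/2+\eps^2/4\leq\eps$. Your remark that $G_J^-$ is Hermitian also correctly reconciles your $w_J^*G_J^-z_J$ with the paper's $z_J^*G_J^-w_J$, since the two have equal real parts.
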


\begin{proof}
Lemma~\ref{lem:frame-bound} gives $\mathfrak d_J(M)\leq\eps/2$.  Repeating
the block calculation in Appendix~\ref{app:high-main} with $H$ replaced by
$J$ proves the first two inequalities.  For the derivative, set
$w_n=i\omega_nz_n$.  Then $X_J^w=(X_J^z)'$ and
$\norm{w_J}_2^2=E_1(J;z)$, so the same variance calculation applies.

For the covariance, the bilinear version of
\Eqref{eq:second-moment-blocks} is
\begin{equation}
\avg{X_J^zX_J^w}
=\frac12\Rea\bigl(z_J^*G_J^-w_J+z_J^\top G_J^+w_J\bigr).
\end{equation}
The identity contribution vanishes because
$\Rea(z_J^*w_J)=0$.  \Eqref{eq:block-bounds} and
\Eqref{eq:mean-defect}, with $H$ replaced by $J$, therefore give
\begin{equation}
\abs{\operatorname{Cov}_M(X_J^z,X_J^w)}
\leq
\bigl(\mathfrak d_J(M)+\mathfrak d_J(M)^2\bigr)
\norm{z_J}_2\norm{w_J}_2.
\end{equation}
Finally,
$\mathfrak d_J+\mathfrak d_J^2\leq\eps/2+\eps^2/4\leq\eps$.
\end{proof}

\subsection{Direct Finite-Sum Intuition}
\label{app:direct}

The next calculation is independent of the Fourier-frame inequality above.  It
is not needed for the main-text calibration result, but it makes the
high-frequency centering mechanism explicit.

\begin{proposition}[Direct centering on the geometric grid]
\label{prop:direct-centering}
For a contiguous band $J=[u,v]$ of size $s=v-u+1$,
\begin{equation}
\abs{\avg{X_J^z}}
\leq
\frac{\pi}{M\rho^v}
\sqrt{\frac{1-\rho^{2s}}{1-\rho^2}}
U_J(z).
\label{eq:direct-centering}
\end{equation}
\end{proposition}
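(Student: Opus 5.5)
The plan is a direct calculation: sum the geometric kernel, bound it termwise, and apply Cauchy--Schwarz. First write the mean of each mode. Since $\avg{X_J^z}=\Rea\sum_{n\in J}z_nK_M(\omega_n)$, with $K_M$ as in \Eqref{eq:KM-def}, Cauchy--Schwarz gives
\[
\abs{\avg{X_J^z}}\leq U_J(z)\Bigl(\sum_{n\in J}\abs{K_M(\omega_n)}^2\Bigr)^{1/2}.
\]
So it suffices to bound each $\abs{K_M(\omega_n)}$ and then sum the squares along the contiguous band $[u,v]$.

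For a single kernel value, the geometric sum gives
\[
\abs{K_M(\alpha)}=\frac{\abs{\sin(M\alpha/2)}}{M\abs{\sin(\alpha/2)}}\leq\frac{1}{M\abs{\sin(\alpha/2)}}.
\]
Every frequency on the grid satisfies $0<\omega_n\leq1<\pi$, so we may use Jordan's inequality $\sin x\geq 2x/\pi$ for $x\in[0,\pi/2]$. With $x=\omega_n/2$ this gives $\sin(\omega_n/2)\geq\omega_n/\pi$, and hence
\[
\abs{K_M(\omega_n)}\leq\frac{\pi}{M\omega_n}=\frac{\pi}{M\rho^n}.
\]
This is where the constant $\pi$ in \Eqref{eq:direct-centering} comes from.

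Summing over $n=u,\ldots,v$, we get
\[
\sum_{n=u}^{v}\rho^{-2n}=\rho^{-2v}\sum_{j=0}^{s-1}\rho^{2j}=\rho^{-2v}\frac{1-\rho^{2s}}{1-\rho^2},
\]
after reindexing with $j=v-n$, so the slowest frequency $\rho^v$ is factored out. Taking square roots gives the factor $\frac{\pi}{M\rho^v}\sqrt{(1-\rho^{2s})/(1-\rho^2)}$. Combining this with the Cauchy--Schwarz step yields \Eqref{eq:direct-centering}.

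There is no real obstacle here. The only care needed is to check that $\omega_n/2$ lies in $(0,\pi/2]$, so that Jordan's inequality applies. This holds because $\omega_0=1$ is the largest frequency on the grid. The geometric sum also needs $\rho<1$, which holds because $B>1$.
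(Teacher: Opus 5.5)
Your proposal is correct and follows essentially the same route as the paper. Both write the mean as $\Rea\sum_{n\in J} z_n K_M(\omega_n)$, bound $\abs{K_M(\omega_n)}\le \pi/(M\omega_n)$ using the closed-form kernel for $0<\omega_n\le 1$, and apply Cauchy--Schwarz with the geometric sum $\sum_{n=u}^{v}\rho^{-2n}=\rho^{-2v}(1-\rho^{2s})/(1-\rho^2)$; your appeal to Jordan's inequality just supplies the justification for the kernel bound, which the paper states without proof.
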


\begin{proof}
For $0<\alpha\leq1$,
\begin{equation}
K_M(\alpha)
=e^{i(M-1)\alpha/2}
\frac{\sin(M\alpha/2)}{M\sin(\alpha/2)},
\qquad
\abs{K_M(\alpha)}\leq\frac{\pi}{M\alpha}.
\end{equation}
Therefore Cauchy--Schwarz gives
\begin{align}
\abs{\avg{X_J^z}}
&\leq\frac{\pi}{M}
\left(\sum_{n=u}^v\rho^{-2n}\right)^{1/2}U_J(z)\\
&=\frac{\pi}{M\rho^v}
\sqrt{\frac{1-\rho^{2s}}{1-\rho^2}}U_J(z).
\end{align}
\end{proof}

\subsection{Positional Response Bounds and Refinements}
\label{app:positional}

The proofs and coefficient-specific refinements in this section use the
following appendix-only shorthand:
\begin{align}
A_J(z)&:=\sum_{n\in J}\abs{z_n},
\label{eq:appendix-l1-coefficient-norm}\\
\Delta_1X_J^z(t)&:=X_J^z(t+1)-X_J^z(t),
\label{eq:local-difference}\\
g_n&:=\abs{e^{i\omega_n}-1}.
\label{eq:adjacent-frequency-gain}
\end{align}

\subsubsection{Auxiliary Non-High-Only Response Bound}

\begin{lemma}[Non-high-only positional response has linear share cost]
\label{lem:nonhigh-response}
For every certified $M$ and every nonzero $z$,
\begin{equation}
\sup_t\abs{\Delta_1X_N^z(t)}
\leq
\frac{\Gamma_\eps(\rho)}{M}A_N(z;M).
\label{eq:nonhigh-response-bound}
\end{equation}
Consequently, for every $\zeta>0$, if
$\sup_t\abs{\Delta_1X_N^z(t)}/U(z)\geq\zeta$, then
\begin{equation}
r_N(z;M)
\geq
\frac{\zeta M}{\Gamma_\eps(\rho)\sqrt h}.
\label{eq:linear-nonhigh-share-cost}
\end{equation}
\end{lemma}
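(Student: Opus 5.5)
The plan is to bound each non-high adjacent increment by the rotation angle of its frequency. Then we sum with the triangle inequality. Finally, Cauchy--Schwarz turns the resulting $\ell^1$ norm into the non-high norm share.

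\textbf{Step 1 (per-frequency increment).} For every $t$,
\[
\Delta_1X_N^z(t)=\Rea\sum_{n\in N}z_ne^{it\omega_n}\bigl(e^{i\omega_n}-1\bigr),
\]
so
\[
\abs{\Delta_1X_N^z(t)}\leq\sum_{n\in N}\abs{z_n}\,g_n .
\]
Use the elementary chord bound $g_n=\abs{e^{i\omega_n}-1}=2\abs{\sin(\omega_n/2)}\leq\omega_n$.

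\textbf{Step 2 (use the partition).} By the definition of $\cN_\eps(M)$ in \Eqref{eq:certified-partition}, every $n\in N$ satisfies $M\omega_n<\Gamma_\eps(\rho)$, i.e.\ $\omega_n<\Gamma_\eps(\rho)/M$. Substituting this into Step 1 gives
\[
\abs{\Delta_1X_N^z(t)}\leq\frac{\Gamma_\eps(\rho)}{M}\sum_{n\in N}\abs{z_n}=\frac{\Gamma_\eps(\rho)}{M}A_N(z;M)
\]
uniformly in $t$, which is \Eqref{eq:nonhigh-response-bound}. If $N$ is empty, both sides vanish and the bound is trivial.

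\textbf{Step 3 (share cost).} Cauchy--Schwarz gives $A_N(z)\leq\sqrt{\abs{N}}\,U_N(z)\leq\sqrt h\,U_N(z)$. Dividing the hypothesis $\sup_t\abs{\Delta_1X_N^z(t)}/U(z)\geq\zeta$ through by $U(z)$ and combining with Step 2 yields
\[
\zeta\leq\frac{\Gamma_\eps(\rho)\sqrt h}{M}\,r_N(z;M).
\]
Rearranging gives \Eqref{eq:linear-nonhigh-share-cost}.

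There is no real obstacle here. The only points needing care are the inequality $\abs{e^{i\omega}-1}\leq\abs{\omega}$ and the empty-$N$ case. In that case the hypothesis forces $\zeta\leq0$, which is excluded by $\zeta>0$, so the statement holds vacuously.
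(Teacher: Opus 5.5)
Your proposal is correct and follows the paper's proof essentially step for step. Like the paper, you bound each non-high increment by $g_n\abs{z_n}\leq\omega_n\abs{z_n}<(\Gamma_\eps(\rho)/M)\abs{z_n}$, then apply Cauchy--Schwarz with $\abs{N}\leq h$ to pass from $A_N$ to $\sqrt h\,U(z)\,r_N(z;M)$; your explicit chord bound and the vacuous empty-$N$ case are details the paper leaves implicit.
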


\begin{proof}
For every $n\in N$, the partition gives
$\omega_n<\Gamma_\eps(\rho)/M$.  Hence
\begin{align}
\abs{\Delta_1X_N^z(t)}
&\leq
\sum_{n\in N}g_n\abs{z_n}\\
&\leq
\sum_{n\in N}\omega_n\abs{z_n}\\
&\leq
\frac{\Gamma_\eps(\rho)}{M}A_N(z;M),
\end{align}
which proves \Eqref{eq:nonhigh-response-bound}.  Cauchy--Schwarz and
$\abs{N}\leq h$ give
\begin{equation}
A_N(z;M)
\leq
\sqrt h\,U_N(z)
=
\sqrt h\,U(z)r_N(z;M).
\end{equation}
Therefore a normalized non-high response of size at least $\zeta$ implies
\begin{equation}
\zeta
\leq
\frac{\Gamma_\eps(\rho)\sqrt h}{M}r_N(z;M),
\end{equation}
which proves \Eqref{eq:linear-nonhigh-share-cost}.
\end{proof}

The gain multiplier in this auxiliary envelope improves on the universal
pointwise bound throughout the certified regime.  Indeed,
$M\geq\Gamma_\eps(\rho)$ implies
\begin{equation}
0<\frac{\Gamma_\eps(\rho)}{M}\leq1<2,
\label{eq:nonhigh-cutoff-nonvacuous}
\end{equation}
where $2$ is the universal bound on $\abs{e^{i\omega}-1}$.  For the Qwen
grid used in our experiments, $B=10^6$ and $2h=128$, so
$\rho=(10^6)^{-1/64}\approx0.805842$.  With the fixed experimental choice
$\eps=10^{-2}$, this gives
\begin{equation}
\Gamma_{10^{-2}}(\rho)\approx6472.25,
\qquad
\frac{\Gamma_{10^{-2}}(\rho)}{32768}\approx0.1975.
\label{eq:qwen-nonhigh-cutoff-scale}
\end{equation}
Thus the experimental window is certified, and its non-high gain coefficient
is well below the trivial response coefficient.

\subsubsection{Proof of Lemma~\ref{lem:stable-adjacent-response-envelope}}
\label{app:proof-stable-adjacent-response-envelope}

\StableAdjacentResponseEnvelope*

\begin{proof}
For every relative distance $t$, split the score difference over the non-high and
certified-high bands.  The triangle inequality gives
\begin{equation}
\abs{\Delta_1X_\cF^a(t)}
\leq
\abs{\Delta_1X_N^a(t)}
+\abs{\Delta_1X_H^a(t)}.
\label{eq:adjacent-band-split-proof}
\end{equation}
Lemma~\ref{lem:nonhigh-response} and Cauchy--Schwarz imply
\begin{equation}
\abs{\Delta_1X_N^a(t)}
\leq
\frac{\Gamma_\eps(\rho)}{M}A_N(a;M)
\leq
\frac{\Gamma_\eps(\rho)\sqrt h}{M}U_N(a).
\label{eq:stable-nonhigh-term-proof}
\end{equation}
For the certified-high band, another application of Cauchy--Schwarz gives
\begin{equation}
\abs{\Delta_1X_H^a(t)}
\leq
\sum_{n\in H}g_n\abs{a_n}
\leq
\left(\sum_{n\in H}g_n^2\right)^{1/2}U_H(a)
\leq R_\cF U_H(a).
\label{eq:stable-high-term-proof}
\end{equation}
Substitute these two bounds into \Eqref{eq:adjacent-band-split-proof}, divide
by $U(a)>0$, and take the maximum over the integer window.  This proves the
first line of \Eqref{eq:stable-adjacent-response-envelope}; the second follows from
$r_N(a;M)\leq1$.
\end{proof}

\subsubsection{Proof of Corollary~\ref{cor:normalized-positional-share-floor}}
\label{app:proof-normalized-positional-share-floor}

\NormalizedPositionShareFloor*

\begin{proof}
Combining the response requirement with
Lemma~\ref{lem:stable-adjacent-response-envelope} yields
\begin{equation}
R_\cF r_H(a;M)
\geq
\zeta-\frac{\Gamma_\eps(\rho)\sqrt h}{M}.
\end{equation}
Because $r_H(a;M)\geq0$ and $R_\cF>0$, take the positive part and divide by
$R_\cF$ to prove \Eqref{eq:normalized-positional-share-floor}.  A lower bound
above one contradicts $r_H\leq1$.
\end{proof}

\subsubsection{Exact-Gain Response Envelope}
\label{app:exact-gain-response-envelope}

For this appendix only, abbreviate the \S\ref{sec:failure-modes}
window diagnostic as
\begin{equation}
\widehat{\overline G}_M(a)
:=
\max_{0\leq m\leq M-2}
\frac{\abs{X_\cF^a(m+1)-X_\cF^a(m)}}{U(a)}.
\label{eq:normalized-position-maxima}
\end{equation}

We also define the continuous normalized envelope
\begin{equation}
\overline{\mathcal G}(a)
:=
\sup_{t\in\mathbb R}
\frac{\abs{\Delta_1X_\cF^a(t)}}{U(a)}.
\label{eq:appendix-continuous-position-envelope}
\end{equation}
To state two sharper relaxations, one coefficient-specific and one depending
only on band norms, set
\begin{align}
R_H(M)&:=
\left(\sum_{n\in H}g_n^2\right)^{1/2},
&
R_N(M)&:=
\left(\sum_{n\in N}g_n^2\right)^{1/2},\notag\\
R_\cF^2&=R_H(M)^2+R_N(M)^2.
\label{eq:appendix-response-factors}
\end{align}
The full-grid factor $R_\cF$ is defined in
Lemma~\ref{lem:stable-adjacent-response-envelope} and is independent of the
partition.  For
$a\neq0$ with $U_H(a)>0$, also define
\begin{align}
w_N(a;M)
&:=
\frac{\sum_{n\in N}g_n\abs{a_n}}{U(a)},
\label{eq:normalized-nonhigh-quantities}\\
\kappa_H(a;M)
&:=
\frac{\sum_{n\in H}g_n\abs{a_n}}{U_H(a)},
\label{eq:high-shape-response-factor}\\
T(a)
&:=
w_N(a;M)+\kappa_H(a;M)r_H(a;M)
=
\frac{\sum_{n\in\cF}g_n\abs{a_n}}{U(a)}.
\label{eq:pointwise-upper-certificate}
\end{align}

\begin{proposition}[Band-norm relaxations of the exact-gain envelope]
\label{prop:exact-gain-normalized-envelope}
For every certified integer $M\geq2$ and every positional vector $a\neq0$ with
$U_H(a)>0$, let $r=r_H(a;M)$ and define
\begin{equation}
F_M(r):=
R_N(M)\sqrt{1-r^2}+R_H(M)r.
\label{eq:r-only-position-envelope}
\end{equation}
Then
\begin{equation}
\widehat{\overline G}_M(a)
\leq
\overline{\mathcal G}(a)
\leq
T(a)
\mathrel{=}
w_N(a;M)+\kappa_H(a;M)r
\leq
w_N(a;M)+R_H(M)r
\leq
F_M(r).
\label{eq:exact-gain-normalized-envelope}
\end{equation}
\end{proposition}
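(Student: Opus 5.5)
The plan is to prove the chain \eqref{eq:exact-gain-normalized-envelope} one link at a time, left to right. Each link is either a relaxation of a supremum or a single Cauchy--Schwarz step. All quantities live on a fixed partition $\cF=H\sqcup N$ at a fixed certified $M$. The only role of $U_H(a)>0$ is to make $\kappa_H$ and $r_H$ well defined.

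\emph{First two links.} The inequality $\widehat{\overline G}_M(a)\leq\overline{\mathcal G}(a)$ is immediate. The discrete maximum is taken over integer $t\in\{0,\ldots,M-2\}\subset\mathbb R$, so it is dominated by the supremum over all real $t$. For $\overline{\mathcal G}(a)\leq T(a)$, write
\[
\Delta_1X_\cF^a(t)=\Rea\sum_{n\in\cF}a_ne^{it\omega_n}\bigl(e^{i\omega_n}-1\bigr).
\]
Applying the triangle inequality gives $\abs{\Delta_1X_\cF^a(t)}\leq\sum_n g_n\abs{a_n}$ uniformly in $t$. Dividing by $U(a)$ yields the right-hand form of $T(a)$ in \eqref{eq:pointwise-upper-certificate}.

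\emph{The equality link.} Split $\sum_{n\in\cF}g_n\abs{a_n}$ over $N$ and $H$. The $N$-part divided by $U(a)$ is $w_N(a;M)$ by definition. For the $H$-part, multiply and divide by $U_H(a)$:
\[
\frac{\sum_{n\in H}g_n\abs{a_n}}{U(a)}=\kappa_H(a;M)\,r_H(a;M).
\]

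\emph{Last two links.} For $\kappa_H\leq R_H(M)$, apply Cauchy--Schwarz on $H$: $\sum_{n\in H}g_n\abs{a_n}\leq R_H(M)U_H(a)$. Multiplying by the nonnegative $r$ gives the fourth inequality. For the final link, apply Cauchy--Schwarz on $N$: $\sum_{n\in N}g_n\abs{a_n}\leq R_N(M)U_N(a)$, so $w_N\leq R_N(M)\,r_N(a;M)$. Then use $r_N=\sqrt{1-r^2}$ from \eqref{eq:high-share} to conclude $w_N+R_Hr\leq F_M(r)$.

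\emph{Expected difficulty.} No step is a genuine obstacle. The only point needing care is the sign convention in $\Delta_1$: the factor $e^{i\omega_n}-1$ must be attached correctly, which is harmless since only its modulus $g_n$ enters. One should also check that the complement identity $r_H^2+r_N^2=1$ holds for the same partition used to define $R_H(M)$ and $R_N(M)$. If $N=\emptyset$, then $R_N(M)=0$ and $w_N=0$, and the chain holds trivially.
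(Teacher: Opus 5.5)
Your proposal is correct and follows the paper's proof link for link. The paper uses the same four steps: the integer-to-real relaxation of the maximum, the triangle inequality yielding $T(a)$, the definitional split of $T(a)$ over $N$ and $H$, and Cauchy--Schwarz on each band to replace $\kappa_H(a;M)$ by $R_H(M)$ and $w_N(a;M)$ by $R_N(M)\sqrt{1-r^2}$. One small nit: $r_H(a;M)$ is already well defined because $a\neq0$, so the hypothesis $U_H(a)>0$ is needed only for $\kappa_H(a;M)$.
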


\begin{proof}
The integer-window maximum ranges over a subset of the continuous
relative distances, so
$\widehat{\overline G}_M(a)\leq\overline{\mathcal G}(a)$.  For every $t$,
the triangle inequality gives
\begin{equation}
\frac{\abs{\Delta_1X_\cF^a(t)}}{U(a)}
\leq
\frac{\sum_{n\in\cF}g_n\abs{a_n}}{U(a)}
=T(a),
\end{equation}
and hence $\overline{\mathcal G}(a)\leq T(a)$.  The identity
$T(a)=w_N(a;M)+\kappa_H(a;M)r$ is given by
\Eqref{eq:pointwise-upper-certificate}.  Cauchy--Schwarz on the high band
gives
\begin{equation}
\kappa_H(a;M)
\leq
\left(\sum_{n\in H}g_n^2\right)^{1/2}
=R_H(M),
\end{equation}
and hence
\begin{equation}
T(a)
\leq
w_N(a;M)+R_H(M)\frac{U_H(a)}{U(a)}
=w_N(a;M)+R_H(M)r.
\end{equation}
Applying Cauchy--Schwarz to the non-high part gives
\begin{equation}
w_N(a;M)
\leq
R_N(M)\frac{U_N(a)}{U(a)}
=R_N(M)\sqrt{1-r^2}.
\end{equation}
Substitution proves \Eqref{eq:exact-gain-normalized-envelope}.
\end{proof}

The middle quantity $T(a)$ is a coefficient-specific, phase-agnostic upper
certificate obtained from the triangle inequality.  The final function $F_M$
is an upper envelope that retains only the two band norms.  Neither is a
prediction of $\mathbb E[\widehat{\overline G}_M\mid r_H]$.  In particular,
\begin{equation}
F_M(0)=R_N(M),
\qquad
r_*(M):=\frac{R_H(M)}{R_\cF},
\label{eq:position-envelope-endpoint-turn}
\end{equation}
so the envelope need not vanish at zero certified-high norm share and is
increasing only for $0\leq r<r_*(M)$.

\begin{corollary}[Sharper coefficient-shape floor]
\label{cor:coefficient-shape-floor-app}
For every certified integer $M\geq2$, every $a\neq0$ with $U_H(a)>0$, and every
$\zeta>0$, define
\begin{equation}
f_{\mathrm{pos}}(a;M,\zeta)
:=
\frac{[\zeta-w_N(a;M)]_+}{\kappa_H(a;M)}.
\label{eq:coefficient-shape-normalized-floor}
\end{equation}
If $\widehat{\overline G}_M(a)\geq\zeta$, then
\begin{equation}
r_H(a;M)\geq f_{\mathrm{pos}}(a;M,\zeta).
\label{eq:coefficient-shape-floor-app}
\end{equation}
If the right-hand side exceeds one, the requested response is impossible.
\end{corollary}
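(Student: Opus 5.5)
This corollary follows directly from Proposition~\ref{prop:exact-gain-normalized-envelope}, in the same way that Corollary~\ref{cor:normalized-positional-share-floor} follows from Lemma~\ref{lem:stable-adjacent-response-envelope}.

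First we chain the hypothesis with the proposition. The assumption $\widehat{\overline G}_M(a)\geq\zeta$ combined with \Eqref{eq:exact-gain-normalized-envelope} gives
\[
\zeta\leq\widehat{\overline G}_M(a)\leq T(a)=w_N(a;M)+\kappa_H(a;M)\,r_H(a;M).
\]
Rearranging yields $\kappa_H(a;M)\,r_H(a;M)\geq\zeta-w_N(a;M)$.

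Next we make sure the division by $\kappa_H$ is legitimate, which is the only delicate point. We need $\kappa_H(a;M)>0$. Because $U_H(a)>0$, some $n\in H$ has $a_n\neq0$. Every frequency satisfies $0<\omega_n\leq1<2\pi$, so $g_n=\abs{e^{i\omega_n}-1}>0$. Hence $\sum_{n\in H}g_n\abs{a_n}>0$, and therefore $\kappa_H(a;M)>0$.

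We then split on the sign of $\zeta-w_N(a;M)$:
\begin{itemize}
\item If $\zeta-w_N(a;M)\leq0$, the floor $f_{\mathrm{pos}}$ equals $0$, and the claim holds because $r_H\geq0$.
\item If $\zeta-w_N(a;M)>0$, dividing the rearranged inequality by $\kappa_H>0$ gives $r_H(a;M)\geq[\zeta-w_N]_+/\kappa_H=f_{\mathrm{pos}}(a;M,\zeta)$.
\end{itemize}

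Finally, by definition $r_H(a;M)=U_H(a)/U(a)\leq1$. So if $f_{\mathrm{pos}}>1$, the inequality just proved cannot hold. Since it was derived from the response assumption, the response level $\zeta$ cannot be attained, which proves the impossibility clause.

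As a sanity check, $\kappa_H\leq R_H(M)$ and $w_N\geq0$, so this floor is at least as sharp as the band-norm version obtained from the relaxed inequality.
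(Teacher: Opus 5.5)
Your proposal is correct and matches the paper's proof: you chain the hypothesis through the envelope $\widehat{\overline G}_M(a)\leq T(a)=w_N(a;M)+\kappa_H(a;M)\,r_H(a;M)$, use $U_H(a)>0$ and the positivity of every $g_n$ to get $\kappa_H(a;M)>0$, and rearrange. Your justification that $g_n>0$ (because $0<\omega_n\leq1<2\pi$), your case split on the sign of $\zeta-w_N(a;M)$, and your derivation of the impossibility clause from $r_H\leq1$ fill in details the paper leaves implicit.
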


\begin{proof}
Let $r=r_H(a;M)$.  The response condition and
\Eqref{eq:exact-gain-normalized-envelope} give
\begin{equation}
\zeta
\leq
w_N(a;M)+\kappa_H(a;M)r.
\end{equation}
Since $U_H(a)>0$ and every $g_n$ is positive,
$\kappa_H(a;M)>0$.  Rearranging proves
\Eqref{eq:coefficient-shape-floor-app}.
\end{proof}

\begin{corollary}[Monotone coefficient-specific relaxation]
\label{cor:coefficient-specific-monotone-floor-app}
Under the assumptions of
Corollary~\ref{cor:coefficient-shape-floor-app}, define
\begin{equation}
f_{\mathrm{pos}}^{\mathrm c}(a;M,\zeta)
:=
\frac{[\zeta-w_N(a;M)]_+}{R_\cF}.
\label{eq:coefficient-specific-normalized-floor}
\end{equation}
If $\widehat{\overline G}_M(a)\geq\zeta$, then
\begin{equation}
r_H(a;M)
\geq f_{\mathrm{pos}}(a;M,\zeta)
\geq f_{\mathrm{pos}}^{\mathrm c}(a;M,\zeta)
\geq \underline r_{\mathrm{pos}}(M;\zeta).
\label{eq:positional-floor-hierarchy-app}
\end{equation}
For fixed $a$ and $\zeta$, this relaxation is non-decreasing in $M$.
\end{corollary}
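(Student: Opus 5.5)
The first inequality in \Eqref{eq:positional-floor-hierarchy-app} is exactly Corollary~\ref{cor:coefficient-shape-floor-app}, whose hypotheses we are assuming. So the work is in the two remaining inequalities and the monotonicity claim, and I would handle each by comparing the numerator and the denominator of the floors separately.

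\textbf{Second inequality.} The two floors $f_{\mathrm{pos}}$ and $f_{\mathrm{pos}}^{\mathrm c}$ share the nonnegative numerator $[\zeta-w_N(a;M)]_+$. It therefore suffices to show $0<\kappa_H(a;M)\leq R_\cF$. Positivity of $\kappa_H$ was already noted in the proof of Corollary~\ref{cor:coefficient-shape-floor-app}. For the upper bound, the proof of Proposition~\ref{prop:exact-gain-normalized-envelope} gives $\kappa_H(a;M)\leq R_H(M)$ by Cauchy--Schwarz. The decomposition $R_\cF^2=R_H(M)^2+R_N(M)^2$ in \Eqref{eq:appendix-response-factors} then gives $R_H(M)\leq R_\cF$. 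Dividing the common numerator by the larger denominator $R_\cF$ yields $f_{\mathrm{pos}}\geq f_{\mathrm{pos}}^{\mathrm c}$.

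\textbf{Third inequality.} Both floors now have denominator $R_\cF$, so I would compare numerators. From the proof of Lemma~\ref{lem:nonhigh-response}, each $n\in N$ satisfies $g_n\leq\omega_n<\Gamma_\eps(\rho)/M$. Combining this with Cauchy--Schwarz and $|N|\leq h$ gives
\[
w_N(a;M)\leq\frac{\Gamma_\eps(\rho)}{M}\,\frac{A_N(a;M)}{U(a)}\leq\frac{\Gamma_\eps(\rho)\sqrt h}{M}\,r_N(a;M)\leq\frac{\Gamma_\eps(\rho)\sqrt h}{M}.
\]
Hence $\zeta-w_N(a;M)\geq\zeta-\Gamma_\eps(\rho)\sqrt h/M$. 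Since $x\mapsto[x]_+$ is non-decreasing, dividing by $R_\cF$ gives $f_{\mathrm{pos}}^{\mathrm c}\geq\underline r_{\mathrm{pos}}(M;\zeta)$.

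\textbf{Monotonicity in $M$.} With $a$, $\zeta$ and $\eps$ fixed, both $\Gamma_\eps(\rho)$ and $R_\cF$ are independent of $M$. The only $M$-dependence of $f_{\mathrm{pos}}^{\mathrm c}$ therefore enters through the index set $N=\cN_\eps(M)$. Membership $n\in\cH_\eps(M)$ means $M\omega_n\geq\Gamma_\eps(\rho)$, a condition preserved when $M$ increases. So $M\leq M'$ implies $\cN_\eps(M')\subseteq\cN_\eps(M)$. Because every summand $g_n|a_n|$ is nonnegative and $U(a)$ does not depend on $M$, $w_N(a;M)$ is non-increasing in $M$. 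Consequently $[\zeta-w_N(a;M)]_+$ is non-decreasing, and so is $f_{\mathrm{pos}}^{\mathrm c}$.

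The step I expect to need the most care is this last one: the monotonicity must come from nesting of the fixed-$\eps$ partition, not from the share $r_H(a;M)$ itself. This is the same caveat stressed after Corollary~\ref{cor:normalized-positional-share-floor}. One should also note that the hypothesis $U_H(a)>0$ is needed only to define $f_{\mathrm{pos}}$; the relaxed floor $f_{\mathrm{pos}}^{\mathrm c}$ and its monotonicity are meaningful for every $a\neq0$.
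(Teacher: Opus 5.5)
Your proposal is correct and follows the paper's proof step for step: the first inequality comes from Corollary~\ref{cor:coefficient-shape-floor-app}, the second from $\kappa_H(a;M)\leq R_H(M)\leq R_\cF$, the third from $g_n\leq\omega_n<\Gamma_\eps(\rho)/M$ on $N$ together with Cauchy--Schwarz, and monotonicity from the shrinking of $\cN_\eps(M)$ while $R_\cF$ stays fixed. You spell out the nesting of the partition and the monotonicity of $[\cdot]_+$ explicitly, where the paper states each of these in a single line.
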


\begin{proof}
Cauchy--Schwarz gives
$\kappa_H(a;M)\leq R_H(M)\leq R_\cF$, and hence
$f_{\mathrm{pos}}(a;M,\zeta)\geq
f_{\mathrm{pos}}^{\mathrm c}(a;M,\zeta)$.  The leading lower bound in the
stated hierarchy follows from
Corollary~\ref{cor:coefficient-shape-floor-app}.  Moreover,
$g_n\leq\omega_n<\Gamma_\eps(\rho)/M$ for every $n\in N$, so
Cauchy--Schwarz gives
\begin{equation}
w_N(a;M)
\leq
\frac{\Gamma_\eps(\rho)\sqrt h}{M},
\end{equation}
so $f_{\mathrm{pos}}^{\mathrm c}(a;M,\zeta)\geq
\underline r_{\mathrm{pos}}(M;\zeta)$.
As $M$ increases, $N(M)$ shrinks.  Therefore $w_N(a;M)$ is non-increasing,
while $R_\cF$ is independent of the partition.  Hence
$f_{\mathrm{pos}}^{\mathrm c}(a;M,\zeta)$ is non-decreasing.
\end{proof}

\subsubsection{General Separations}
\label{app:positional-general-tau}

For a separation $\ell>0$, define
\begin{equation}
\Delta_\ell X_J^z(t):=X_J^z(t+\ell)-X_J^z(t),
\end{equation}
\begin{equation}
\beta_N(M,\ell):=\max_{n\in N}\abs{e^{i\ell\omega_n}-1},
\qquad
R_H(M,\ell):=
\left(\sum_{n\in H}\abs{e^{i\ell\omega_n}-1}^2\right)^{1/2}.
\end{equation}

\begin{proposition}[Response bound at a general separation]
\label{prop:general-separation}
For every $t$,
\begin{equation}
\abs{\Delta_\ell X_\cF^z(t)}
\leq
\beta_N(M,\ell)A_N(z;M)+R_H(M,\ell)U_H(z)
\leq
\frac{\ell\Gamma_\eps(\rho)}{M}A_N(z;M)
+R_H(M,\ell)U_H(z).
\label{eq:general-separation-bound}
\end{equation}
\end{proposition}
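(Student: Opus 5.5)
The proof is a pointwise triangle-inequality argument applied separately to the two bands, exactly as in Lemma~\ref{lem:nonhigh-response} and the proof of Lemma~\ref{lem:stable-adjacent-response-envelope}, with the one-step gain $g_n$ replaced by the $\ell$-step gain $\abs{e^{i\ell\omega_n}-1}$.

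\textbf{Step 1: factor the difference.} Fix $t$ and write
\[
\Delta_\ell X_\cF^z(t)=\Rea\sum_{n\in\cF}z_ne^{it\omega_n}\bigl(e^{i\ell\omega_n}-1\bigr).
\]
Split the sum over $\cF=N\cup H$ and apply the triangle inequality. This gives
\[
\abs{\Delta_\ell X_\cF^z(t)}\leq\sum_{n\in N}\abs{e^{i\ell\omega_n}-1}\abs{z_n}+\sum_{n\in H}\abs{e^{i\ell\omega_n}-1}\abs{z_n}.
\]

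\textbf{Step 2: bound the non-high sum.} Bound each gain by its maximum $\beta_N(M,\ell)$. The non-high sum is then at most $\beta_N(M,\ell)A_N(z;M)$.

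\textbf{Step 3: bound the high sum.} Apply Cauchy--Schwarz over $H$. The high sum is at most $R_H(M,\ell)U_H(z)$. Steps 1--3 together give the first inequality.

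\textbf{Step 4: the second inequality.} It suffices to show $\beta_N(M,\ell)\leq\ell\Gamma_\eps(\rho)/M$. For real $\theta$ we have $\abs{e^{i\theta}-1}=2\abs{\sin(\theta/2)}\leq\abs{\theta}$. Every $n\in N=\cN_\eps(M)$ satisfies $M\omega_n<\Gamma_\eps(\rho)$ by \Eqref{eq:certified-partition}. Hence
\[
\abs{e^{i\ell\omega_n}-1}\leq\ell\omega_n<\frac{\ell\Gamma_\eps(\rho)}{M}.
\]
Taking the maximum over $n\in N$ gives the claim. If $N$ is empty, both $A_N$ and the non-high sum vanish, so the convention chosen for $\beta_N$ does not matter.

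There is no substantial obstacle. The only point needing care is this empty-$N$ edge case and the observation that $\ell>0$ is used only in the elementary bound $\abs{e^{i\theta}-1}\leq\abs{\theta}$.
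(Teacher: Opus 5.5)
Your proof is correct and matches the paper's argument: the triangle inequality with the maximal gain $\beta_N(M,\ell)$ on the non-high band, Cauchy--Schwarz on the certified-high band, and $\abs{e^{i\ell\omega_n}-1}\leq\ell\omega_n<\ell\Gamma_\eps(\rho)/M$ on $N$ for the second inequality. Your treatment of the empty-$N$ case is a harmless addition that the paper leaves implicit.
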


\begin{proof}
The first inequality follows from the triangle inequality on $N$ and
Cauchy--Schwarz on $H$.  For the second inequality, use
$\abs{e^{i\ell\omega_n}-1}\leq\ell\omega_n
<\ell\Gamma_\eps(\rho)/M$ on $N$.
\end{proof}

\subsubsection{Coarse Constant Floor}

The following coarser form is sometimes convenient when only uniform norm
budgets are available.

\begin{corollary}[Constant positional-share floor]
\label{cor:constant-floor-app}
Assume $A_N(a;M)\leq A_*$ and $U(a)\leq U_*$.  If
\begin{equation}
M\geq\max\left\{\Gamma_\eps(\rho),
\frac{2\Gamma_\eps(\rho)A_*}{\eta}\right\}
\end{equation}
and $\abs{\Delta_1X_\cF^a(t)}\geq\eta$, then
\begin{equation}
r_H(a;M)\geq\frac{\eta}{4\sqrt h\,U_*}.
\end{equation}
\end{corollary}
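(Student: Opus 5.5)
The plan is to split the adjacent difference into its non-high and certified-high parts, as in the proof of Lemma~\ref{lem:stable-adjacent-response-envelope}. The scale assumption on $M$ makes the non-high part absorb at most half of the response $\eta$. The remaining half must then come from the certified-high band, and a crude norm bound turns this into a lower bound on $U_H(a)$. Throughout we take $\eta>0$, so that $\abs{\Delta_1X_\cF^a(t)}\geq\eta$ forces $a\neq0$ and $0<U(a)\leq U_*$.

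First, fix the relative distance $t$ at which $\abs{\Delta_1X_\cF^a(t)}\geq\eta$. By the triangle inequality,
\[
\abs{\Delta_1X_H^a(t)}\geq\eta-\abs{\Delta_1X_N^a(t)}.
\]
Lemma~\ref{lem:nonhigh-response} gives $\abs{\Delta_1X_N^a(t)}\leq\frac{\Gamma_\eps(\rho)}{M}A_N(a;M)\leq\frac{\Gamma_\eps(\rho)A_*}{M}$. The hypothesis $M\geq 2\Gamma_\eps(\rho)A_*/\eta$ makes this at most $\eta/2$; the other hypothesis $M\geq\Gamma_\eps(\rho)$ only ensures the window is certified. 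Hence $\abs{\Delta_1X_H^a(t)}\geq\eta/2$.

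Second, bound the high-band difference from above by the band norm. Using $g_n=\abs{e^{i\omega_n}-1}\leq2$ and then Cauchy--Schwarz with $\abs{H}\leq h$,
\[
\abs{\Delta_1X_H^a(t)}\leq\sum_{n\in H}g_n\abs{a_n}\leq 2A_H(a)\leq 2\sqrt h\,U_H(a).
\]
This is deliberately cruder than using $R_\cF$, which is what produces the stated constant $4\sqrt h$. Combining the two steps gives $U_H(a)\geq\eta/(4\sqrt h)$. Dividing by $U(a)\leq U_*$ then yields $r_H(a;M)=U_H(a)/U(a)\geq\eta/(4\sqrt h\,U_*)$.

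There is no real obstacle here; the only point needing care is matching the constant. The halving comes from the choice $M\geq2\Gamma_\eps(\rho)A_*/\eta$. The factor $2\sqrt h$ comes from the trivial gain bound $g_n\leq2$ together with Cauchy--Schwarz.
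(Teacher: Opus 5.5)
Your proof is correct and is essentially the paper's argument. The paper takes $\ell=1$ in the general-separation bound, uses the window condition to cap the non-high term at $\eta/2$, and then uses $R_H(M)\leq 2\sqrt h$; you instead apply $g_n\leq 2$ before Cauchy--Schwarz rather than after, which gives the same constant $4\sqrt h$.
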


\begin{proof}
Taking $\ell=1$ in \Eqref{eq:general-separation-bound}, the assumed lower bound
on the window length makes the non-high term at most $\eta/2$.  Thus
$R_H(M)U_H(a)\geq\eta/2$.  Divide by $U(a)\leq U_*$ and use
$R_H(M)\leq2\sqrt h$.
\end{proof}
\subsection{Proof of Theorem~\ref{thm:long-range-order-ambiguity}}
\label{app:proof-long-range-order-ambiguity}

\LongRangeOrderAmbiguity*

\begin{proof}
We compare both translated-window laws with a common Gaussian reference law.
We first replace the near-window CDF in the ordering probability, then replace
the far-window CDF, and finally pass to the asymptotic regime.

Let $Y_{\mathrm{near}}$ and $Y_{\mathrm{far}}$ be the independent interval
scores in \Eqref{eq:long-range-order-chance-limit}, with CDFs
$F_{\mathrm{near}}$ and $F_{\mathrm{far}}$.  Since both translated windows
have length $M$, are fully high, and satisfy the stated Gaussian-shape
calibration, Proposition~\ref{prop:high-calibration} compares them with the
same continuous Gaussian reference CDF
\begin{equation}
G_a(x)
:=
\Phi_{\mathrm{std}}\!\left(\frac{\sqrt2\,x}{U(a)}\right).
\label{eq:common-interval-gaussian-proof}
\end{equation}
Specifically,
\begin{equation}
\begin{aligned}
\sup_x\abs{F_{\mathrm{near}}(x)-G_a(x)}
&\leq
\eta_{\mathrm{near}}
:=\tau_{\mathrm{near}}(a;M)+c_{\mathrm G}(\eps),\\
\sup_x\abs{F_{\mathrm{far}}(x)-G_a(x)}
&\leq
\eta_{\mathrm{far}}
:=\tau_{\mathrm{far}}(a;M)+c_{\mathrm G}(\eps).
\end{aligned}
\label{eq:interval-cdf-calibration-proof}
\end{equation}

Using the left limit of the near-interval CDF, the strict ordering event is
\begin{equation}
p_{\mathrm{ord}}(a;M)
=
\int_{\mathbb R}F_{\mathrm{near}}(y^-)\,dF_{\mathrm{far}}(y).
\label{eq:ordering-left-cdf-proof}
\end{equation}
Continuity of $G_a$ and
\Eqref{eq:interval-cdf-calibration-proof} give
\begin{equation}
\abs{
p_{\mathrm{ord}}(a;M)
-\int_{\mathbb R}G_a(y)\,dF_{\mathrm{far}}(y)
}
\leq\eta_{\mathrm{near}}.
\label{eq:near-interval-replacement-proof}
\end{equation}
If $Z$ has CDF $G_a$ and is independent of $Y_{\mathrm{far}}$, then
\begin{align}
\int_{\mathbb R}G_a(y)\,dF_{\mathrm{far}}(y)
&=\mathbb P[Z<Y_{\mathrm{far}}]\notag\\
&=\int_{\mathbb R}\bigl(1-F_{\mathrm{far}}(z)\bigr)\,dG_a(z).
\label{eq:far-interval-replacement-identity-proof}
\end{align}
The second inequality in \Eqref{eq:interval-cdf-calibration-proof} and
$\int(1-G_a)\,dG_a=1/2$ therefore imply
\begin{equation}
\abs{
\int_{\mathbb R}G_a(y)\,dF_{\mathrm{far}}(y)-\frac12
}
\leq\eta_{\mathrm{far}}.
\label{eq:far-interval-replacement-proof}
\end{equation}
Combining the last two bounds gives
\begin{equation}
\abs{p_{\mathrm{ord}}(a;M)-\frac12}
\leq\tau_{\mathrm{ord}}(a;M,\eps).
\label{eq:long-range-order-finite-bound-proof}
\end{equation}
The left-CDF identity handles the strict event without a no-ties assumption.

Finally, $\eps_M\to0$.  Taking $\eps=\eps_M$ in
\Eqref{eq:long-range-order-finite-bound-proof} and using
$\tau_{\mathrm{ord}}(a;M,\eps_M)\to0$ proves
\Eqref{eq:long-range-order-chance-limit}.
\end{proof}
\subsection{Proof of Lemma~\ref{lem:single-key-exceedance}}
\label{app:proof-single-key-exceedance}

\SingleKeyExceedance*

\begin{proof}
We first compare the exact and calibrated moments, then compare the true tail
with the exact-moment Gaussian approximation using the shape discrepancy, and
finally propagate the moment replacements through the Gaussian CDF.

Let $\mu_H:=\avg{X_H^b}$ and
$\sigma_H^2:=\Var_M(X_H^b)$.  The exact sum $Y_b=Y_N+Y_H$ gives
\begin{equation}
\mu_b=\mu_N+\mu_H,
\qquad
\sigma_b^2=\sigma_N^2+\sigma_H^2+2\kappa_{NH}.
\label{eq:single-key-exact-moment-split-proof}
\end{equation}
Proposition~\ref{prop:high-calibration}, applied with $z=b$, therefore yields
\begin{equation}
\abs{\mu_b-\mu_N}\leq\eps U_H(b),
\qquad
\abs{\sigma_b^2-\widehat\sigma_b^2}
\leq\eps U_H(b)^2.
\label{eq:single-key-moment-replacement-proof}
\end{equation}
The assumed variance floor implies
$\sigma_b\geq s_-(b;M)>0$.

By the definition in \Eqref{eq:single-key-shape-error} and the identity
$1-\Phi_{\mathrm{std}}(x)=\Phi_{\mathrm{std}}(-x)$,
\begin{equation}
\abs{
p_{\mathrm{exc}}(b;S,M)
-\Phi_{\mathrm{std}}\!\left(\frac{\mu_b-S}{\sigma_b}\right)
}
\leq\Delta_{\mathrm{exc}}(b;M).
\label{eq:single-key-exact-gaussian-tail-proof}
\end{equation}
A no-ties assumption is unnecessary because
$p_{\mathrm{exc}}=1-\mathbb P[Y_b\leq S]$ uses exactly the closed-threshold CDF
appearing in \Eqref{eq:single-key-shape-error}.

The moment bounds also imply
\begin{align}
\abs{
\frac{\mu_b-S}{\sigma_b}
-\frac{\mu_N-S}{\widehat\sigma_b}
}
&\leq
\frac{\abs{\mu_b-\mu_N}}{\sigma_b}
+\abs{S-\mu_N}
\abs{\frac1{\sigma_b}-\frac1{\widehat\sigma_b}}\notag\\
&\leq
\frac{\eps U_H(b)}{s_-(b;M)}
+\frac{\abs{S-\mu_N}\eps U_H(b)^2}
{s_-(b;M)\widehat\sigma_b
 \bigl(s_-(b;M)+\widehat\sigma_b\bigr)}.
\label{eq:single-key-snr-replacement-proof}
\end{align}
Here we used
$\abs{\sigma_b-\widehat\sigma_b}
=\abs{\sigma_b^2-\widehat\sigma_b^2}/
(\sigma_b+\widehat\sigma_b)$.
Since $\Phi_{\mathrm{std}}$ is $1/\sqrt{2\pi}$-Lipschitz, combining
\Eqref{eq:single-key-exact-gaussian-tail-proof} and
\Eqref{eq:single-key-snr-replacement-proof} proves
\Eqref{eq:single-key-exceedance-calibration}.
\end{proof}
\subsection{Proof of Corollary~\ref{cor:single-key-high-share-ceiling}}
\label{app:proof-single-key-high-share-ceiling}

\SingleKeyHighShareCeiling*

\begin{proof}
The proof converts the probability constraint into a variance bound, rewrites
that bound in the share coordinate, and then solves the resulting quadratic
inequality.

If $p_{\mathrm{exc}}(b;S,M)\leq\delta$, then
Lemma~\ref{lem:single-key-exceedance} gives
\begin{equation}
\widehat p_{\mathrm{exc}}(b;S,M)
\leq\delta+\tau_{\mathrm{exc}}(b;S,M)<\frac12.
\label{eq:single-key-predictor-upper-bound-proof}
\end{equation}
By monotonicity and symmetry of the standard Gaussian CDF,
\begin{equation}
S-\mu_N\geq \xi_{\mathrm{exc}}\widehat\sigma_b>0,
\qquad
\frac{\widehat\sigma_b^2}{U_N(b)^2}
\leq\frac{\lambda_S^2}{\xi_{\mathrm{exc}}^2}.
\label{eq:single-key-variance-ceiling-proof}
\end{equation}
In particular, $\lambda_S>0$.  Define
\begin{equation}
t:=\frac{U_H(b)}{U_N(b)}
=\frac{r_H(b;M)}{\sqrt{1-r_H(b;M)^2}}>0.
\label{eq:single-key-share-coordinate-proof}
\end{equation}
Substituting the normalized descriptors into
\Eqref{eq:single-key-variance-ceiling-proof} gives the quadratic condition
\begin{equation}
\frac12t^2+2\gamma_{NH}t+\beta_N
\leq\frac{\lambda_S^2}{\xi_{\mathrm{exc}}^2}.
\label{eq:single-key-quadratic-ceiling-proof}
\end{equation}
The actual $t>0$ is feasible, so the discriminant in
\Eqref{eq:single-key-high-share-ceiling} is nonnegative and its upper root
$\overline t_{\mathrm{exc}}$ satisfies
$t\leq\overline t_{\mathrm{exc}}$; in particular,
$\overline t_{\mathrm{exc}}>0$.  The map
$t\mapsto t/\sqrt{1+t^2}$ is strictly increasing on $[0,\infty)$, which proves
\Eqref{eq:single-key-high-share-ceiling-conclusion} and also gives
$u_{\mathrm{exc}}<1$.
\end{proof}
\subsection{Proof of Theorem~\ref{thm:maximum-reliable-context-formal}}
\label{app:proof-maximum-reliable-context}

\MaximumReliableContextFormal*

\begin{proof}
Fix a calibrated window at which the same margin $d$ retains normalized
adjacent response $\zeta$.  Corollary~\ref{cor:normalized-positional-share-floor}
applied to $d$ gives
\begin{equation}
r:=r_H(d;M)\geq\ell(M;\zeta).
\label{eq:reversal-share-floor-proof}
\end{equation}
Write $N=\cN_\eps(M)$ and $H=\cH_\eps(M)$.  The high-band moment bounds in
\Eqref{eq:high-calibration} imply
\begin{equation}
\abs{\avg{X_H^d}}\leq\eps U_H(d),
\qquad
\sqrt{\Var_M(X_H^d)}\geq c_\eps U_H(d).
\label{eq:reversal-high-moments-proof}
\end{equation}
For the non-high band, Cauchy--Schwarz gives
\begin{equation}
\abs{\avg{X_N^d}}
\leq\sum_{n\in N}\abs{d_n}
\leq\sqrt{k_N(M)}\,U_N(d),
\qquad
\sqrt{\Var_M(X_N^d)}
\leq\sqrt{k_N(M)}\,U_N(d).
\label{eq:reversal-nonhigh-moments-proof}
\end{equation}
The second inequality follows because centering cannot increase the root mean
square and $\abs{X_N^d(m)}\leq\sum_{n\in N}\abs{d_n}$ pointwise.

Since $U_H(d)=U(d)r$ and $U_N(d)=U(d)\sqrt{1-r^2}$, the triangle and reverse
triangle inequalities in finite-window $L_2$ yield
\begin{equation}
\abs{\mu_D(M)}\leq U(d)u_M(r),
\qquad
\sigma_D(M)\geq U(d)v_M(r).
\label{eq:reversal-snr-envelope-proof}
\end{equation}
Let
$g_M(s):=c_\eps s-\sqrt{k_N(M)}\sqrt{1-s^2}$; this function is
non-decreasing on $[0,1]$ and $v_M(s)=[g_M(s)]_+$.  If $v_M(r)=0$, then
$r\geq\ell(M;\zeta)$ implies $v_M(\ell(M;\zeta))=0$, so both envelope values
in \Eqref{eq:main-reversal-floor} vanish and the desired bound follows from
$p_{\mathrm{rev}}(d;M)\geq0$.  Otherwise,
\Eqref{eq:reversal-threshold-error} and
\Eqref{eq:reversal-snr-envelope-proof} give
\begin{equation}
\begin{aligned}
p_{\mathrm{rev}}(d;M)
&\geq
\Phi_{\mathrm{std}}\!\left(-\frac{\mu_D(M)}{\sigma_D(M)}\right)
-\overline\tau_{\mathrm G}\\
&\geq
\Phi_{\mathrm{std}}\!\left(-\frac{u_M(r)}{v_M(r)}\right)
-\overline\tau_{\mathrm G}.
\end{aligned}
\label{eq:reversal-probability-envelope-proof}
\end{equation}
For the second line, explicitly,
$-\mu_D/\sigma_D\geq-\abs{\mu_D}/\sigma_D\geq-u_M(r)/v_M(r)$.
Since a probability is nonnegative, taking the positive part of the last
lower bound proves
$p_{\mathrm{rev}}(d;M)\geq
\psi_M(r;\overline\tau_{\mathrm G})$.

It remains to verify monotonicity.  On the region $v_M(r)>0$, set
$x(r):=\sqrt{1-r^2}/r$.  Then
\begin{equation}
\frac{u_M(r)}{v_M(r)}
=
\frac{\eps+\sqrt{k_N(M)}x(r)}
{c_\eps-\sqrt{k_N(M)}x(r)}.
\label{eq:reversal-envelope-ratio-proof}
\end{equation}
The right-hand side is non-decreasing in $x$, whereas $x(r)$ decreases with $r$.
Thus $\psi_M(r;\overline\tau_{\mathrm G})$ is non-decreasing in $r$ on the
positive-$v_M$ region.  On the preceding region $v_M=0$, it is identically
zero.  If $k_N(M)>0$, the ratio diverges at the boundary and its Gaussian
lower tail tends to zero.  If $k_N(M)=0$, the ratio equals $\eps/c_\eps$
for every $r>0$, and the prescribed value $\psi_M(0;\overline\tau_{\mathrm G})=0$
also preserves monotonicity.  Positive-part clipping therefore preserves
global monotonicity.
Combining this fact with \Eqref{eq:reversal-share-floor-proof} proves
\Eqref{eq:main-reversal-floor}.

Finally, $\ell(M;\zeta)$ is non-decreasing in $M$, while $k_N(M)$ is
non-increasing because the certified-high prefix expands with $M$.  The same
ratio in \Eqref{eq:reversal-envelope-ratio-proof} depends on these quantities
through
$\sqrt{k_N(M)}\sqrt{1-\ell(M;\zeta)^2}/\ell(M;\zeta)$, which is
non-increasing whenever $\ell(M;\zeta)>0$.  When $\ell(M;\zeta)=0$ or
$v_M(\ell(M;\zeta))=0$, the floor is zero.  Therefore
$\underline p_{\mathrm{rev}}(M;\zeta,\overline\tau_{\mathrm G})$ is
non-decreasing across $\mathfrak M_{\mathrm{cal}}$.  Its minimum and maximum
over this finite family are attained at $M_-$ and $M_+$, respectively.
Thus \Eqref{eq:reversal-admissible-threshold-range} is equivalent to the
sub-threshold and crossing sets both being nonempty.  At and beyond the
first scale $M_\dagger$ in \Eqref{eq:main-reversal-crossing-scale}, the
lower bound exceeds $\delta$, so any margin retaining response $\zeta$
has $p_{\mathrm{rev}}(d;M)>\delta$.  Finiteness and monotonicity make the
largest sub-threshold window the grid point immediately preceding
$M_\dagger$ and rule out every larger calibrated scale.  This proves
$M_{\max}=M_{\max}^{\mathrm{cert}}$ in
\Eqref{eq:main-certified-maximum-context}.

To prove the explicit sufficient condition, observe that whenever
$v_M(r)>0$, \Eqref{eq:reversal-envelope-ratio-proof} and $x(r)\geq0$ give
$u_M(r)/v_M(r)\geq\eps/c_\eps$.  Monotonicity of the Gaussian CDF and
positive-part clipping imply
$\psi_M(r;\overline\tau_{\mathrm G})\leq\delta_\star$.
The same bound holds when $v_M(r)=0$, since $\psi_M$ then vanishes.
Consequently $\underline p_{\mathrm{rev}}(M)\leq\delta_\star<1/2$.
For $M\geq M_{\mathrm{suff}}$, the definition in
\Eqref{eq:reversal-sufficient-window} gives
$M\omega_{\min}\geq\Gamma_\eps(\rho)$ and
$M>\Gamma_\eps(\rho)\sqrt h/\zeta$.  Hence $k_N(M)=0$ and
$\ell(M;\zeta)>0$, so
\[
\frac{u_M(\ell(M;\zeta))}{v_M(\ell(M;\zeta))}
=\frac{\eps}{c_\eps},
\qquad
\underline p_{\mathrm{rev}}(M)=\delta_\star.
\]
If $M_+\geq M_{\mathrm{suff}}$ and $0\leq\delta<\delta_\star$, this
equality proves the strict upper inequality in
\Eqref{eq:reversal-admissible-threshold-range}.  Its lower inequality
ensures that the sub-threshold set is also nonempty, completing the proof.
\end{proof}
\subsection{Proof of Theorem~\ref{thm:fully-high-reversal}}
\label{app:fully-high-reversal}

\FullyHighReversal*

\begin{proof}
We first derive the finite-window reversal bound and then obtain the
asymptotic limit by letting the calibration error vanish.

Fix $0<\eps<1/2$ and a window for which $\cH_\eps(M)=\cF$.  Because
$d=a-b\neq0$,
\begin{equation}
U_H(d)=U(d)>0,
\qquad
Y_H^d=X_\cF^d(\mathbf m).
\label{eq:fully-high-identification-proof}
\end{equation}
Under the Gaussian-shape calibration assumed in the theorem,
Proposition~\ref{prop:high-calibration}, applied with $z=d$, gives
\begin{equation}
\sup_{x\in\mathbb R}
\abs{
\mathbb P[X_\cF^d(\mathbf m)\leq x]
-\Phi_{\mathrm{std}}\!\left(\frac{\sqrt{2}\,x}{U(d)}\right)
}
\leq
\tau_{\mathrm{H,G}}(d;M)+c_{\mathrm G}(\eps).
\label{eq:fully-high-cdf-bound-proof}
\end{equation}
Let $x_k=-1/k$.  Then $x_k\uparrow0$ and
$\{X_\cF^d(\mathbf m)<0\}=\bigcup_{k\geq1}
\{X_\cF^d(\mathbf m)\leq x_k\}$.  Continuity from below of probability and
continuity of $\Phi_{\mathrm{std}}$ allow $k\to\infty$ in
\Eqref{eq:fully-high-cdf-bound-proof}, which proves
\Eqref{eq:fully-high-pairwise-reversal}.  This left-limit argument controls
the strict reversal event without imposing a no-ties assumption.

For the asymptotic claim, the definition in the theorem gives
\begin{equation}
\Gamma_{\eps_M}(\rho)=M\omega_{h-1},
\qquad
\cH_{\eps_M}(M)=\cF,
\qquad
\eps_M\longrightarrow0.
\label{eq:vanishing-epsilon-fully-high-proof}
\end{equation}
Applying the finite-window reversal bound with $\eps=\eps_M$ and using
$\tau_{\mathrm{H,G}}(d;M)\to0$ together with
$c_{\mathrm G}(\eps_M)\to0$ proves
\Eqref{eq:fully-high-reversal-limit}.
\end{proof}

\section{Empirical Validation of the Theory}
\label{app:theory-validation}
\label{sec:experiment}
\label{app:historical-calibration-experiments}

We examine the numerical consistency and tightness of the local-response
bounds on fixed model coefficient vectors, then document the frequency
partition and score-distribution evidence used in the main text.
The local-response experiment measures the maximum normalized adjacent gap
that appears in the theoretical certificate. The task-level Positional
Score in Appendix~\ref{app:current-score-definitions} averages these gaps
over relative distances. All experiments below hold cached coefficient
vectors fixed while varying relative distance over the stated windows.
\subsection{Local Positional-Response Envelope and Local-Response Floor}
\label{subsec:experiment-positional-response}

We audit the sharper coefficient-specific refinement of the stable
local-response envelope, recorded in
Appendix~\ref{app:exact-gain-response-envelope}.  This experiment uses the theorem-side choice
$C_{\mathrm F}=2\pi$ and $\eps=10^{-2}$ at $M=32768$.  We evaluate Qwen3-8B
and Llama-3.1-8B using ten global queries, $200$ sampled key tokens, and all
$32$ first-layer attention heads.  Thus each model contributes $64{,}000$
query--head--key coefficient vectors.  Qwen3 has the geometric RoPE grid in
\Eqref{eq:grid}; the certified partition has $\abs{H}=8$ and $\abs{N}=56$,
so this model lies directly within the assumptions of the main theory.  For
Llama-3.1, we select the largest prefix satisfying the realized
signed-frequency separation condition in the Fourier-frame argument, which
gives $\abs{H}=9$ and $\abs{N}=55$.  Since Llama-3.1 uses scaled,
non-geometric RoPE frequencies, we report its row as a stress test outside the geometric-grid assumptions.  This subsection audits the
local result in Appendix~\ref{subsubsec:local-positional-response}; it does not
test the two-interval calibration in the near--far theorem of
Appendix~\ref{subsubsec:long-range-orderability}.

For each coefficient vector $a_i$, we use the observable gap $\zeta_i$ and
its pointwise certificate $T(a_i)$ from
Proposition~\ref{prop:exact-gain-normalized-envelope}.  The proposition and
Corollary~\ref{cor:coefficient-shape-floor-app} require, respectively,
\begin{equation}
\zeta_i\leq T(a_i),
\qquad
f_{\mathrm{pos}}(a_i;M,\zeta_i)\leq r_H(a_i;M).
\label{eq:experiment-positional-audit-inequalities}
\end{equation}

Figure~\ref{fig:experiment-positional-response} reports all $128{,}000$
vectors.  Every point satisfies both inequalities in
\Eqref{eq:experiment-positional-audit-inequalities}.  In the left column, the
median tightness ratios $\zeta_i/T(a_i)$ are $0.829$ for Qwen3 and $0.788$ for
Llama-3.1.  The middle column shows a strong monotone association between the
certified-high norm share and both responses: the Spearman correlations of $r_H$
with $(\zeta_i,T(a_i))$ are $(0.975,0.969)$ for Qwen3 and $(0.988,0.989)$ for
Llama-3.1.  This panel describes the sampled coefficient vectors,
because $T(a_i)$ retains the point-specific quantities $w_N(a_i;M)$ and
$\kappa_H(a_i;M)$.

In the right column, the median ratios
$f_{\mathrm{pos}}(a_i;M,\zeta_i)/r_H(a_i;M)$ are $0.782$ for Qwen3 and
$0.743$ for Llama-3.1.  Here $\zeta_i$ is the observed maximum gap of the
same coefficient vector used to compute $f_{\mathrm{pos}}$.  This is a same-sample consistency and tightness check. The proximity of the lower
bound to the equality line quantifies its empirical tightness.

\begin{figure}[H]
\centering
\includegraphics[width=\linewidth]{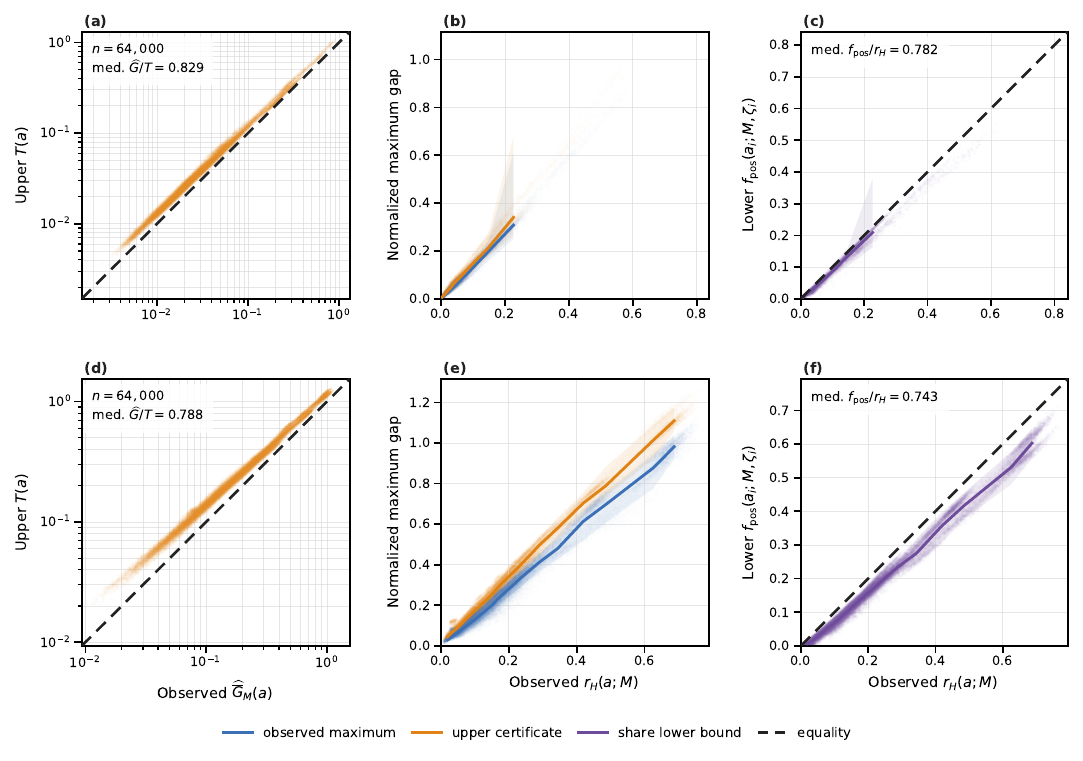}
\caption{Pointwise consistency and tightness audit of the sharper
coefficient-specific local positional-response envelope and local-response
floor from the appendix.  Panels (a)--(c) form the Qwen3-8B row and
panels (d)--(f) form the Llama-3.1-8B row; every point is
one first-layer query--head--key coefficient vector.  Left: the observed
integer-window maximum gap $\zeta_i$ against the coefficient-specific upper
certificate $T(a_i)$ on logarithmic axes.  Middle: the observed gap (blue) and
upper certificate (orange) against $r_H(a_i;M)$; solid curves and shaded
regions are medians and $10$--$90\%$ ranges over $32$ equal-count $r_H$ bins,
not an $r_H$-only predictor.  Right: the pointwise lower bound
$f_{\mathrm{pos}}(a_i;M,\zeta_i)$ against the observed certified-high norm share.
Dashed diagonals indicate equality in the left and right columns.  The Qwen3
row is an in-assumption geometric-grid audit, whereas the non-geometric Llama
row is an out-of-assumption stress test.  The right column uses the same point to set
$\zeta_i$ and therefore measures consistency and tightness rather than
out-of-sample predictive accuracy.}
\label{fig:experiment-positional-response}
\end{figure}

\subsection{Experimental Details for Figure 3}
\label{app:fig3-experimental-details}
\label{app:high-frequency-evidence}

This subsection gives the experimental protocols for
\S\ref{subsec:partition} and Figure~\ref{fig:main-high-frequency-cutoff}.

\subsubsection{Empirical Choice of the Frequency Split}
\label{app:high-frequency-cutoff-protocol}

For practical frequency selection, we use an empirical value
$C_{\mathrm F}^{\mathrm{emp}}$ for the constant in the theoretical split
criterion:
\begin{equation}
H_{\mathrm{op}}(M)=\left\{n:M\omega_n\geq
\frac{2C_{\mathrm F}^{\mathrm{emp}}}{(1-\rho)\eps_{\mathrm{ref}}}\right\},
\qquad C_{\mathrm F}^{\mathrm{emp}}=0.06,\quad\eps_{\mathrm{ref}}=0.1.
\label{eq:experiment-operational-partition}
\end{equation}
Here $\omega_n$ are the actual model frequencies, including Llama-3.1's
native frequency scaling, and $\rho=B^{-1/h}$ is computed from the base
grid. We choose $C_{\mathrm F}^{\mathrm{emp}}=0.06$ based on empirical
observations from the evaluated models and windows, and use this single
value across both models and all three context lengths. The selected
bands achieve randomness scores between $0.9936$ and $0.9997$ under the
metric defined below, showing close agreement between measured and
predicted noise amplitudes across all six settings.

The proof of Lemma~\ref{lem:frame-bound} uses $C_{\mathrm F}=2\pi$ as a
conservative sufficient constant for a uniform bound over arbitrary fixed
coefficient vectors. This choice establishes the theoretical guarantee in
Appendix~\ref{app:high}; $C_{\mathrm F}^{\mathrm{emp}}$ adapts the split
criterion to the observed model coefficients. The reference value
$\eps_{\mathrm{ref}}$ controls the empirical threshold, and we assess the
resulting split through measured noise-amplitude agreement. The empirical rule
also applies to Llama-3.1's scaled frequency grid, whose evaluation lies
outside the geometric-grid assumptions of the theoretical guarantee.
The intervention experiments retain their original parameter
$c_{\mathrm{int}}=0.05$, as documented in
Appendix~\ref{app:intervention-experiment}.

For Figures~\ref{fig:high-frequency-cutoff-qwen}
and~\ref{fig:high-frequency-cutoff-llama}, we evaluate
Qwen3-8B~\citep{qwen3} and Llama-3.1-8B~\citep{llama3} at
$M\in\{8192,32768,131072\}$. For each model, the same $40{,}960$
first-layer query--pair--head coefficient vectors are used at all three
window lengths. We hold these cached vectors fixed and evaluate their RoPE
score margins over $m\in\{0,\ldots,M-1\}$, using the paper's convention
$m=\text{query position}-\text{key position}$ and margin coefficients
$d_n=Q_n\overline{(K_{a,n}-K_{b,n})}/\sqrt{d_{\mathrm{head}}}$ for keys
$a$ and $b$. Thus the 128k curves extend the
relative-distance window of fixed representations; they require no 128k
model forward pass. For every prefix $H_k=\{0,\ldots,k-1\}$, we compute
the aggregate variance ratio
\begin{equation}
\mathcal R_{\mathrm{var}}(H_k;M)
:=
\frac{\sum_i \Var_M(X_{H_k}^{d_i})}
{\sum_i U_{H_k}(d_i)^2/2},
\label{eq:experiment-aggregate-variance-ratio}
\end{equation}
where $i$ indexes the sampled query--pair--head points and $\Var_M$ uses
the uniform distribution over the $M$ integer distances. We then compare
noise amplitudes through
\begin{equation}
a(H_k;M)=\sqrt{\mathcal R_{\mathrm{var}}(H_k;M)},
\qquad
R(H_k;M)=\max\{0,1-|a(H_k;M)-1|\}.
\label{eq:experiment-randomness-proxy}
\end{equation}
The amplitude ratio $a$ is formed after aggregating the measured and
predicted variances. The plotted randomness proxy $R\in[0,1]$ equals one
at exact amplitude agreement and decreases with the absolute relative
amplitude error. For example, ratios $a=0.9$ and $a=1.1$ both give
$R=0.9$. Positive cross-frequency covariance can give
$\mathcal R_{\mathrm{var}}>1$; the transformation preserves this discrepancy
as a decrease in $R$. We retain all resulting dips and recoveries.
This metric measures noise-scale agreement and supplies no test of
Gaussian shape or temporal independence.

In Figures~\ref{fig:high-frequency-cutoff-qwen}
and~\ref{fig:high-frequency-cutoff-llama}, the gray curves plot $R(H_k;M)$
as the number $k$ of retained components increases. Orange circles mark
the largest prefixes satisfying
\Eqref{eq:experiment-operational-partition} for 8k, 32k, and 128k windows,
from left to right. The main axes magnify the selected cutoffs and nearby
changes; insets show all 64 prefixes, with orange boxes indicating the
enlarged regions.

\subsubsection{Score-Distribution Illustration}
\label{app:high-frequency-distribution-protocol}
Figure~\ref{fig:high-frequency-gaussian-qwen} reuses the Qwen3-8B layer-0
projection bank. For each of its 10 query tokens and 32 query heads, we
uniformly sample four distinct key tokens from the existing 200-token key
bank (seed 20260923), giving 1,280 query--key--head combinations. These are
individual attention scores; Figures~\ref{fig:high-frequency-cutoff-qwen}
and~\ref{fig:high-frequency-cutoff-llama} use score margins.
For each fixed combination $i$, we evaluate $S_{H_{40}}^{(i)}(m)$ at every
$m\in\{0,\ldots,32767\}$ and divide by
$\sigma_i=U_{H_{40}}(z^{(i)})/\sqrt{2}$, where
$H_k=\{0,\ldots,k-1\}$. The gray bars average the resulting densities with equal
weight. No empirical centering or variance fitting is used.
The orange curve is $\mathcal N(0,1)$ under this normalization. This
illustration uses the 32k cutoff from
\Eqref{eq:experiment-operational-partition} and the same cached projections,
with no new model forward pass.

\subsection{Additional Evidence on High-Frequency Scores}
\label{app:additional-high-frequency-evidence}

\subsubsection{The Concentrated Qwen Frequency Behind the Noise-Amplitude Drop}
\label{app:qwen-frequency-cliff}

The sharp drop in the inset of Figure~\ref{fig:high-frequency-cutoff-qwen}
occurs when the retained prefix grows from 51 to 52 frequencies. The newly included
frequency has zero-based index $n=51$, so it is the 52nd complex coordinate
pair, corresponding to real channels 51 and 115 in the split-half layout.
On the same 40,960 margin-coefficient vectors $d_i$ used in that panel,
its aggregate coefficient-energy share is
\begin{equation}
\frac{\sum_i |d_{i,51}|^2}{\sum_i\sum_n |d_{i,n}|^2}
=92.3507\%.
\end{equation}
This is the fraction of the summed squared coefficient norms, with each
sample entering the sum once. It is distinct from a fraction of the
measured score variance or a norm share before squaring.

At $M=32768$, this frequency rotates by only $M\omega_{51}=0.54225$
radians over the window. Its own finite-window variance is 4.4599\% of
the half-energy prediction. Including it multiplies the total predicted
variance by 16.1731, while the aggregate variance ratio
$\mathcal R_{\mathrm{var}}$ falls from 0.883744 to 0.086922 and the plotted
randomness proxy $R$ falls from 0.940077 to 0.294826.
The slowly varying component therefore adds
substantial coefficient energy without the variance expected from a fast
oscillation. This accounts for the abrupt drop and illustrates the need
to distinguish coefficient magnitude from frequency-dependent variation.
The component lies outside all three operational high-frequency prefixes.

\subsubsection{Representation Assumptions and Gaussian Score Shape}
\label{app:high-frequency-assumption-comparison}

The moment guarantees in Proposition~\ref{prop:high-calibration} allow
arbitrary fixed coefficient magnitudes and phases. Relevant prior analyses
use several different assumptions. \citet{NEURIPS2024_9f12dd32} derive
their expected preference bound from independent identically distributed
query and key coordinates with a common variance and a similar-key noise
model. Their theorem does not require Gaussian coordinate distributions.
\citet{du2026ropedistinguishespositionstokens} also fix the query and key
vectors and sample relative distance. Their high-frequency analysis gives
the near-zero mean and half-energy variance approximation; its normal
approximation uses regular amplitudes without a dominant component and
approximate independence across frequencies. Our finite-window moment
bounds control cross-frequency terms explicitly and allow concentrated
coefficient energy.

Deterministic analyses also appear in \citet{Su_2024},
\citet{ICLR2025_e6d58fc6}, and \citet{ICLR2026_68f1a693}.
In particular, the Gaussian query/key model in Proposition 3.2 of
\citet{ICLR2025_e6d58fc6} is a counterexample to universal distance decay;
their other constructions and semantic-attention results use fixed
vectors. These results establish that fixed-vector analysis is already
part of the literature. The present contribution is the finite-window
moment control and the resulting frequency criterion. Gaussian score
shape remains a separate approximation: the displayed pooled density
supports it empirically, and the figure does not establish that every
individual query--key combination has a Gaussian window distribution.

\section{The 49 Reference Task Settings}
\label{app:diagnostic-task-set}

The diagnostic reference set contains 49 task/configuration settings from
nine benchmark sources. A setting combines a task with a difficulty or
input-length configuration. Table~\ref{tab:diagnostic-task-inventory}
lists every setting by grouping configurations of the same task. The set
covers key and document retrieval, multi-hop question answering, text and
list ordering, state tracking, and long-context reasoning.
For example, the LIFBench adjacent-element retrieval task contributes
three length settings (3k, 6k, and 13k), while RULER2 multi-key retrieval
contributes four difficulty settings (basic, easy, medium, and hard).
These 49 settings form the ranking reference set in
Figure~\ref{fig:diagnostic-workflow}; each point in
Figure~\ref{fig:semantic-preference-cross-model} and each bar in
Figure~\ref{fig:intervention-accuracy-gain} corresponds to one setting.

Qwen3-8B and Llama-3.1-8B-Instruct use the same example IDs within each
setting: 2,211 examples per model and 4,422 example--model records in total.
We record query and key activations during standard evaluation of these
examples. The resulting diagnostics supplement each task's behavioral
accuracy with information about relative failure susceptibility.

\paragraph{Sources and configurations.}
The 12 MK, MV, and QA settings use the official NeMo Skills RULER2
suite.\footnote{\href{https://github.com/NVIDIA-NeMo/Skills/blob/cb54e911ad5b2cee87444fc89656fabca021c8cc/nemo_skills/dataset/ruler2/prepare.py}{NeMo Skills RULER2 task definitions.}}
We use its nominal 8k configuration. Each family has basic, easy, medium,
and hard variants, with
100 examples per configuration. The two LongBench tasks use unmodified
examples whose native chat inputs fit within 16,384 tokens for both
models. L-Eval TopicRet contributes 150 question records from 50
conversations, with three topic-retrieval questions per conversation.
FLenQA uses the books-padding, random-evidence configuration at context
sizes 500 and 3,000. NoLiMa uses book 1 with the needle at 52\% depth.
The three LIFBench settings contain 12 examples each; the remaining
33 settings contain 25 examples each.

Length labels in the table retain the source benchmark's configuration
names. For the diagnostic calculation, $M$ is the complete input length
under the evaluated model's tokenizer, including its prompt and chat
template. The same length label can therefore yield different $M$ across
models. The counts below describe the diagnostic reference set; matched
coverage for the intervention experiments is reported separately in
Appendix~\ref{app:current-intervention}.

\begingroup
\footnotesize
\setlength{\tabcolsep}{3pt}
\renewcommand{\arraystretch}{1.12}
\begin{longtable}{@{}>{\raggedright\arraybackslash}p{0.99in}>{\raggedright\arraybackslash}p{2.10in}>{\raggedright\arraybackslash}p{1.17in}rr@{}}
\caption{The complete 49-setting diagnostic reference set. Each configuration
in a row is a separate setting. $K$ is the number of settings and $N$ is
the number of examples per setting and per model. Lengths are nominal
benchmark configurations; the diagnostic uses each model's actual input
length.}\label{tab:diagnostic-task-inventory}\\
\toprule
Task & What the model must do & Configurations & $K$ & $N$ \\
\midrule
\endfirsthead
\multicolumn{5}{l}{\tablename~\thetable\ (continued)}\\
\toprule
Task & What the model must do & Configurations & $K$ & $N$ \\
\midrule
\endhead
\midrule
\multicolumn{5}{r}{Continued on the next page}\\
\endfoot
\bottomrule
\endlastfoot
\multicolumn{5}{@{}l}{\textbf{RULER2 (NeMo Skills)}}\\*
Multi-key (MK) & Retrieve by key; harder variants answer retrieved MMLU questions. & Basic, easy, medium, hard & 4 & 100 \\
Multi-value (MV) & Retrieve multiple values or select an ordered question for a key. & Basic, easy, medium, hard & 4 & 100 \\
Question answering & Retrieve supporting documents and answer HotpotQA questions. & Basic, easy, medium, hard & 4 & 100 \\
\addlinespace[3pt]
\multicolumn{5}{@{}l}{\textbf{LongBench-v1}~\citep{bai-etal-2024-longbench}}\\*
Qasper & Answer questions about an academic paper. & Native chat $\leq16$k & 1 & 25 \\
MuSiQue & Combine evidence across documents for multi-hop answers. & Native chat $\leq16$k & 1 & 25 \\
\addlinespace[3pt]
\multicolumn{5}{@{}l}{\textbf{L-Eval}~\citep{an-etal-2024-l}}\\*
TopicRet & Retrieve the first, second, or third conversation topic. & Native conversation & 1 & 150 \\
\addlinespace[3pt]
\multicolumn{5}{@{}l}{\textbf{Ada-LEval}~\citep{wang-etal-2024-ada}}\\*
TSort & Restore the original order of shuffled text fragments. & 1k & 1 & 25 \\
\addlinespace[3pt]
\multicolumn{5}{@{}l}{\textbf{FLenQA}~\citep{levy-etal-2024-task}}\\*
PIR & Infer a room property through a person--room relation. & 500; 3,000 & 2 & 25 \\
MonoRel & Infer a transitive conclusion from ordered relations. & 500; 3,000 & 2 & 25 \\
Simplified RuleTaker & Deduce a true/false conclusion from rules and facts. & 500; 3,000 & 2 & 25 \\
\addlinespace[3pt]
\multicolumn{5}{@{}l}{\textbf{LongReason}~\citep{ling2025longreasonsyntheticlongcontextreasoning}}\\*
Mixed reasoning & Solve reading, logical, and mathematical reasoning questions. & 8k; 16k & 2 & 25 \\
\addlinespace[3pt]
\multicolumn{5}{@{}l}{\textbf{BABILong}~\citep{NEURIPS2024_babilong}}\\*
qa1: person location & Track a person's current location. & 8k; 16k & 2 & 25 \\
qa2: object location & Track an object's current location. & 8k; 16k & 2 & 25 \\
qa3: previous location & Recover an object's location before a specified event. & 8k; 16k & 2 & 25 \\
qa4: directional relations & Infer a directional relation between two entities. & 8k; 16k & 2 & 25 \\
qa5: giving events & Identify the giver, recipient, or object in a giving event. & 8k; 16k & 2 & 25 \\
qa7: counting & Count the objects a person currently holds. & 8k; 16k & 2 & 25 \\
qa8: lists and sets & List the objects a person currently holds. & 8k; 16k & 2 & 25 \\
qa9: negation & Answer location questions with explicit negation. & 8k; 16k & 2 & 25 \\
qa10: indefinite knowledge & Distinguish known, false, and uncertain locations. & 8k; 16k & 2 & 25 \\
\addlinespace[3pt]
\multicolumn{5}{@{}l}{\textbf{LIFBench}~\citep{wu-etal-2025-lifbench}}\\*
List offset query & Return the item immediately before or after a target. & Official base 3k; 6k; 13k & 3 & 12 \\
\addlinespace[3pt]
\multicolumn{5}{@{}l}{\textbf{NoLiMa}~\citep{pmlr-v267-modarressi25a}}\\*
One-hop retrieval & Retrieve a fact through one implicit semantic association. & 8k; 16k & 2 & 25 \\
Two-hop retrieval & Retrieve a fact through two implicit semantic associations. & 8k; 16k & 2 & 25 \\
\midrule
\multicolumn{3}{@{}l}{\textbf{Total: 49 settings; 2,211 examples per model}} & \textbf{49} & \\
\end{longtable}
\endgroup

\section{Token and Head Selection}
\label{app:selection-protocol}

\subsection{Token Sampling from the Evaluated Inputs}
\label{app:token-selection}

The current diagnostic reuses the recorded query and key activations for
the 49 reference settings in Appendix~\ref{app:diagnostic-task-set}.
Each example is tokenized with the evaluated model's native chat template.
Character spans supplied by the task preparation identify the question
and its context. Eligible token occurrences overlap the corresponding
span, contain at least one letter or digit within that overlap, and have
nonempty character offsets; special tokens are excluded. We select up to
four query positions from the question and up to 16 key positions from
the context. Every selected key precedes every selected query in the
original input.

Selection uses a deterministic SHA-256 ordering without replacement.
For the original 12 RULER2 settings, the ordering is keyed by the protocol
identifier, example ID, sampling role, and token position. The remaining
37 settings also include the fixed seed 20260910. Separate sampling roles
select queries, keys, and the ordering used to pair keys. The saved token
positions remain fixed throughout the diagnostic calculations.

For the \emph{Common-pair} branch, each query is combined with every
selected key, giving up to 64 query--key pairs per example. For the
\emph{Common-triplet} branch, the selected keys are reordered with the
key-pairing role and grouped into eight disjoint pairs; each query is
combined with these pairs, giving up to 32 triplets with distinct keys.
All 2,211 examples per model have 16 selected keys. Of these, 2,161 have
four selected queries and yield 64 pairs and 32 triplets. The other 50
examples, from the two BABILong person-location settings, have three
eligible queries and yield 48 pairs and 24 triplets.
The Common branches use these sampled occurrences without task-target
labels. The semantic calculation subsequently orients each triplet by
its score ordering at zero relative distance, as defined in
Appendix~\ref{app:current-score-definitions}.

\subsection{Selecting a Fixed Set of Attention Heads}
\label{app:head-selection}

We select heads whose high-frequency norm share varies across the
reference settings. For each layer--head unit $h$, we first average the
Common-pair value $r_H$ over valid pairs within each example, then average
over examples within each setting. Let $\bar r_{t,h}$ be this mean for
setting $t$. With all 49 settings weighted equally, the selection statistic
is the sample variance
\begin{equation}
v_h=\frac{1}{48}\sum_{t=1}^{49}
\left(\bar r_{t,h}-\frac{1}{49}\sum_{s=1}^{49}\bar r_{s,h}\right)^2.
\label{eq:head-selection-variance}
\end{equation}
Each model is ranked separately by decreasing $v_h$. Ties are resolved
by increasing layer-major head index; neither model has a tie at the
reported selection boundaries. A fraction $p$ retains
$\lceil pH\rceil$ of the model's $H$ heads.

Qwen3-8B has 36 layers and 32 query heads per layer, for 1,152 heads in
total; its top 5\% and top 10\% sets contain 58 and 116 heads.
Llama-3.1-8B-Instruct has 32 layers and 32 query heads per layer, for
1,024 heads; its corresponding sets contain 52 and 103 heads.
Figure~\ref{fig:head-selection-heatmap} shows the selection across layers
and heads. The top-5\% set is fixed across tasks and supplies both the
Semantic and Positional Scores in the main diagnostic. The second
intervention stage expands this same ranking to the top 10\%, as detailed
in Appendix~\ref{app:current-intervention}. This ranking describes
variation within the chosen reference set, including its different
input-length settings.

\begin{figure}[t]
\centering
\includegraphics[width=\linewidth]{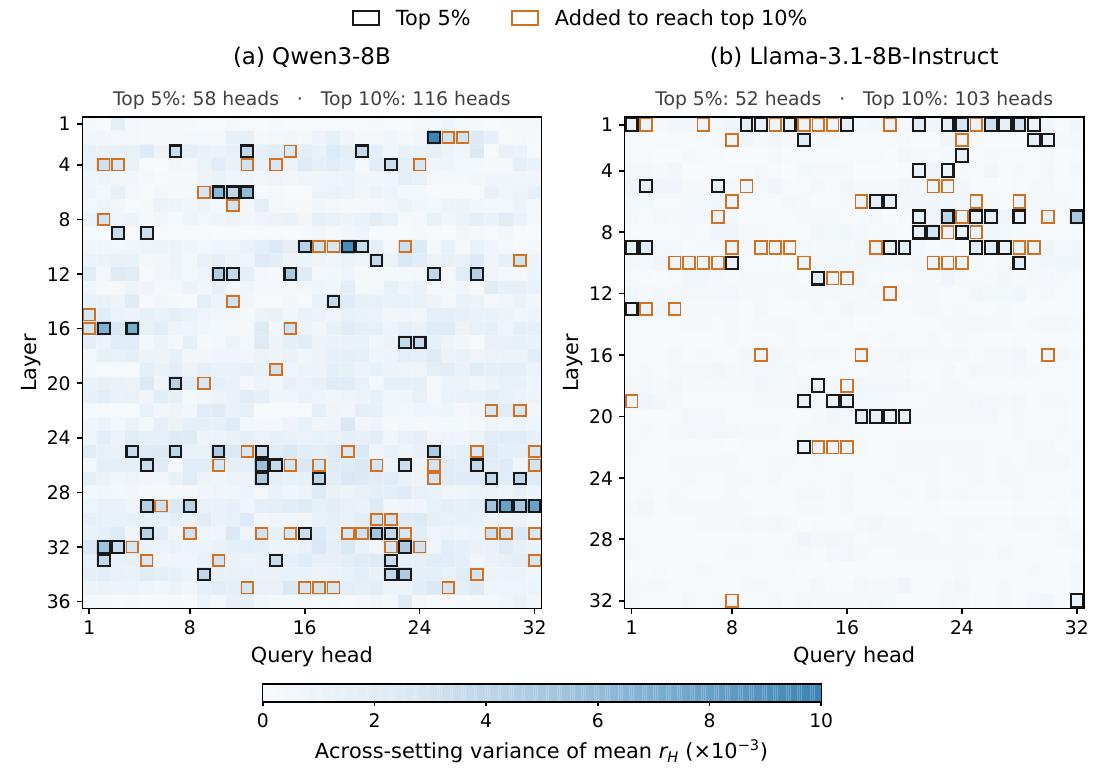}
\caption{Head selection from the 49 reference settings. Each cell shows
the sample variance of a head's setting-level mean Common-pair $r_H$,
using a shared color scale across models. Black outlines identify the
top-5\% heads used by the main diagnostic and first intervention stage.
Orange outlines identify the additional heads included in the top-10\%
set for the second stage. Layer and query-head indices start at one.
All settings within a model use the same selected heads.}
\label{fig:head-selection-heatmap}
\end{figure}

\section{Experimental Details}
\label{app:experiment-details}
\label{app:current-diagnostic-protocol}

We conduct two experiments on Qwen3-8B and Llama-3.1-8B-Instruct:
we compare diagnostic profiles across the 49 shared task settings, and
we evaluate high-frequency interventions in each setting's assigned
semantic or positional direction. Appendix~\ref{app:diagnostic-task-set}
specifies the task data, and Appendix~\ref{app:selection-protocol}
describes the fixed token and head selections.

\subsection{Experiment 1: Task and Cross-Model Diagnostic Profiles}
\label{app:diagnostic-experiment}

Each model uses 2,211 examples across the same 49 settings.
The primary diagnostic uses the fixed top-5\% head collection selected
from cross-task variation in the high-frequency norm share.
All scores are computed from cached query and key states at each example's
native input length and frequency configuration.
\subsubsection{State Extraction and Score Definitions}
\label{app:current-score-definitions}

For each task example, we use sampled query--key pairs and query--key--key
triplets, together with a fixed collection of layer--head units. In each
rotary plane, let $Q_n$ and $K_n$ denote the complex forms of the pre-RoPE
query and key components. Their score coefficient is
\begin{equation}
z_n=c_{\mathrm{att}}Q_n\overline{K_n},\qquad
S(m)=c_0+\Rea\sum_n z_ne^{im\omega_n},
\label{eq:current-tool-coefficients}
\end{equation}
where $c_{\mathrm{att}}$ is the model's attention-score scale and $c_0$ is
any non-rotary score contribution (zero for fully rotary heads). The extraction
preserves native query/key normalization, rotary-coordinate pairing, and the
mapping between query heads and key/value heads. For partially rotary heads,
the non-rotary contribution is retained as a constant in semantic margins;
the norm used for positional response refers to the rotary coefficients.

We keep these content vectors fixed and vary only the current layer's RoPE
phase over $m=0,\ldots,M-1$. The current task protocol sets $M$ to the
example's input length. The score at zero relative distance supplies the reference
comparison. These are virtual score evaluations and require no new forward
pass once the states have been recorded.

\paragraph{Semantic Score.}

For each triplet, orient the margin (including any non-rotary constant) so
that $D(0)>0$, with $d=z_+-z_-$ in this ordering, and compute
the full finite-window mean $\mu_D$ and variance $\sigma_D^2$ analytically.
Finite geometric sums give these moments, including the covariance between
frequency components; Appendix~\ref{app:current-analytic-moments} provides
the expressions. With $\Phi$ denoting the standard normal cumulative
distribution function, the Gaussian estimate is
\begin{equation}
\widehat p_{\mathrm{rev}}(d;M)
=\Phi\!\left(-\frac{\mu_D}{\sigma_D}\right),\qquad\sigma_D>0.
\label{eq:current-semantic-probability}
\end{equation}
We average these probabilities over valid triplets, then over the fixed
head collection, and finally over task examples with equal example weights.
The \emph{Semantic Score} is one minus this task-level reversal rate. Higher
values indicate more stable reference orderings.
Reference ties are excluded and their counts are retained. For zero variance,
we evaluate the deterministic strict-negative event directly.

The probability calculation uses the accepted Gaussian approximation and
analytic moments. It avoids enumerating reversal events at every position.
The high-frequency share provides the theoretical interpretation in
\S\ref{sec:theory}; the practical estimate uses the full finite-window
moments and coefficient vector. Its value may vary non-monotonically with
the window even though the conservative theoretical lower bound is
monotone. This distinction allows the tool to report the behavior of sampled
coefficients while the theory explains a necessary constraint on their joint
semantic and positional reliability.

\paragraph{Positional Score.}

For each query--key pair with $U(z)>0$, define
\begin{equation}
P(z;M):=\frac{1}{M-1}\sum_{m=0}^{M-2}
\frac{|S(m+1)-S(m)|}{U(z)}.
\label{eq:current-positional-score}
\end{equation}
We first normalize each pair by its own coefficient norm, then average over
pairs, the same fixed head collection, and task examples. The resulting
\emph{Positional Score} measures average normalized adjacent response.
Higher values indicate stronger local response. The score is computed
directly from adjacent differences; a zero coefficient norm is recorded as
undefined. This convention preserves the per-pair scale before aggregation.

The response condition in Theorem~\ref{thm:maximum-reliable-context} applies
to the pairwise margin $d=a-b$. Requiring the average adjacent response of
that same margin to reach $\zeta$ also implies the maximum-response condition
used in the proof. The Positional Score summarizes individual query--key
curves and supplies a complementary diagnostic of local response. The
theorem identifies a spectral constraint when both requirements are imposed
on the same margin. The task-level profile itself carries no context-length
certificate.

\subsubsection{Analytic Finite-Window Moments}
\label{app:current-analytic-moments}

For a real angular frequency $\theta$, define
\begin{equation}
K_M(\theta):=\frac1M\sum_{m=0}^{M-1}e^{im\theta}
=\begin{cases}
\displaystyle\frac{1-e^{iM\theta}}{M(1-e^{i\theta})},
&e^{i\theta}\ne1,\\
1,&e^{i\theta}=1.
\end{cases}
\label{eq:current-moment-kernel}
\end{equation}
For $D(m)=c_D+\Rea\sum_n d_ne^{im\omega_n}$, let
\begin{align}
\mu_D&=c_D+\Rea\sum_n d_nK_M(\omega_n),\\
C^-_{n\ell}&=K_M(\omega_n-\omega_\ell)
-K_M(\omega_n)\overline{K_M(\omega_\ell)},\\
C^+_{n\ell}&=K_M(\omega_n+\omega_\ell)
-K_M(\omega_n)K_M(\omega_\ell).
\end{align}
Expanding the square of the real part gives the exact variance
\begin{equation}
\sigma_D^2=\frac12\Rea\sum_{n,\ell}
\left(d_n\overline{d_\ell}C^-_{n\ell}
+d_nd_\ell C^+_{n\ell}\right).
\label{eq:current-analytic-variance}
\end{equation}
Here $c_D$ is the difference of any non-rotary score constants and is zero
for fully rotary heads. It contributes to the reference margin $D(0)$ and
to $\mu_D$, and leaves the variance unchanged.
These expressions require no probability distribution on $d$. The subsequent
Gaussian threshold probability is an approximation to the relative distance law,
with the previously established calibration retained.

\subsubsection{Relative Ranks and Failure Susceptibility}
\label{app:current-rank-boundary}

For a chosen model and head collection, rank a fixed reference set of $N>1$
task/length settings by decreasing Semantic Score and decreasing Positional
Score. Let $r_S$ and $r_P$ be the respective ranks, with rank one best and
equal values receiving equal competition ranks. We define the relative
semantic and positional failure susceptibilities as
\begin{equation}
w_S=\frac12+\frac{r_S-r_P}{2(N-1)},\qquad w_P=1-w_S.
\label{eq:current-relative-tendency}
\end{equation}
A larger $w_S$ indicates a relatively weaker semantic rank, and a larger
$w_P$ indicates a relatively weaker positional rank. Equal ranks give
$w_S=w_P=1/2$. These indices describe relative failure susceptibility within the chosen
comparison set; their interpretation depends on that set. Raw scores and
behavioral success rates remain visible alongside these indices.

\subsubsection{Aggregation and Cross-Model Comparison}
Semantic reversal probabilities are averaged over valid triplets within
each example and head, then over the selected heads, then equally over
examples. The Semantic Score is one minus this average probability.
For the Positional Score, each pair's adjacent differences are divided by
its own coefficient norm before averaging over pairs, heads, and examples.
Reference ties satisfy
$|D(0)|\leq64\epsilon_{\mathrm{mach}}\sum_n|d_n|$; their counts and the
counts of undefined scores are retained.

For Figure~\ref{fig:semantic-preference-cross-model}, each model ranks all
49 settings by decreasing $w_S$, assigning average ranks to ties.
We compute Spearman correlation as the Pearson correlation between these
two vectors of average ranks, with each setting weighted equally.
The resulting correlation is $0.657983$, reported as $0.658$ in the main text.
It includes all 49 settings, including the three adjacent-element retrieval
settings highlighted in orange. The plot preserves tied coordinates
without jitter. These rankings describe failure susceptibility within the
shared reference set.

\subsection{Experiment 2: Directed High-Frequency Intervention}
\label{app:intervention-experiment}

\subsubsection{Assigning the Optimization Direction}
The intervention direction ranks the 49 settings by decreasing
$w_S$, assigning the average rank to ties. Let $\rho_S$ denote this rank.
Settings with $\rho_S/49\leq1/2$ test reduced high-frequency contributions;
the remaining settings test amplification. This places 24 Qwen settings
and 23 Llama settings in the reduction group, keeping ties together.
For this reference set, reduction corresponds to $w_S$ strictly above
the model-specific median: $50.00\%$ for Qwen and $56.25\%$ for Llama.
The value $w_S=1/2$ separately indicates equal semantic and positional ranks.

The reduction group optimizes semantic stability by attenuating
high-frequency score contributions or freezing their rotations.
The amplification group optimizes positional sensitivity by increasing
the high-frequency contribution. The assigned direction remains fixed
through both stages of the search.

\subsubsection{Intervention Operators and Frequency Mask}
For a scalar $\alpha$, selected high-frequency query and key components
are each multiplied by $\sqrt{\alpha}$, which multiplies their score
coefficient by $\alpha$. Low-frequency components and unselected heads
retain their original operations. We preserve native query/key
normalization and the mapping between query heads and key/value heads.
The transformation changes both the coefficient norm and its high-frequency
share. A separate no-rotation condition freezes the selected high-frequency
RoPE rotations and belongs to the semantic direction.

The high-frequency mask is fixed from prompt length $M$ throughout generation:
\begin{equation}
H=\{n:M\omega_n\geq\gamma\},\qquad
\gamma=\frac{2c_{\mathrm{int}}}{(1-\rho)\epsilon},\qquad
\rho=\theta^{-2/d_{\mathrm{rot}}},\quad c_{\mathrm{int}}=0.05,\quad\epsilon=0.10.
\label{eq:intervention-operational-mask}
\end{equation}
Here $\theta$ is the RoPE base, $d_{\mathrm{rot}}$ is the rotary dimension,
and $\omega_n$ are the actual runtime frequencies, including native
frequency scaling. These intervention runs retain their original empirical
parameter $c_{\mathrm{int}}=0.05$ throughout both search stages.
The separate frequency-split audit in
Appendix~\ref{app:high-frequency-cutoff-protocol} uses $c_{\mathrm{op}}=0.06$;
its later calibration leaves the intervention masks and reported outcomes unchanged.

\subsubsection{Two Search Stages}
The first stage uses the diagnostic's top-5\% head collection:
58 heads for Qwen and 52 for Llama.
Semantic-direction settings evaluate no rotation and
$\alpha\in\{0,0.5\}$; positional-direction settings evaluate
$\alpha\in\{1.5,2,2.5\}$.
The unchanged $\alpha=1$ condition provides the baseline.
Candidate and baseline outputs share the original input IDs, greedy
decoding, stopping rules, and first-stage output limits.
The $\alpha=1$ implementation was checked against native-model outputs.

The second stage expands the same fixed head ranking to its top 10\%:
116 Qwen heads and 103 Llama heads. It selects the 25 Qwen and 23 Llama
settings without an observed first-stage improvement in their assigned
direction. Semantic-direction settings evaluate no rotation and
$\alpha\in\{0.25,0.5,0.75\}$; positional-direction settings evaluate
$\alpha\in\{1.25,1.5,1.75,2\}$.
The diagnostic $w_S$ and task rankings remain fixed at their top-5\% values.

\subsubsection{Matched Examples, Scoring, and Output Budgets}
We preserve the frozen first-stage comparison sets: 2,171 matched examples
for Qwen and 2,074 for Llama. Each model has 42 fully covered settings
and seven partially covered settings. The second stage completed 8,440
intervention records. Matching its candidates to saved baseline outputs
gives 945 Qwen and 1,071 Llama examples; 16 and 78 examples without baseline
outputs are excluded. A missing or untested result remains unreported.

Within each stage and setting, every displayed candidate is compared with
the original $\alpha=1$ baseline on the same examples. Accuracy is the
fraction of examples receiving an official task score of one.
Appendix~\ref{app:current-intervention} reports the full selected-direction
candidate results and matched counts. The stages retain their own
comparison sets: Qwen MV (easy) has 94 examples in the first stage and
95 in the second; the other second-stage settings use the same examples
as their first-stage comparisons.

Figure~\ref{fig:intervention-accuracy-gain} intersects the comparison sets
across both stages, recomputes their candidates and baseline on this common
set, and displays the largest observed gain in the assigned direction.
Thus Qwen MV (easy) uses 94 examples for both stages in the figure.
The baseline is excluded from the candidate maximum, so settings for which
every candidate reduces accuracy retain a negative bar.

The second stage reuses original input IDs and baseline outputs, with
shorter generation limits for some tasks. Its reported comparisons retain
the original baseline budget. An offline control truncates saved baseline
tokens to the shorter limits before rescoring; this changes three Qwen and
28 Llama baseline correctness labels. The original baseline is used in
all tables and the main figure.

The search reports the best observed candidate on the examples used to
compare intervention strengths. Second-stage settings are selected from
first-stage results. These gains describe the available interventions in
this exploratory search; choosing an intervention for deployment requires
evaluation on held-out examples.

\section{Results in the Assigned Optimization Direction}
\label{app:current-intervention}

We report every measured candidate in each setting's assigned semantic or
positional direction, following the protocol in
Appendix~\ref{app:intervention-experiment}. The tables include all 49 settings
per model, grouped by direction, with separate results for the top-5\% and
top-10\% head collections.

\paragraph{Observed gains.}
The first stage improves 24 of 49 Qwen settings and 26 of 49 Llama settings.
The second stage adds seven Qwen settings and six Llama settings, giving
\textbf{31/49 (63.3\%) and 32/49 (65.3\%)} improved settings, respectively.
The largest gains are 20 percentage points for Qwen and 25 for Llama.
These are the best observed outcomes from a search over intervention strengths
and two head subsets. The selected candidates have not been validated on
held-out examples.

\paragraph{Reading the tables.}
Each table retains its stage's matched examples and original baseline.
The count $n/N$ gives matched and planned examples. Candidate headers give
$\alpha$, and NR denotes no rotation. Each entry gives accuracy (\%) followed
by its change from the paired baseline (percentage points).
Dashes indicate settings without a second-stage evaluation, and $\dagger$
marks a second-stage generation limit that differs from the saved baseline.
All measured gains, including zero and negative values, are retained.
Figure~\ref{fig:intervention-accuracy-gain} uses the common samples across
stages: Qwen MV (easy) uses 94 examples in the figure and 95 in its
second-stage table. All other settings use the same paired cohort in the
figure and tables. The adjacent-element retrieval settings retain their
original 12 examples at each length.

The output-limit control described in
Appendix~\ref{app:intervention-experiment} yields 8/25 and 7/23 improving
second-stage settings when the saved baseline outputs are truncated to the
second-stage limits before rescoring. The tables retain the original-output
comparisons of 7/25 and 6/23.

\subsection{Qwen3-8B: Semantic Direction}

\begingroup
\scriptsize
\setlength{\tabcolsep}{2pt}
\renewcommand{\arraystretch}{1.12}
\begin{longtable}{@{}>{\raggedright\arraybackslash}p{1.55in}rrrrrr@{}}
\caption{Qwen3-8B: semantic direction on the top-5\% heads. Entries give accuracy in percent and the change from the paired baseline in parentheses (percentage points).}\label{tab:direction-qwen-semantic-5}\\
\toprule
Setting & $w_S$ (\%) & $n/N$ & Base & NR & $0$ & $0.5$ \\
\midrule
\endfirsthead
\multicolumn{7}{l}{\textit{Continued from the preceding page.}}\\
\toprule
Setting & $w_S$ (\%) & $n/N$ & Base & NR & $0$ & $0.5$ \\
\midrule
\endhead
\midrule
\multicolumn{7}{r}{\textit{Continued on the next page.}}\\
\endfoot
\bottomrule
\endlastfoot
QA (basic) & 54.2 & 100/100 & 78.00 & 70.00\,(-8.00) & 76.00\,(-2.00) & 75.00\,(-3.00) \\
QA (easy) & 54.2 & 100/100 & 76.00 & 66.00\,(-10.00) & 70.00\,(-6.00) & 77.00\,(+1.00) \\
Multi-hop QA (up to 16k) & 53.1 & 25/25 & 28.00 & 24.00\,(-4.00) & 24.00\,(-4.00) & 24.00\,(-4.00) \\
Rule deduction (context size 3,000) & 51.0 & 25/25 & 72.00 & 88.00\,(+16.00) & 88.00\,(+16.00) & 76.00\,(+4.00) \\
Person location (16k) & 61.5 & 25/25 & 76.00 & 80.00\,(+4.00) & 80.00\,(+4.00) & 68.00\,(-8.00) \\
Previous location (8k) & 56.2 & 25/25 & 28.00 & 32.00\,(+4.00) & 28.00\,(+0.00) & 32.00\,(+4.00) \\
Previous location (16k) & 60.4 & 25/25 & 32.00 & 32.00\,(+0.00) & 40.00\,(+8.00) & 36.00\,(+4.00) \\
Directional relations (8k) & 70.8 & 25/25 & 48.00 & 44.00\,(-4.00) & 40.00\,(-8.00) & 40.00\,(-8.00) \\
Directional relations (16k) & 72.9 & 25/25 & 48.00 & 56.00\,(+8.00) & 56.00\,(+8.00) & 56.00\,(+8.00) \\
Giving events (8k) & 62.5 & 25/25 & 72.00 & 80.00\,(+8.00) & 84.00\,(+12.00) & 80.00\,(+8.00) \\
Giving events (16k) & 80.2 & 25/25 & 60.00 & 64.00\,(+4.00) & 64.00\,(+4.00) & 60.00\,(+0.00) \\
Adjacent-element retrieval (3k) & 99.0 & 12/12 & 50.00 & 25.00\,(-25.00) & 25.00\,(-25.00) & 33.33\,(-16.67) \\
Adjacent-element retrieval (6k) & 95.8 & 12/12 & 25.00 & 8.33\,(-16.67) & 0.00\,(-25.00) & 0.00\,(-25.00) \\
Adjacent-element retrieval (13k) & 97.9 & 12/12 & 16.67 & 8.33\,(-8.33) & 8.33\,(-8.33) & 8.33\,(-8.33) \\
Object location (16k) & 74.0 & 25/25 & 40.00 & 40.00\,(+0.00) & 32.00\,(-8.00) & 24.00\,(-16.00) \\
Object counting (8k) & 68.8 & 25/25 & 20.00 & 24.00\,(+4.00) & 24.00\,(+4.00) & 24.00\,(+4.00) \\
Object counting (16k) & 77.1 & 25/25 & 4.00 & 20.00\,(+16.00) & 24.00\,(+20.00) & 16.00\,(+12.00) \\
Object sets (16k) & 56.2 & 25/25 & 36.00 & 36.00\,(+0.00) & 36.00\,(+0.00) & 36.00\,(+0.00) \\
Negation (8k) & 54.2 & 25/25 & 88.00 & 88.00\,(+0.00) & 92.00\,(+4.00) & 92.00\,(+4.00) \\
Negation (16k) & 71.9 & 25/25 & 68.00 & 72.00\,(+4.00) & 76.00\,(+8.00) & 76.00\,(+8.00) \\
Uncertain knowledge (16k) & 62.5 & 25/25 & 44.00 & 48.00\,(+4.00) & 48.00\,(+4.00) & 48.00\,(+4.00) \\
One-hop implicit retrieval (16k) & 61.5 & 25/25 & 12.00 & 4.00\,(-8.00) & 8.00\,(-4.00) & 12.00\,(+0.00) \\
Two-hop implicit retrieval (8k) & 65.6 & 25/25 & 0.00 & 0.00\,(+0.00) & 0.00\,(+0.00) & 0.00\,(+0.00) \\
Two-hop implicit retrieval (16k) & 67.7 & 25/25 & 0.00 & 0.00\,(+0.00) & 0.00\,(+0.00) & 0.00\,(+0.00) \\
\end{longtable}
\endgroup

\begingroup
\scriptsize
\setlength{\tabcolsep}{2pt}
\renewcommand{\arraystretch}{1.12}
\begin{longtable}{@{}>{\raggedright\arraybackslash}p{1.55in}rrrrrr@{}}
\caption{Qwen3-8B: semantic direction on the top-10\% heads. Entries give accuracy in percent and the change from the paired baseline in parentheses (percentage points). Dashes mark settings without a second-stage evaluation; $\dagger$ marks a generation limit that differs from the saved baseline.}\label{tab:direction-qwen-semantic-10}\\
\toprule
Setting & $n/N$ & Base & NR & $0.25$ & $0.5$ & $0.75$ \\
\midrule
\endfirsthead
\multicolumn{7}{l}{\textit{Continued from the preceding page.}}\\
\toprule
Setting & $n/N$ & Base & NR & $0.25$ & $0.5$ & $0.75$ \\
\midrule
\endhead
\midrule
\multicolumn{7}{r}{\textit{Continued on the next page.}}\\
\endfoot
\bottomrule
\endlastfoot
QA (basic)$^{\dagger}$ & 100/100 & 78.00 & 63.00\,(-15.00) & 82.00\,(+4.00) & 80.00\,(+2.00) & 78.00\,(+0.00) \\
QA (easy) & --- & --- & --- & --- & --- & --- \\
Multi-hop QA (up to 16k) & 25/25 & 28.00 & 24.00\,(-4.00) & 24.00\,(-4.00) & 24.00\,(-4.00) & 28.00\,(+0.00) \\
Rule deduction (context size 3,000) & --- & --- & --- & --- & --- & --- \\
Person location (16k) & --- & --- & --- & --- & --- & --- \\
Previous location (8k) & --- & --- & --- & --- & --- & --- \\
Previous location (16k) & --- & --- & --- & --- & --- & --- \\
Directional relations (8k) & 25/25 & 48.00 & 44.00\,(-4.00) & 36.00\,(-12.00) & 40.00\,(-8.00) & 52.00\,(+4.00) \\
Directional relations (16k) & --- & --- & --- & --- & --- & --- \\
Giving events (8k) & --- & --- & --- & --- & --- & --- \\
Giving events (16k) & --- & --- & --- & --- & --- & --- \\
Adjacent-element retrieval (3k) & 12/12 & 50.00 & 41.67\,(-8.33) & 25.00\,(-25.00) & 25.00\,(-25.00) & 33.33\,(-16.67) \\
Adjacent-element retrieval (6k) & 12/12 & 25.00 & 25.00\,(+0.00) & 16.67\,(-8.33) & 16.67\,(-8.33) & 16.67\,(-8.33) \\
Adjacent-element retrieval (13k) & 12/12 & 16.67 & 8.33\,(-8.33) & 8.33\,(-8.33) & 8.33\,(-8.33) & 16.67\,(+0.00) \\
Object location (16k) & 25/25 & 40.00 & 28.00\,(-12.00) & 28.00\,(-12.00) & 28.00\,(-12.00) & 32.00\,(-8.00) \\
Object counting (8k) & --- & --- & --- & --- & --- & --- \\
Object counting (16k) & --- & --- & --- & --- & --- & --- \\
Object sets (16k) & 25/25 & 36.00 & 32.00\,(-4.00) & 44.00\,(+8.00) & 44.00\,(+8.00) & 44.00\,(+8.00) \\
Negation (8k) & --- & --- & --- & --- & --- & --- \\
Negation (16k) & --- & --- & --- & --- & --- & --- \\
Uncertain knowledge (16k) & --- & --- & --- & --- & --- & --- \\
One-hop implicit retrieval (16k) & 25/25 & 12.00 & 0.00\,(-12.00) & 12.00\,(+0.00) & 12.00\,(+0.00) & 12.00\,(+0.00) \\
Two-hop implicit retrieval (8k) & 25/25 & 0.00 & 0.00\,(+0.00) & 0.00\,(+0.00) & 0.00\,(+0.00) & 0.00\,(+0.00) \\
Two-hop implicit retrieval (16k) & 25/25 & 0.00 & 0.00\,(+0.00) & 0.00\,(+0.00) & 0.00\,(+0.00) & 0.00\,(+0.00) \\
\end{longtable}
\endgroup

\subsection{Qwen3-8B: Positional Direction}

\begingroup
\scriptsize
\setlength{\tabcolsep}{2pt}
\renewcommand{\arraystretch}{1.12}
\begin{longtable}{@{}>{\raggedright\arraybackslash}p{1.55in}rrrrrr@{}}
\caption{Qwen3-8B: positional direction on the top-5\% heads. Entries give accuracy in percent and the change from the paired baseline in parentheses (percentage points).}\label{tab:direction-qwen-positional-5}\\
\toprule
Setting & $w_S$ (\%) & $n/N$ & Base & $1.5$ & $2$ & $2.5$ \\
\midrule
\endfirsthead
\multicolumn{7}{l}{\textit{Continued from the preceding page.}}\\
\toprule
Setting & $w_S$ (\%) & $n/N$ & Base & $1.5$ & $2$ & $2.5$ \\
\midrule
\endhead
\midrule
\multicolumn{7}{r}{\textit{Continued on the next page.}}\\
\endfoot
\bottomrule
\endlastfoot
MK (basic) & 49.0 & 100/100 & 99.00 & 100.00\,(+1.00) & 100.00\,(+1.00) & 100.00\,(+1.00) \\
MK (easy) & 16.7 & 100/100 & 96.00 & 97.00\,(+1.00) & 97.00\,(+1.00) & 97.00\,(+1.00) \\
MK (medium) & 22.9 & 95/100 & 72.63 & 72.63\,(+0.00) & 70.53\,(-2.11) & 70.53\,(-2.11) \\
MK (hard) & 32.3 & 95/100 & 64.21 & 67.37\,(+3.16) & 68.42\,(+4.21) & 70.53\,(+6.32) \\
MV (basic) & 46.9 & 100/100 & 22.00 & 15.00\,(-7.00) & 19.00\,(-3.00) & 20.00\,(-2.00) \\
MV (easy) & 15.6 & 94/100 & 43.62 & 43.62\,(+0.00) & 42.55\,(-1.06) & 37.23\,(-6.38) \\
MV (medium) & 35.4 & 94/100 & 35.11 & 38.30\,(+3.19) & 37.23\,(+2.13) & 36.17\,(+1.06) \\
MV (hard) & 22.9 & 94/100 & 52.13 & 53.19\,(+1.06) & 50.00\,(-2.13) & 48.94\,(-3.19) \\
QA (medium) & 42.7 & 94/100 & 69.15 & 72.34\,(+3.19) & 71.28\,(+2.13) & 71.28\,(+2.13) \\
QA (hard) & 40.6 & 94/100 & 78.72 & 78.72\,(+0.00) & 73.40\,(-5.32) & 76.60\,(-2.13) \\
Paper QA (up to 16k) & 12.5 & 25/25 & 28.00 & 20.00\,(-8.00) & 20.00\,(-8.00) & 20.00\,(-8.00) \\
Conversation topic retrieval & 21.9 & 150/150 & 58.00 & 64.67\,(+6.67) & 66.00\,(+8.00) & 64.67\,(+6.67) \\
Text sorting (1k) & 12.5 & 25/25 & 8.00 & 8.00\,(+0.00) & 8.00\,(+0.00) & 8.00\,(+0.00) \\
Room-property reasoning (context size 500) & 16.7 & 25/25 & 100.00 & 100.00\,(+0.00) & 100.00\,(+0.00) & 100.00\,(+0.00) \\
Room-property reasoning (context size 3,000) & 50.0 & 25/25 & 100.00 & 100.00\,(+0.00) & 100.00\,(+0.00) & 100.00\,(+0.00) \\
Relation chaining (context size 500) & 20.8 & 25/25 & 96.00 & 92.00\,(-4.00) & 96.00\,(+0.00) & 96.00\,(+0.00) \\
Relation chaining (context size 3,000) & 41.7 & 25/25 & 76.00 & 72.00\,(-4.00) & 68.00\,(-8.00) & 64.00\,(-12.00) \\
Rule deduction (context size 500) & 20.8 & 25/25 & 84.00 & 84.00\,(+0.00) & 72.00\,(-12.00) & 72.00\,(-12.00) \\
Mixed long reasoning (8k) & 45.8 & 25/25 & 84.00 & 80.00\,(-4.00) & 88.00\,(+4.00) & 88.00\,(+4.00) \\
Mixed long reasoning (16k) & 50.0 & 25/25 & 72.00 & 84.00\,(+12.00) & 88.00\,(+16.00) & 84.00\,(+12.00) \\
Person location (8k) & 35.4 & 25/25 & 92.00 & 88.00\,(-4.00) & 96.00\,(+4.00) & 96.00\,(+4.00) \\
Object location (8k) & 37.5 & 25/25 & 52.00 & 48.00\,(-4.00) & 48.00\,(-4.00) & 48.00\,(-4.00) \\
Object sets (8k) & 34.4 & 25/25 & 52.00 & 36.00\,(-16.00) & 36.00\,(-16.00) & 36.00\,(-16.00) \\
Uncertain knowledge (8k) & 45.8 & 25/25 & 64.00 & 60.00\,(-4.00) & 56.00\,(-8.00) & 56.00\,(-8.00) \\
One-hop implicit retrieval (8k) & 50.0 & 25/25 & 4.00 & 12.00\,(+8.00) & 12.00\,(+8.00) & 16.00\,(+12.00) \\
\end{longtable}
\endgroup

\begingroup
\scriptsize
\setlength{\tabcolsep}{2pt}
\renewcommand{\arraystretch}{1.12}
\begin{longtable}{@{}>{\raggedright\arraybackslash}p{1.55in}rrrrrr@{}}
\caption{Qwen3-8B: positional direction on the top-10\% heads. Entries give accuracy in percent and the change from the paired baseline in parentheses (percentage points). Dashes mark settings without a second-stage evaluation; $\dagger$ marks a generation limit that differs from the saved baseline.}\label{tab:direction-qwen-positional-10}\\
\toprule
Setting & $n/N$ & Base & $1.25$ & $1.5$ & $1.75$ & $2$ \\
\midrule
\endfirsthead
\multicolumn{7}{l}{\textit{Continued from the preceding page.}}\\
\toprule
Setting & $n/N$ & Base & $1.25$ & $1.5$ & $1.75$ & $2$ \\
\midrule
\endhead
\midrule
\multicolumn{7}{r}{\textit{Continued on the next page.}}\\
\endfoot
\bottomrule
\endlastfoot
MK (basic) & --- & --- & --- & --- & --- & --- \\
MK (easy) & --- & --- & --- & --- & --- & --- \\
MK (medium)$^{\dagger}$ & 95/100 & 72.63 & 70.53\,(-2.11) & 69.47\,(-3.16) & 67.37\,(-5.26) & 67.37\,(-5.26) \\
MK (hard) & --- & --- & --- & --- & --- & --- \\
MV (basic)$^{\dagger}$ & 100/100 & 22.00 & 22.00\,(+0.00) & 24.00\,(+2.00) & 27.00\,(+5.00) & 28.00\,(+6.00) \\
MV (easy) & 95/100 & 44.21 & 40.00\,(-4.21) & 37.89\,(-6.32) & 33.68\,(-10.53) & 28.42\,(-15.79) \\
MV (medium) & --- & --- & --- & --- & --- & --- \\
MV (hard) & --- & --- & --- & --- & --- & --- \\
QA (medium) & --- & --- & --- & --- & --- & --- \\
QA (hard)$^{\dagger}$ & 94/100 & 78.72 & 79.79\,(+1.06) & 77.66\,(-1.06) & 74.47\,(-4.26) & 72.34\,(-6.38) \\
Paper QA (up to 16k) & 25/25 & 28.00 & 24.00\,(-4.00) & 24.00\,(-4.00) & 24.00\,(-4.00) & 20.00\,(-8.00) \\
Conversation topic retrieval & --- & --- & --- & --- & --- & --- \\
Text sorting (1k) & 25/25 & 8.00 & 8.00\,(+0.00) & 8.00\,(+0.00) & 8.00\,(+0.00) & 8.00\,(+0.00) \\
Room-property reasoning (context size 500) & 25/25 & 100.00 & 100.00\,(+0.00) & 100.00\,(+0.00) & 100.00\,(+0.00) & 100.00\,(+0.00) \\
Room-property reasoning (context size 3,000) & 25/25 & 100.00 & 100.00\,(+0.00) & 100.00\,(+0.00) & 100.00\,(+0.00) & 100.00\,(+0.00) \\
Relation chaining (context size 500) & 25/25 & 96.00 & 92.00\,(-4.00) & 92.00\,(-4.00) & 92.00\,(-4.00) & 92.00\,(-4.00) \\
Relation chaining (context size 3,000) & 25/25 & 76.00 & 72.00\,(-4.00) & 72.00\,(-4.00) & 68.00\,(-8.00) & 60.00\,(-16.00) \\
Rule deduction (context size 500) & 25/25 & 84.00 & 84.00\,(+0.00) & 88.00\,(+4.00) & 88.00\,(+4.00) & 84.00\,(+0.00) \\
Mixed long reasoning (8k) & --- & --- & --- & --- & --- & --- \\
Mixed long reasoning (16k) & --- & --- & --- & --- & --- & --- \\
Person location (8k) & --- & --- & --- & --- & --- & --- \\
Object location (8k) & 25/25 & 52.00 & 56.00\,(+4.00) & 60.00\,(+8.00) & 56.00\,(+4.00) & 56.00\,(+4.00) \\
Object sets (8k) & 25/25 & 52.00 & 44.00\,(-8.00) & 40.00\,(-12.00) & 40.00\,(-12.00) & 40.00\,(-12.00) \\
Uncertain knowledge (8k) & 25/25 & 64.00 & 52.00\,(-12.00) & 44.00\,(-20.00) & 44.00\,(-20.00) & 44.00\,(-20.00) \\
One-hop implicit retrieval (8k) & --- & --- & --- & --- & --- & --- \\
\end{longtable}
\endgroup

\subsection{Llama-3.1-8B-Instruct: Semantic Direction}

\begingroup
\scriptsize
\setlength{\tabcolsep}{2pt}
\renewcommand{\arraystretch}{1.12}
\begin{longtable}{@{}>{\raggedright\arraybackslash}p{1.55in}rrrrrr@{}}
\caption{Llama-3.1-8B-Instruct: semantic direction on the top-5\% heads. Entries give accuracy in percent and the change from the paired baseline in parentheses (percentage points).}\label{tab:direction-llama-semantic-5}\\
\toprule
Setting & $w_S$ (\%) & $n/N$ & Base & NR & $0$ & $0.5$ \\
\midrule
\endfirsthead
\multicolumn{7}{l}{\textit{Continued from the preceding page.}}\\
\toprule
Setting & $w_S$ (\%) & $n/N$ & Base & NR & $0$ & $0.5$ \\
\midrule
\endhead
\midrule
\multicolumn{7}{r}{\textit{Continued on the next page.}}\\
\endfoot
\bottomrule
\endlastfoot
QA (easy) & 64.6 & 80/100 & 80.00 & 86.25\,(+6.25) & 87.50\,(+7.50) & 86.25\,(+6.25) \\
Multi-hop QA (up to 16k) & 67.7 & 25/25 & 24.00 & 20.00\,(-4.00) & 16.00\,(-8.00) & 16.00\,(-8.00) \\
Person location (16k) & 77.1 & 25/25 & 76.00 & 76.00\,(+0.00) & 84.00\,(+8.00) & 84.00\,(+8.00) \\
Previous location (8k) & 62.5 & 25/25 & 28.00 & 36.00\,(+8.00) & 28.00\,(+0.00) & 32.00\,(+4.00) \\
Previous location (16k) & 80.2 & 25/25 & 20.00 & 28.00\,(+8.00) & 24.00\,(+4.00) & 24.00\,(+4.00) \\
Directional relations (8k) & 69.8 & 25/25 & 36.00 & 40.00\,(+4.00) & 36.00\,(+0.00) & 40.00\,(+4.00) \\
Directional relations (16k) & 70.8 & 25/25 & 44.00 & 56.00\,(+12.00) & 48.00\,(+4.00) & 48.00\,(+4.00) \\
Giving events (8k) & 57.3 & 25/25 & 72.00 & 72.00\,(+0.00) & 68.00\,(-4.00) & 72.00\,(+0.00) \\
Giving events (16k) & 58.3 & 25/25 & 76.00 & 64.00\,(-12.00) & 64.00\,(-12.00) & 72.00\,(-4.00) \\
Object location (8k) & 57.3 & 25/25 & 52.00 & 52.00\,(+0.00) & 40.00\,(-12.00) & 44.00\,(-8.00) \\
Object location (16k) & 67.7 & 25/25 & 36.00 & 40.00\,(+4.00) & 32.00\,(-4.00) & 32.00\,(-4.00) \\
Object counting (8k) & 65.6 & 25/25 & 4.00 & 12.00\,(+8.00) & 8.00\,(+4.00) & 12.00\,(+8.00) \\
Object counting (16k) & 80.2 & 25/25 & 8.00 & 12.00\,(+4.00) & 16.00\,(+8.00) & 12.00\,(+4.00) \\
Object sets (8k) & 57.3 & 25/25 & 44.00 & 44.00\,(+0.00) & 52.00\,(+8.00) & 44.00\,(+0.00) \\
Object sets (16k) & 67.7 & 25/25 & 40.00 & 48.00\,(+8.00) & 48.00\,(+8.00) & 48.00\,(+8.00) \\
Negation (8k) & 58.3 & 25/25 & 88.00 & 88.00\,(+0.00) & 84.00\,(-4.00) & 84.00\,(-4.00) \\
Negation (16k) & 76.0 & 25/25 & 72.00 & 64.00\,(-8.00) & 72.00\,(+0.00) & 72.00\,(+0.00) \\
Uncertain knowledge (8k) & 71.9 & 25/25 & 76.00 & 80.00\,(+4.00) & 72.00\,(-4.00) & 72.00\,(-4.00) \\
Uncertain knowledge (16k) & 80.2 & 25/25 & 64.00 & 68.00\,(+4.00) & 56.00\,(-8.00) & 68.00\,(+4.00) \\
One-hop implicit retrieval (8k) & 70.8 & 25/25 & 36.00 & 40.00\,(+4.00) & 24.00\,(-12.00) & 40.00\,(+4.00) \\
One-hop implicit retrieval (16k) & 83.3 & 25/25 & 24.00 & 28.00\,(+4.00) & 36.00\,(+12.00) & 28.00\,(+4.00) \\
Two-hop implicit retrieval (8k) & 72.9 & 25/25 & 4.00 & 0.00\,(-4.00) & 8.00\,(+4.00) & 8.00\,(+4.00) \\
Two-hop implicit retrieval (16k) & 70.8 & 25/25 & 0.00 & 4.00\,(+4.00) & 0.00\,(+0.00) & 4.00\,(+4.00) \\
\end{longtable}
\endgroup

\begingroup
\scriptsize
\setlength{\tabcolsep}{2pt}
\renewcommand{\arraystretch}{1.12}
\begin{longtable}{@{}>{\raggedright\arraybackslash}p{1.55in}rrrrrr@{}}
\caption{Llama-3.1-8B-Instruct: semantic direction on the top-10\% heads. Entries give accuracy in percent and the change from the paired baseline in parentheses (percentage points). Dashes mark settings without a second-stage evaluation; $\dagger$ marks a generation limit that differs from the saved baseline.}\label{tab:direction-llama-semantic-10}\\
\toprule
Setting & $n/N$ & Base & NR & $0.25$ & $0.5$ & $0.75$ \\
\midrule
\endfirsthead
\multicolumn{7}{l}{\textit{Continued from the preceding page.}}\\
\toprule
Setting & $n/N$ & Base & NR & $0.25$ & $0.5$ & $0.75$ \\
\midrule
\endhead
\midrule
\multicolumn{7}{r}{\textit{Continued on the next page.}}\\
\endfoot
\bottomrule
\endlastfoot
QA (easy) & --- & --- & --- & --- & --- & --- \\
Multi-hop QA (up to 16k) & 25/25 & 24.00 & 24.00\,(+0.00) & 8.00\,(-16.00) & 8.00\,(-16.00) & 16.00\,(-8.00) \\
Person location (16k) & --- & --- & --- & --- & --- & --- \\
Previous location (8k) & --- & --- & --- & --- & --- & --- \\
Previous location (16k) & --- & --- & --- & --- & --- & --- \\
Directional relations (8k) & --- & --- & --- & --- & --- & --- \\
Directional relations (16k) & --- & --- & --- & --- & --- & --- \\
Giving events (8k) & 25/25 & 72.00 & 72.00\,(+0.00) & 68.00\,(-4.00) & 72.00\,(+0.00) & 72.00\,(+0.00) \\
Giving events (16k) & 25/25 & 76.00 & 68.00\,(-8.00) & 64.00\,(-12.00) & 68.00\,(-8.00) & 72.00\,(-4.00) \\
Object location (8k) & 25/25 & 52.00 & 20.00\,(-32.00) & 48.00\,(-4.00) & 40.00\,(-12.00) & 44.00\,(-8.00) \\
Object location (16k) & --- & --- & --- & --- & --- & --- \\
Object counting (8k) & --- & --- & --- & --- & --- & --- \\
Object counting (16k) & --- & --- & --- & --- & --- & --- \\
Object sets (8k) & --- & --- & --- & --- & --- & --- \\
Object sets (16k) & --- & --- & --- & --- & --- & --- \\
Negation (8k) & 25/25 & 88.00 & 80.00\,(-8.00) & 76.00\,(-12.00) & 84.00\,(-4.00) & 84.00\,(-4.00) \\
Negation (16k) & 25/25 & 72.00 & 52.00\,(-20.00) & 60.00\,(-12.00) & 64.00\,(-8.00) & 68.00\,(-4.00) \\
Uncertain knowledge (8k) & --- & --- & --- & --- & --- & --- \\
Uncertain knowledge (16k) & --- & --- & --- & --- & --- & --- \\
One-hop implicit retrieval (8k) & --- & --- & --- & --- & --- & --- \\
One-hop implicit retrieval (16k) & --- & --- & --- & --- & --- & --- \\
Two-hop implicit retrieval (8k) & --- & --- & --- & --- & --- & --- \\
Two-hop implicit retrieval (16k) & --- & --- & --- & --- & --- & --- \\
\end{longtable}
\endgroup

\subsection{Llama-3.1-8B-Instruct: Positional Direction}

\begingroup
\scriptsize
\setlength{\tabcolsep}{2pt}
\renewcommand{\arraystretch}{1.12}
\begin{longtable}{@{}>{\raggedright\arraybackslash}p{1.55in}rrrrrr@{}}
\caption{Llama-3.1-8B-Instruct: positional direction on the top-5\% heads. Entries give accuracy in percent and the change from the paired baseline in parentheses (percentage points).}\label{tab:direction-llama-positional-5}\\
\toprule
Setting & $w_S$ (\%) & $n/N$ & Base & $1.5$ & $2$ & $2.5$ \\
\midrule
\endfirsthead
\multicolumn{7}{l}{\textit{Continued from the preceding page.}}\\
\toprule
Setting & $w_S$ (\%) & $n/N$ & Base & $1.5$ & $2$ & $2.5$ \\
\midrule
\endhead
\midrule
\multicolumn{7}{r}{\textit{Continued on the next page.}}\\
\endfoot
\bottomrule
\endlastfoot
MK (basic) & 55.2 & 81/100 & 98.77 & 100.00\,(+1.23) & 97.53\,(-1.23) & 27.16\,(-71.60) \\
MK (easy) & 11.5 & 100/100 & 83.00 & 80.00\,(-3.00) & 5.00\,(-78.00) & 0.00\,(-83.00) \\
MK (medium) & 8.3 & 100/100 & 61.00 & 61.00\,(+0.00) & 24.00\,(-37.00) & 1.00\,(-60.00) \\
MK (hard) & 20.8 & 81/100 & 54.32 & 48.15\,(-6.17) & 1.23\,(-53.09) & 0.00\,(-54.32) \\
MV (basic) & 54.2 & 100/100 & 34.00 & 20.00\,(-14.00) & 7.00\,(-27.00) & 0.00\,(-34.00) \\
MV (easy) & 14.6 & 81/100 & 27.16 & 19.75\,(-7.41) & 0.00\,(-27.16) & 0.00\,(-27.16) \\
MV (medium) & 11.5 & 80/100 & 32.50 & 45.00\,(+12.50) & 3.75\,(-28.75) & 0.00\,(-32.50) \\
MV (hard) & 12.5 & 100/100 & 24.00 & 23.00\,(-1.00) & 3.00\,(-21.00) & 0.00\,(-24.00) \\
QA (basic) & 54.2 & 80/100 & 97.50 & 97.50\,(+0.00) & 45.00\,(-52.50) & 0.00\,(-97.50) \\
QA (medium) & 56.2 & 80/100 & 67.50 & 66.25\,(-1.25) & 36.25\,(-31.25) & 2.50\,(-65.00) \\
QA (hard) & 56.2 & 100/100 & 71.00 & 72.00\,(+1.00) & 35.00\,(-36.00) & 12.00\,(-59.00) \\
Paper QA (up to 16k) & 31.2 & 25/25 & 20.00 & 20.00\,(+0.00) & 20.00\,(+0.00) & 0.00\,(-20.00) \\
Conversation topic retrieval & 17.7 & 150/150 & 74.00 & 80.00\,(+6.00) & 35.33\,(-38.67) & 0.00\,(-74.00) \\
Text sorting (1k) & 10.4 & 25/25 & 0.00 & 0.00\,(+0.00) & 0.00\,(+0.00) & 0.00\,(+0.00) \\
Room-property reasoning (context size 500) & 19.8 & 25/25 & 64.00 & 60.00\,(-4.00) & 44.00\,(-20.00) & 56.00\,(-8.00) \\
Room-property reasoning (context size 3,000) & 56.2 & 25/25 & 44.00 & 44.00\,(+0.00) & 60.00\,(+16.00) & 48.00\,(+4.00) \\
Relation chaining (context size 500) & 21.9 & 25/25 & 52.00 & 52.00\,(+0.00) & 56.00\,(+4.00) & 56.00\,(+4.00) \\
Relation chaining (context size 3,000) & 45.8 & 25/25 & 56.00 & 52.00\,(-4.00) & 48.00\,(-8.00) & 48.00\,(-8.00) \\
Rule deduction (context size 500) & 15.6 & 25/25 & 60.00 & 52.00\,(-8.00) & 48.00\,(-12.00) & 48.00\,(-12.00) \\
Rule deduction (context size 3,000) & 42.7 & 25/25 & 48.00 & 48.00\,(+0.00) & 56.00\,(+8.00) & 40.00\,(-8.00) \\
Mixed long reasoning (8k) & 47.9 & 25/25 & 76.00 & 56.00\,(-20.00) & 56.00\,(-20.00) & 8.00\,(-68.00) \\
Mixed long reasoning (16k) & 39.6 & 25/25 & 64.00 & 68.00\,(+4.00) & 48.00\,(-16.00) & 4.00\,(-60.00) \\
Person location (8k) & 54.2 & 25/25 & 96.00 & 92.00\,(-4.00) & 44.00\,(-52.00) & 0.00\,(-96.00) \\
Adjacent-element retrieval (3k) & 26.0 & 12/12 & 0.00 & 0.00\,(+0.00) & 0.00\,(+0.00) & 0.00\,(+0.00) \\
Adjacent-element retrieval (6k) & 37.5 & 12/12 & 0.00 & 0.00\,(+0.00) & 8.33\,(+8.33) & 0.00\,(+0.00) \\
Adjacent-element retrieval (13k) & 39.6 & 12/12 & 16.67 & 8.33\,(-8.33) & 8.33\,(-8.33) & 0.00\,(-16.67) \\
\end{longtable}
\endgroup

\begingroup
\scriptsize
\setlength{\tabcolsep}{2pt}
\renewcommand{\arraystretch}{1.12}
\begin{longtable}{@{}>{\raggedright\arraybackslash}p{1.55in}rrrrrr@{}}
\caption{Llama-3.1-8B-Instruct: positional direction on the top-10\% heads. Entries give accuracy in percent and the change from the paired baseline in parentheses (percentage points). Dashes mark settings without a second-stage evaluation; $\dagger$ marks a generation limit that differs from the saved baseline.}\label{tab:direction-llama-positional-10}\\
\toprule
Setting & $n/N$ & Base & $1.25$ & $1.5$ & $1.75$ & $2$ \\
\midrule
\endfirsthead
\multicolumn{7}{l}{\textit{Continued from the preceding page.}}\\
\toprule
Setting & $n/N$ & Base & $1.25$ & $1.5$ & $1.75$ & $2$ \\
\midrule
\endhead
\midrule
\multicolumn{7}{r}{\textit{Continued on the next page.}}\\
\endfoot
\bottomrule
\endlastfoot
MK (basic) & --- & --- & --- & --- & --- & --- \\
MK (easy)$^{\dagger}$ & 100/100 & 83.00 & 79.00\,(-4.00) & 71.00\,(-12.00) & 68.00\,(-15.00) & 61.00\,(-22.00) \\
MK (medium)$^{\dagger}$ & 100/100 & 61.00 & 60.00\,(-1.00) & 57.00\,(-4.00) & 47.00\,(-14.00) & 31.00\,(-30.00) \\
MK (hard)$^{\dagger}$ & 81/100 & 54.32 & 44.44\,(-9.88) & 40.74\,(-13.58) & 23.46\,(-30.86) & 11.11\,(-43.21) \\
MV (basic)$^{\dagger}$ & 100/100 & 34.00 & 19.00\,(-15.00) & 5.00\,(-29.00) & 1.00\,(-33.00) & 6.00\,(-28.00) \\
MV (easy) & 81/100 & 27.16 & 29.63\,(+2.47) & 30.86\,(+3.70) & 12.35\,(-14.81) & 0.00\,(-27.16) \\
MV (medium) & --- & --- & --- & --- & --- & --- \\
MV (hard)$^{\dagger}$ & 100/100 & 24.00 & 19.00\,(-5.00) & 32.00\,(+8.00) & 30.00\,(+6.00) & 18.00\,(-6.00) \\
QA (basic)$^{\dagger}$ & 80/100 & 97.50 & 90.00\,(-7.50) & 86.25\,(-11.25) & 66.25\,(-31.25) & 13.75\,(-83.75) \\
QA (medium)$^{\dagger}$ & 80/100 & 67.50 & 41.25\,(-26.25) & 28.75\,(-38.75) & 38.75\,(-28.75) & 23.75\,(-43.75) \\
QA (hard) & --- & --- & --- & --- & --- & --- \\
Paper QA (up to 16k) & 25/25 & 20.00 & 20.00\,(+0.00) & 20.00\,(+0.00) & 20.00\,(+0.00) & 24.00\,(+4.00) \\
Conversation topic retrieval & --- & --- & --- & --- & --- & --- \\
Text sorting (1k) & 25/25 & 0.00 & 0.00\,(+0.00) & 8.00\,(+8.00) & 8.00\,(+8.00) & 16.00\,(+16.00) \\
Room-property reasoning (context size 500) & 25/25 & 64.00 & 52.00\,(-12.00) & 48.00\,(-16.00) & 52.00\,(-12.00) & 48.00\,(-16.00) \\
Room-property reasoning (context size 3,000) & --- & --- & --- & --- & --- & --- \\
Relation chaining (context size 500) & --- & --- & --- & --- & --- & --- \\
Relation chaining (context size 3,000) & 25/25 & 56.00 & 52.00\,(-4.00) & 52.00\,(-4.00) & 52.00\,(-4.00) & 60.00\,(+4.00) \\
Rule deduction (context size 500) & 25/25 & 60.00 & 52.00\,(-8.00) & 52.00\,(-8.00) & 56.00\,(-4.00) & 52.00\,(-8.00) \\
Rule deduction (context size 3,000) & --- & --- & --- & --- & --- & --- \\
Mixed long reasoning (8k) & 25/25 & 76.00 & 56.00\,(-20.00) & 48.00\,(-28.00) & 40.00\,(-36.00) & 36.00\,(-40.00) \\
Mixed long reasoning (16k) & --- & --- & --- & --- & --- & --- \\
Person location (8k) & 25/25 & 96.00 & 92.00\,(-4.00) & 76.00\,(-20.00) & 84.00\,(-12.00) & 32.00\,(-64.00) \\
Adjacent-element retrieval (3k) & 12/12 & 0.00 & 0.00\,(+0.00) & 8.33\,(+8.33) & 25.00\,(+25.00) & 8.33\,(+8.33) \\
Adjacent-element retrieval (6k) & --- & --- & --- & --- & --- & --- \\
Adjacent-element retrieval (13k) & 12/12 & 16.67 & 16.67\,(+0.00) & 16.67\,(+0.00) & 0.00\,(-16.67) & 0.00\,(-16.67) \\
\end{longtable}
\endgroup

\section{Additional Related Work}
\label{app:additional-related-work}

\subsection{Representation Assumptions in RoPE Theory}
\label{app:related-representation-assumptions}
Existing RoPE theories study both random representations and fixed content
vectors. \citet{ICLR2025_e6d58fc6} use independently sampled Gaussian queries
and keys to show that expected scores need not decay with distance; their
paper also provides deterministic constructions and a single-frequency
semantic-instability result.
\citet{NEURIPS2024_9f12dd32} derive base-dependent context bounds using
i.i.d. query and key coordinates with a common variance and a similar-key
model $k^*=q+\epsilon$ with zero-mean noise.
\citet{du2026ropedistinguishespositionstokens} hold query and key
vectors fixed and sample relative positions. Their normal approximation
assumes that no frequency amplitude dominates, and their context-dependent
failure estimates use additional amplitude regularity. For their theoretical
token-inversion estimate, they compare a key with a hypothetical phase-aligned
reference sharing its frequency amplitudes.
Our analysis uses the coefficients of realized query and key activations,
without prescribing a distribution over these activations. The finite-window
moment and adjacent-response bounds allow nonuniform frequency amplitudes.
Our diagnostic estimates reversal for sampled key pairs using their full
score-margin coefficients. The reversal
guarantees retain an explicit Gaussian approximation condition on the score
distribution over positions. This formulation supports diagnostics computed
directly from recorded model activations.

\subsection{RoPE Failure Mechanisms}
\label{app:related-failure-mechanisms}
Existing analyses identify failures in both the positional and semantic
functions of rotary attention. \citet{ICLR2025_e6d58fc6} show how rotating
semantic channels can lose their selectivity over long contexts, with a
formal result for a single frequency, and motivate p-RoPE.
\citet{du2026ropedistinguishespositionstokens} organize score-level failures
into four types. \emph{Position inversion} occurs when the same query--key
pair scores higher at a distant position than at a nearby one, reversing
the locality preference. \emph{Position aliasing} occurs when that pair
receives numerically equal scores at two different relative distances.
For two distinct keys evaluated at the same relative distance,
\emph{token inversion} reverses the ordering established at zero relative
distance, while \emph{token aliasing} makes their scores numerically equal.
The aliasing events explicitly account for finite-precision computation.
These definitions separate loss of positional discrimination from loss of
token discrimination, and distinguish a reversed preference from a tie.

In vision--language models,
\citet{qi2025semanticsrediscoveringspatialawareness} probe positional
sensitivity by adding a one-step RoPE rotation to visual keys while
holding their extracted content vectors and the query fixed. They report
changes in attention weights and in group-averaged score sensitivity.
This study highlights local positional resolution as a practical
concern. In \S\ref{sec:failure-modes}, our semantic reversal follows the
token-inversion event of \citet{du2026ropedistinguishespositionstokens}.
We define positional insensitivity through the absolute score gap between
adjacent positions, normalized by each query--key pair's own coefficient
norm. This criterion measures local
response strength independently of its direction; a preference for nearer
positions and equal scores at separated positions are distinct properties.

\subsection{Context Extension}
\label{app:related-context-extension}

Other work explains the mechanisms behind context-extension failures.
HoPE connects extrapolation failures to learned frequency components whose
attention-score patterns change beyond the training
window~\citep{chen-etal-2025-hope}. Resonance RoPE identifies a feature gap
at unseen integer positions, including in frequency components whose value
ranges are covered during training~\citep{wang-etal-2024-resonance}.
Frayed RoPE relates long-context degradation to dispersion and overlap of
query--key clusters, which weaken the ability of attention-sink tokens to
absorb attention~\citep{ICLR2026_fc684a13}.
These accounts connect frequency behavior and representation geometry to
specific failures, complementing the event-based classification in
Appendix~\ref{app:related-failure-mechanisms}.

Position Interpolation and YaRN adjust RoPE frequencies to extend
context~\citep{chen2023extendingcontextwindowlarge,ICLR2024_874a4d89}.
For position interpolation, \citet{wu2026datashapesropefrequency} show that
its practical effect depends on the task: frequency scaling can help
dependencies that stretch with input length while impairing retrieval at
fixed relative distances and precise local alignment. YaRN also identifies
the difficulty of distinguishing nearby positions after uniform frequency
scaling~\citep{ICLR2024_874a4d89}.

\subsection{Internal Representations and Failure Diagnosis}
\label{app:related-internal-diagnosis}
\citet{NEURIPS2024_1403ab1a} decompose hidden states using
position-wise means across sequences, connect positional-vector
extrapolation to failures, and develop interventions for NoPE models.
Other work diagnoses positional attention bias~\citep{hsieh-etal-2024-found},
retrieval-head behavior~\citep{ICLR2025_9b77f073},
and context-extension mechanisms through head ablations and activation
patching~\citep{pmlr-v267-zhao25ad}.
StreamingLLM links rolling-cache degradation to attention
sinks~\citep{ICLR2024_5e5fd18f}, and contrastive
attribution traces output errors through internal
states~\citep{tan2026contrastiveattributionwildinterpretability}.
Our local score measurements complement these analyses; relating a measured
vulnerability to an output error requires evidence about its downstream effect.

\subsection{Behavioral Evaluation and Attention Competition}
\label{app:related-behavioral-evaluation}
LongBench, RULER, and HELMET measure performance across long-context
tasks~\citep{bai-etal-2024-longbench,hsieh2024ruler,ICLR2025_f5332c82}.
Lost in the Middle and NoLiMa expose sensitivity to information placement
and retrieval beyond literal
matching~\citep{liu-etal-2024-lost,pmlr-v267-modarressi25a}.
Longer inputs motivate analyses of distractor
dilution~\citep{ICLR2026_b619cd6d} and critical attention scaling under
simplified token geometries~\citep{ICLR2026_8b08bbf8}.
Parallel context encoding has also been linked to elevated attention
entropy~\citep{zhang-etal-2025-attention}.
Our pre-softmax diagnostics add two internal measurements alongside task
performance; softmax competition and subsequent layers determine how these
properties influence the final prediction.

\subsection{Long-Context Failure Modes}
\label{app:related-long-context-failure-modes}
Existing benchmarks and studies \citep{hsieh2024ruler,bai-etal-2024-longbench,ICLR2025_f5332c82,NEURIPS2024_babilong, pmlr-v267-modarressi25a, goldman2024really, vodrahalli2024michelangelo, levy-etal-2024-task} identify long-context failures related to retrieval and reasoning, and that these failure modes can be coupled \citep{du2025contextlengthhurtsllm}. Our diagnostics feature two internal measurements and position the model on a preference spectrum with preset tasks anchored as combinations of both modes.

\stopcontents[appendix]
\endgroup

\end{document}